\theoremstyle{plain}
\newtheorem{thm}{Theorem}[section]
\newtheorem{prop}[thm]{Proposition}
\newtheorem{lem}[thm]{Lemma}
\newtheorem{cor}[thm]{Corollary}
\theoremstyle{definition}\newtheorem{defn}[thm]{Definition}
\newtheorem{rmk}[thm]{Remark}
\newtheorem{note}[thm]{Notation}
\numberwithin{equation}{section}
\newcommand{\bpara}[1]{\paragraph{\textbf{#1}}}
\renewcommand{\theta}{\vartheta}
\renewcommand{\phi}{\varphi}
\renewcommand{\epsilon}{\varepsilon}
\renewcommand{\subset}{\subseteq}
\renewcommand{\Im}{\mr{Im}}
\renewcommand{\Re}{\mr{Re}}
\newcommand{\mc}[1]{\mathcal{#1}}
\newcommand{\mf}[1]{\mathfrak{#1}}
\newcommand{\eu}[1]{\EuScript{#1}}
\newcommand{\mr}[1]{\mathrm{#1}}
\newcommand{\mbb}[1]{\mathbb{#1}}
\newcommand{\N}{\mathbb N}
\newcommand{\R}{\mathbb R}
\newcommand{\C}{\mathbb C}
\newcommand{\K}{\mathbb K}
\newcommand{\bfj}{\mathbf j}
\newcommand{\rvT}{T}
\newcommand{\rvW}{W}
\newcommand{\rvX}{X}
\newcommand{\rvZ}{Z}
\newcommand{\mcR}{\mathcal R}
\newcommand{\mcG}{\mathcal G}
\newcommand{\hilb}{\mc{H}}
\DeclareMathOperator*{\argmin}{arg\,min}
\DeclareMathOperator*{\minimi}{minimize}
\DeclareMathOperator*{\supp}{supp}
\DeclareMathOperator*{\id}{id}
\DeclareMathOperator*{\Real}{Re}
\DeclareMathOperator*{\Image}{Im}
\newcommand{\eps}{\epsilon}
\newcommand{\del}[2]{\partial_{#2} #1 }
\newcommand{\ccop}[2]{G_{#1}^{#2}}
\newcommand{\cc}[1]{G_{#1}}
\newcommand{\HP}{\mathbb{H}^+}
\newcommand{\HM}{\mathbb{H}^-}
\newcommand{\ESD}{\mathrm{ESD}}
\newcommand{\SPN}{\mathrm{SPN}}
\newcommand{\CW}{\mathrm{CW}}
\newcommand{\sa}{\mathrm{s.a.}}
\newcommand{\borel}{\mc{B}(\R)} 
\newcommand{\allmoments}{\mc{B}^\infty(\R)} 
\newcommand{\RVallmoments}{\mc{L}^{\infty -}}
\newcommand{\cptprob}{\mc{B}_{c}(\R)} 
\newcommand{\PDF}{\mathrm{PDF}_+} 
\newcommand{\oball}[1]{U(0,#1)} 
\newcommand{\sskip}{\smallskip}
\begin{document}

\newboolean{DVIPDF}
\setboolean{DVIPDF}{false}

\mathtoolsset{showonlyrefs=true}



\title[Cauchy Noise Loss]{
	Cauchy noise loss\\
	for stochastic  optimization of  random matrix models \\
	via free deterministic equivalents
}
\author{Tomohiro Hayase}
\address{Graduate School of Mathematical Sciences, University of Tokyo, 3-8-1 Komaba, Meguro-ku, Tokyo, 153-8914, Japan }
\keywords{Random Matrix Theory, Free Probability Theory, Stochastic Optimization, Rank Estimation, Dimensionality Recovery}
\email{\href{mailto:}{hayase@ms.u-tokyo.ac.jp}}
\date{\today}

\begin{abstract}
For random matrix models, the parameter estimation based on the traditional likelihood functions is not straightforward in particular when we have only one sample matrix.
We introduce a new parameter optimization method for random matrix models which works even in such a case.
The method is based on the spectral distribution instead of the traditional likelihood.
In the method, the Cauchy noise has an essential role because the free deterministic equivalent, which is a tool in free probability theory, allows us to approximate the spectral distribution perturbed by Cauchy noises by a smooth and accessible density function.

Moreover, we study an asymptotic property of determination gap, which has a similar role as generalization gap.
Besides, we propose a new dimensionality recovery method for the signal-plus-noise model, and experimentally demonstrate that it recovers the rank of the signal part even if the true rank is not small.
It is a simultaneous rank selection and parameter estimation procedure.
\end{abstract}


\maketitle
\tableofcontents

\section{Introduction}

\noindent Situations in many fields of research, such as digital
communications and statistics, can
be modeled with random matrices.  
The development of free probability theory (FPT for short) invented by Voiculescu \cite{voiculescu1992free} expands the scope of research of random matrices.
The free probability theory is an invaluable tool for describing the asymptotic behavior of many random matrices when their size is large.
For example, consider  a fixed multivariate polynomial $P$, independent random matrices $Z_1, \dots, Z_n$, and the following;
\begin{enumerate}
	\item deterministic matrices $A_1, \dots, A_m$,
	\item the empirical spectral distribution of $P(A_1, \dots, A_m, Z_1, \dots, Z_n)$.
\end{enumerate}
Then  FPT answers how to infer (2) from (1) for a wide class of  polynomials and random matrices.
However, little is known about its opposite direction; that is, how to know (1) from (2).
This direction is regarded as a statistical problem of how to estimate parameters of a random matrix model from observed empirical spectral distributions (ESD, for short).
Now, estimating parameters of a system involving random matrices appears in several fields of engineering such as signal processing and machine learning. 
Therefore, we are interested in finding a common framework to treat several random matrix models using their algebraic structures.

\sskip

\bpara{Likelihood or Spectral Distribution}
The maximal likelihood estimation, equivalently the minimizing empirical cross-entropy, is available in the case there are a large number of i.i.d.\,samples.
For random matrix models, the parameter estimation based on the traditional likelihood is not straightforward in particular when we have only one sample matrix.
For example, row vectors or column ones are not i.i.d\,family, then it is not clear that maximal likelihood estimation is applicable.
We introduce a new parameter optimization method of random matrix models which works even in such a case not based on the traditional likelihood, instead of based on the spectral distribution. 
The so-called self-averaging property, which is an almost-sure convergence of ESD of random matrices, is a key to our method. 
In order to find a reasonable objective function to estimate parameters of random matrices, we focus on the fact that the ESD of a random matrix model is approximated by a deterministic measure such that its Cauchy transform is accessible; it is a fixed point of a holomorphic mapping. We choose the deterministic measure by replacing the random matrix model with its \emph{free deterministic equivalent} (FDE for short, see Definition~\ref{defn:FDE}).

Based on the FDE, we introduce an objective function, which is an empirical cross-entropy defined as the following;
\begin{align}\label{align:intro-cce}
\theta \mapsto  \mbb{E}_{ \lambda \sim \nu, \ T \sim \mr{Cauchy}(0,\gamma)}\left[ -\log\left[ - \frac{1}{\pi}\Im G_{\mu^\Box(\theta)}( \lambda + T + i\gamma) \right] \right],
\end{align}
where $\nu$ is the spectral distribution of an observed sample matrix. 
In \eqref{align:intro-cce}, the random variables $\lambda$ and $T$ are  independent,  $\lambda$ is distributed with the spectral distribution $\nu$, and $T$ is distributed with the Cauchy distribution of scale $\gamma >0$.
In addition, the probability measure $\mu^\Box(\theta)$ is the deterministic one which approximates the ESD of the random matrix $W_\theta$, and $G_{\mu^\Box(\theta)}(z)$ $(z \in \C \setminus \R)$ is its  Cauchy transform. 
Note that the $\gamma$-slice $-\pi^{-1}\Im G(\cdot + i\gamma)$ of the Cauchy transform is a strictly positive density function on $\R$.

We choose this objective function because of the following reasons.
The first  one  is, as mentioned above that the Cauchy transforms of the ESD becomes accessible by using iterative methods
if we replace the random matrix model by its FDE.
More precisely, we choose a family of deterministic probability measures $\mu^\Box(\theta)$  $( \theta \in \Theta)$,  which approximates $\ESD( W_\theta)$,
and moreover we can compute $G_{\mu^\Box(\theta)}$ by iterative methods.
Note that the convergence of the iterative methods is rigorously proven (see Section~\ref{ssection:iterative}).
Besides, the gradient of each $\gamma$-slice is computable (see Section~\ref{ssection:gradient}) by the chain rule and the implicit differentiation.
The last reason is that $\gamma$-slice has enough information to
distinguish original measures (see Lemma~\ref{lem:key_lemma_cauchy}).

\sskip
\bpara{Compound Wishart and Signal-plus-Noise Models}
Compound Wishart matrices and signal-plus-noise matrices are typical classes of random matrices.
In this paper, we apply our methods to their families; compound Wishart models (CW model, for short) and signal-plus-noise models (SPN model).
Compound  Wishart matrices are introduced by Speicher \cite{speicher1998combinatorial}, which also appear as sample covariance matrices of correlated samplings.
Their modifications appear in the analysis of some statistical models  \cite{couillet2011deterministic, collins2013compound, hasegawa2013random, hayase2017free}.
See \cite{hiai2006semicircle}  for more detail.
The SPN model appears in the signal precessing  \cite{ryan2007free, hachem2012large, vallet2012improved}. 
The SPN model is also closely related with the probabilistic principal component analysis (see \cite{tipping1999probabilistic}), the matrix completion, the low-rank approximation, the reduced rank singular value decomposition, and the dimensionality recovery  \cite{nakajima2011theoretical, nakajima2013global, nakajima2015condition}. 

\sskip
\bpara{Dimensionality Recovery}
Consider a rectangular random matrix model 
\begin{align}\label{align:ipn-origin}
Y = A + \sigma Z,
\end{align}
where $Z$ is a $p \times d$ Ginibre matrix, $A \in M_{p,d}(\C)$, and $\sigma \in \R$.
The parameter $A$ expresses the signal part of the $Y$, and $\sigma$ does the noise power.
Its  likelihood  is  given by 
\begin{align}\label{align:likelihood-origin}
p(Y | A, \sigma) &=\frac{1}{(2\pi\sigma^2/d)^{pd/2}}\exp{-\frac{d}{2\sigma^{2}}\Tr[(Y- A)^*(Y-A)]}.
\end{align}
Hence if $\sigma$ is fixed, for a sample matrix $D$, its maximal likelihood estimation is equivalent to the traditional trace norm minimization; 
\begin{align}\label{align:minimize-frob}
\minimi_{ A \in \Theta} \Tr[( D - A)^*( D-A)],
\end{align}
where $\Theta$ is a parameter space which is a subset of $M_{p,d}(\C)$.
For a  fixed $q \in \N$ and $\Theta = \{ A \mid \rank A \leq q\}$,
its closed-form solution is given by a well-known truncated singular value decomposition of $D$ (note that the field $\C$ can be replaced with $\R$), that is, given by replacing smaller $\min(p,d)-q$ singular values of $D$ with $0$.
Now, if the assumption  $\rank A \leq  q$ is removed, the solution is trivial; $A$ is estimated as the observed sample matrix itself.    
After all, for the low-rank approximation, we need to know the rank of the true parameter $A$ by another method beforehand if we use the likelihood function \eqref{align:likelihood-origin}.

Instead of the likelihood \eqref{align:likelihood-origin},  we apply our parameter estimation based on \eqref{align:intro-cce} to the low-rank approximation without the assumption on the true rank.
Here we focus on the empirical singular values of the large dimensional $Y$, equivalently, the empirical spectral distribution of  the signal-plus noise model defined as 
\[
W_\SPN(A,\sigma) := (A + \sigma Z)^*(A + \sigma Z).
\]
We emphasize that we estimate not only the signal part $A$ but also the noise power $\sigma$ from each single-shot sample matrix
by our new method.
\sskip

\bpara{Free Deterministic Equivalents}
Our work relies on the \emph{free deterministic equivalent} (FDE for short) introduced by  Speicher and Vargas \cite{speicher2012free, calros2015free}.
Roughly speaking,  we can interpret independent random matrices as deterministic matrices of operators in an infinite-dimensional $C^*$-probability space.
The origin of FDE can be found in Neu-Speicher~ \cite{neu1995rigorous} as a mean-field approximation of an Anderson model in statistical physics.
One can also consider FDE as one of the approximation methods to eliminate randomness for computing the expectation.
Particularly, FDE is a  \enquote{lift} of  the \emph{deterministic equivalent} introduced by \cite{hachem2007deterministic}.
More precisely, FDE is an approximation of a random matrix model at the level of operators, and on the other hand, the deterministic equivalent is that at the level of Cauchy transforms.
Now the deterministic equivalent is known as an approximation method
of Cauchy transforms of random matrices in several works of literature of wireless-network (see \cite{hachem2007deterministic, couillet2011deterministic}.
Despite its rich background in FPT, the algorithm of FDE is not complicated.
Roughly speaking, its primary step is to replace each Gaussian random variable in entries of a random matrix model by an \enquote{infinite size} Ginibre matrix, which is called a circular element in FPT.

As mentioned above, the Cauchy transform is accessible; which is given by the two iterative methods based on Helton-Far-Speicher \cite{helton2007operator} and Belinschi-Mai-Speicher \cite{belinschi2013analytic}. 
Note that analytical computations of Cauchy transforms are unknown for many random matrices.

\sskip 
\bpara{Our Contribution}

Here we summarize our contributions.

Our major contribution is to introduce a common framework for the parameter optimization of random matrix models,  which is a combination of the Cauchy noise loss, FDE, iterative methods for computing Cauchy transforms, and a stochastic gradient descent method.

The second one is to give a brief, and general computing method of gradients of Cauchy transforms of FDE, in particular, give a norm estimation of derivations of implicit functions, which appear in the iterative method for computing Cauchy transforms.

The third one is to show the asymptotic properties of the gap between the Cauchy cross-entropy and the empirical one.

The fourth one is to show optimizations of the CW model and the SPN model via the Cauchy noise loss experimentally.

The last one is to propose a new dimensionality recovery method for the signal-plus-noise model, and experimentally demonstrate that it recovers the rank of the signal part even if the true rank is not small.
It is a simultaneous rank selection and parameter estimation procedure.

\section{Related Work}

\noindent There are several applications of deterministic equivalents and FDE  to the analysis of multi-input multi-output channels \cite{couillet2011deterministic, speicher2012free}.

Ryan \cite{ryan2007free}  applied the \emph{free deconvolution} to SPN models. Their method is based on evaluating the difference of moments, \emph{the mean square error of moments}.
Since it uses an only finite number of lower-order moments, the error has subtotal information of the empirical distribution.  On the contrary, our method uses full information of the empirical distribution.

There are applications of the fluctuation of FDE  to some autoregressive moving-average models \cite{hasegawa2017fluctuations, hayase2017free}.
Their methods are based on the fluctuation of the CW model, and focus on the good-of-fit test of the parameter estimation, not for the parameter estimation itself.

\sskip
Another direction to the low-rank approximation is the Bayesian matrix factorization.
The matrix factorization model is defined as the following;  fix $p'\in  \N$ with $p' \leq p,d$ and factorize $A = A_1 A_2$ with $A_1 \in M_{p,p'}(\C), A_2 \in M_{p', d}(\C)$;
\[
Y = A_1A_2 + \sigma Z.
\]
In addition, use the likelihood  \eqref{align:likelihood-origin} and the Gaussian prior on $A_1$, $A_2$. The parameter $A$ is  estimated as the integration of $A_1A_2$ with the posterior distribution.  The hyperparamers of the prior distributions are determined by minimizing the Bayesian free energy, which is called empirical Bayesian matrix factorization.
See \cite{nakajima2011theoretical} for the theoretical analysis.
Tipping-Bishop \cite{tipping1999probabilistic} treats the case $A_2$ is known.

The variational Bayesian method (see \cite{christopher2016pattern}), which approximates posterior distributions, is also called a mean-field approximation in the Bayesian framework.
Recall that an origin of FDE  is a  mean-field approximation, but it is a  deterministic approximation of the empirical spectral distribution, which is different from the variational Bayesian method.

Nakajima-Sugiyama-Babacan-Tomioka \cite{nakajima2013global} and \cite{nakajima2015condition} gave the global analytic optimal solution of the empirical variational Bayesian matrix factorization (EVBMF, for short), and used it to a dimensionality recovery problem. 
Note that EVBMF almost surely recovers the true rank in the large scale limit under some assumptions \cite[Theorem~13, Theorem~15]{nakajima2015condition}, in particular, if the true rank is low.
Their loss function is based on the likelihood.
Recall that we use another loss function not based on the likelihood \eqref{align:likelihood-origin}. Note that in our method, we need no assumption on the true rank.

\section{Random Matrix Models}

\noindent In this section, we introduce random matrix models and our main idea.
\sskip
\bpara{Basic Notation}
In this paper, we fix a probability space $(\Omega, \mf{F}, \mbb{P})$. A  random variable (resp.\,real random variable) $\rvX$ is  a $\C$-valued (resp.\,$\R$-valued) Borel measurable function on the probability space.

\begin{enumerate}
	\item $\mbb{E}[\rvX] := \int \rvX(\omega) \mbb{P}(d\omega)$ for any integrable or nonnegative real random variable $\rvX$.
	\item $\mbb{V}[\rvX] := \mbb{E} [ \rvX^2 ] - \mbb{E}[\rvX ]^2$   for a square-integrable real random variable $\rvX$.
	
	\item $C(\R):= \{f \colon \R\to \R \mid \text{continuous}\}$.
	
	\item $L^1(\R):= \{f \colon \R\to \R \mid \text{Borel \ mesuarable and  Lebesgue intergrable}\}$.
	
\end{enumerate}

\subsection{Gaussian Random Matrix models}
\hfill

\begin{note}
	Let $\K$ be $\R$ or $\C$, and $p, d \in \N$.
	Let us denote by $M_{p, d}(\K)$  the set of $p \times d$ rectangular matrices over $\K$.
	We write  $M_{d}(\K):= M_{d,d}(\K)$.
	A random matrix is a map $\Omega \to M_{p,d}(\C)$ for a $p, d \in \N$ such that each entry is Borel measurable.
\end{note}

\begin{defn}
	\hfill
	\begin{enumerate}
		\item A \emph{real Ginibre matrix} of size $p \times d$ is the $p \times d$ matrix whose entries are independent and identically distributed with $\mr{Normal}(0,v)$ for a $v>0$.
		
		We denote by $\mr{GM}(p,d,\R)$ the set of real Ginibre matrices with $v = 1/d$.
		
		\item A \emph{complex Ginibre matrix}
		of size $p \times d$ is the $p \times d$ matrix whose entries are given by $(1/\sqrt{2}) (f_{k\ell} + g_{k\ell}\sqrt{-1})$, where the family $\bigcup_{k=1, \dots, p, \ell = 1, \dots, d}\{f_{k\ell}, g_{k\ell}\}$ is independent and each element is distributed with $\mr{Normal}(0, v)$ for $v>0$.
		
		We denote by $\mr{GM}(p,d,\C)$ the set of complex Ginibre matrices with $v = 1/d$.
	\end{enumerate}
	
\end{defn}

\begin{note}
	We write $\RVallmoments:= \{ \rvX : \Omega \to \C \mid \text{Borel measurable, $\mbb{E}[ |\rvX|^k] < \infty, \ k \in \N$} \}$.
\end{note}

Note that $\mr{GM}(p,d,\K) \subset M_{p,d}(\RVallmoments)$.

\begin{defn}
	Let  $P(x_1, x_2, \dots, x_{m+n}) := P(x_1, x_2, \dots, x_{m+n}, x_1^*, x_2^*, \dots, x_{m+n}^*) $ be a self-adjoint polynomial (that is, it is stable under replacing $x_j$ by $x_j^*$) in non-commutative  dummy variables $x_1, x_2, \dots, x_{m+n}$ and their adjoint $x_1^*, \dots, x_{m+n}^*$.
	Let   $\mf{I} := \{  (r_k,\ell_k)_{ k=1, \dots, m} , (p_k,d_k)_{ k=1, \dots, n} \}$ be a family of pairs of natural numbers and $P_\mf{I}$ be   corresponding  evaluation of $P$ defined as  
	\[
	P_\mf{I} \colon \prod_{k=1}^m M_{r_k, \ell_k}(\RVallmoments) \times  \prod_{k=1}^{n}M_{p_k, d_k}(\C) \to M_{p,d}(\RVallmoments),
	\]
	where products and sums satisfy dimension compatibility.
	
	Then the \emph{real (resp.\,complex) polynomial Ginibre matrix model} (\emph{PGM model}, for short)  of type $(P,\mf{I})$ on a subset $\Theta \subset \prod_{k=1}^n M_{p_k, d_k}(\K)$ with $\K=\R$ (resp.\,$\K=\C$) is the restriction of the following map $\bar{P}_\mf{I}$  to $\Theta$;
	\[
	\bar{P}_\mf{I} \colon \prod_{k=1}^n M_{p_k, d_k}(\K) \to M_{p,d}(\RVallmoments), \ 
	\bar{P}_\mf{I} ( D_1, \dots, D_n) := P_\mf{I}(\rvZ_1, \dots, \rvZ_m, D_1, \dots, D_n),
	\]
	where the family of  $Z_j \in \mr{GM}(r_k, \ell_k, \K)$ ( $j=1, \dots, m$) is independent.
	
\end{defn}

We introduce examples of Ginibre matrix models which are in the scope of our numerical experiments.

\begin{defn}\label{defn_original_model}\hfill
	\begin{enumerate}
		
		\item  A \emph{compound Wishart  model (\emph{CW} model for short)  of type $(p, d)$ on $\Theta \subset M_p(\C)_\sa$}, denoted by $W_\CW$, is the  PGM model of type $(P, ((p,d), (d,d))$ on the subset $\Theta$, where
		$P(x, a) = x^*ax$.
		Note that
		\[
		\rvW_\CW(d, A) = \bar{P}_\mf{J}(A) = \rvZ^* A \rvZ.
		\]

		\item A \emph{signal-plus-noise model} (\emph{SPN} model for short) of type $(p,d)$ on a subset  $\Theta \subset M_{p,d}(\C) \times \R$, denoted by $\rvW_\SPN$,  is the PGM model of type $\left(P, \left(\left(p,d\right), \left(p,d\right), \left(1,1\right)\right) \right)$ on $\Theta$, where $P$ is a polynomial of dummy variables $x,a,\sigma$ given by
		$
		P (x, a, \sigma) = (x + \sigma a)^*(x + \sigma a).
		$
		Note that
		\[
		\rvW_\SPN(A, \sigma) = \bar{P}_\mf{J}(A,\sigma) = (A+ \sigma \rvZ)^*(A+ \sigma \rvZ),
		\]
	\end{enumerate}
\end{defn}

Our estimation method is based on the spectral distribution of random matrices.

\begin{defn}(Spectral distribution and Moments)
	
	\begin{enumerate}
		\item For any self-adjoint matrix $A \in M_d(\C)$, let $ \lambda_1\leq \lambda_2 \leq \dots \lambda_d$  be the eigenvalues of $A$. The \emph{spectral distribution of $A$}, denoted by $\mu_A$, is defined as the discrete measure 
		\[
		\mu_A = \frac{1}{d} \sum_{k=1}^d \delta_{\lambda_k}.
		\]
		
		\item For any self-adjoint random matrix $W: \Omega \to  M_d(\C)_\sa$, we write 
		\[
		\ESD(W)(\omega) := \mu_{W(\omega)}, \ \omega \in \Omega.
		\]
		
		\item
		Write $\borel := \{ \text{Borel probability measures on $\R$}\}$ and $\allmoments:= \{ \mu \in \borel \mid \int \abs{x}^k\mu(dx) < \infty \text{ for any } k \in \N\}$.

		For any $\mu \in \borel$,  we denote  the $k$-th moment of $\mu$ for $k \in \N$ by
		$m_k(\mu):= \int x^k \mu(dx)$.
		For any random variable $\rvX$ whose law is $\mu$, we define its moment by  $m_k(\rvX) := m_k(\mu)$.
		All moments of a probability measure (resp. a random variable ) are well-defined if $\mu \in \allmoments$ (resp.\,$\rvX \in \RVallmoments$).
		
		\item     $\Tr(A) := \sum_{k=1}^{d}A_{k,k}$ for $A \in M_d(\K)$, and $\tr := d^{-1}\Tr$.
		
		\item We  define the moment of $A$ as  $m_k(A) = \tr(A^k)$. 
	\end{enumerate}
	
\end{defn}

Note that $m_k(A) = m_k(\mu_A)$ for any $A \in M_d(\C)$ and $d \in \N$.

\subsection{From Cauchy Transform to  Cauchy Noise Loss}\label{subsection_Cauchy}\hfill

\noindent 
We use the Cauchy transform of ESD to define our loss function for parameter estimation of random matrix models.
This is mainly because the Cauchy transform of ESD is accessible for specific random matrix models.
More precisely, the Cauchy transform is approximated by the unique solution of a fixed point formula. Note that the fixed point formula depends on random matrix models.
We discuss the fixed point formula in Section~\ref{ssection:iterative}.
Besides, the Cauchy transform is closely rated with the Cauchy noise.

\begin{defn}\hfill
	\begin{enumerate}
		\item
		\emph{The Cauchy transform of  $\mu \in \borel$} is the holomorphic function $\cc{\mu}$ on $\C\setminus\R$ defined as
		\[
		\cc{\mu}(z) : =  \int \frac{1}{z -t }\mu(dt).
		\]
		
		\item The \emph{Cauchy distribution}  with scale parameter $\gamma > 0 $  is the probability measure over $\R$ whose density function is given by the following \emph{Poisson kernel};
		\[
		P_\gamma(x) := \frac{1}{\pi}\frac{\gamma}{x^2 + \gamma^2}, \ x \in \R.
		\]
		We call  a random variable $\rvT$  a \emph{Cauchy noise of scale $\gamma$}, denoted by $ \rvT \sim \mr{Cauchy}(0,\gamma)$, if its  density function  is equal to $P_\gamma$.
		
	\end{enumerate}
	
\end{defn}
\smallskip

The following is a key lemma of our algorithm.

\begin{defn}
	For  $f \in L^1(\R)$ and $\mu \in \borel$, we define their convolution $f*\mu \in L^1(\R)$ as $f*\mu(x) := \int f(x-t)\mu(dt)$, $x \in \R$.
\end{defn}

\begin{lem}\label{lem:key_lemma_cauchy}
	
	Let $\mu, \nu \in \borel$.    Fix $\gamma >0$. Then the following conditions are equivalent.
	\begin{enumerate}
		\item[(1)] For any $x \in \R$, 
		\begin{align}\label{align:key-formula}
		- \frac{1}{\pi}\Im G_\mu(x + i\gamma) = P_\gamma * \nu(x).
		\end{align}
		\item[(1')] $P_\gamma*\mu = P_\gamma*\nu$.
		\item[(2)] $\mu = \nu$.            
	\end{enumerate}
\end{lem}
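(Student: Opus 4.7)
The plan is to observe that the quantity in (1) is literally $P_\gamma*\mu$, reducing (1)$\Leftrightarrow$(1') to an algebraic identity, and then to prove (1')$\Leftrightarrow$(2) by Fourier inversion.

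First I would compute the imaginary part of the Cauchy transform at height $\gamma$ directly from the definition. For $\mu \in \borel$,
\begin{align}
G_\mu(x+i\gamma) = \int \frac{1}{(x-t)+i\gamma}\,\mu(dt) = \int \frac{(x-t)-i\gamma}{(x-t)^2+\gamma^2}\,\mu(dt),
\end{align}
so $-\frac{1}{\pi}\Im G_\mu(x+i\gamma) = \int P_\gamma(x-t)\,\mu(dt) = P_\gamma * \mu(x)$. This identity rewrites (1) as $P_\gamma*\mu = P_\gamma*\nu$, which is precisely (1'), giving the equivalence (1)$\Leftrightarrow$(1') for free.

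Next, (2)$\Rightarrow$(1') is immediate from the definition of convolution. The only real content is (1')$\Rightarrow$(2). I would prove this via Fourier analysis. The Fourier transform (characteristic function) of the Cauchy distribution is $\widehat{P_\gamma}(\xi) = e^{-\gamma|\xi|}$, which is strictly positive, and the Fourier transform turns convolution into multiplication. Thus $P_\gamma*\mu = P_\gamma*\nu$ implies $e^{-\gamma|\xi|}\widehat{\mu}(\xi) = e^{-\gamma|\xi|}\widehat{\nu}(\xi)$ for every $\xi \in \R$; dividing by $e^{-\gamma|\xi|}$ gives $\widehat{\mu}=\widehat{\nu}$, and by uniqueness of the characteristic function $\mu=\nu$.

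The main obstacle is essentially non-existent: the only point requiring care is justifying the interchange of integration and taking the imaginary part in the first computation, which is standard from $|((x-t)+i\gamma)^{-1}|\le \gamma^{-1}$ and dominated convergence (or simply Fubini, since $\mu$ is a probability measure). As an alternative to Fourier analysis, one could invoke the classical fact that the Poisson integral $P_\gamma*\mu$ of a finite Borel measure $\mu$ converges to $\mu$ in the weak-$*$ sense as $\gamma\downarrow 0$, so $P_\gamma*\mu$ determines $\mu$; but the Fourier argument is cleaner and works at any fixed $\gamma>0$.
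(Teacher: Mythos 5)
Your proof is correct and follows essentially the same route as the paper: identify $-\pi^{-1}\Im G_\mu(\cdot+i\gamma)$ with the Poisson convolution $P_\gamma*\mu$ to get (1)$\Leftrightarrow$(1'), then use the strictly positive Fourier transform $e^{-\gamma\abs{\xi}}$ of the Cauchy kernel together with injectivity of the Fourier transform to conclude (1')$\Rightarrow$(2). The only cosmetic difference is that you spell out the elementary computation of $\Im G_\mu$ rather than citing it as well known.
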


\begin{proof}
	The equivalence of (1) and (1') follows from the well-known fact that
	\begin{align}
	- \frac{1}{\pi}\Im G_\mu(x + i\gamma) = P_\gamma * \mu(x).
	\end{align}
	
	Then we only need to show that (1') induces (2). 
	For any $\mu \in \borel$, let us denote its Fourier transform by $\mathcal{F}(\mu)(\xi) := \int \exp(ix\xi)\mu(dx)$.
	Similarly, we define the Fourier transform of any probability density function.
	Fix $\gamma >0$. Assume that $P_\gamma*\mu = P_\gamma*\nu$.
	Since the Fourier transformation linearize the convolution, we have  $\mathcal{F}(P_\gamma)\mathcal{F}(\mu) = \mathcal{F}(P_\gamma)\mathcal{F}(\nu)$.
	Because  $\mathcal{F}(P_\gamma)(\xi) =  \exp(-\gamma \abs{\xi}) > 0 \ (\xi \in\R)$, we have $\mathcal{F}(\mu) = \mathcal{F}(\nu)$. Since the Fourier transformation is injective, we have
	$\mu = \nu$.
\end{proof}

Let  $(W_\theta)_{\theta \in \Theta}$ be a family of a self-adjoint random matrix.
Then, for random matrix models such as CW and SPN, there is a family of deterministic measures $(\mu^\Box(\theta))_{\theta \in \Theta}$ which approximates $\ESD(W_\theta)$ (the proof is postponed to Section~\ref{ssec:fde}).
We estimate parameters by comparing a deterministic measure and a random one instead of comparing two random measures;
recall that there is only one single-shot observation.

Fix $\theta_0$, pick $\omega \in \Omega$ and let $\nu = \ESD(W_{\theta_0})(\omega)$.
Then the right-hand side of \eqref{align:key-formula} is equal to  the density  of the real random variable $\lambda+ T$, where $\lambda$ is a random variable distributed with 
the empirical spectral distribution, $T$ is a Cauchy noise of scale $\gamma$, and the pair is independent.
On the other hand,  each Cauchy transform $G_{\mu_\theta^\Box}$ approximates that of  $\ESD(W_\theta)$, and it is accessible; it is given by the solution of a fixed point formula and computed by an iterative method.
Then Lemma~\ref{lem:key_lemma_cauchy} suggests a possibility of the parameter estimation by fitting parametric implicit density functions to an empirical distribution perturbed by Cauchy noises.

\begin{defn}\label{defn:gamma-slice}
	Let $\mu \in \borel$.
	For $\gamma >0$, we call the strictly positive function $ x \in \R \mapsto - \pi^{-1}\Image G_\mu(x+i\gamma) \in \R$ the \emph{$\gamma$-slice} of $\mu$. If there is no confusion, we also call it the $\gamma$-slice of the Cauchy transform $G_\mu$.
\end{defn}

Now, in many kinds of research of statistics, the cross-entropy is used for fitting a density function to a reference distribution.
To achieve optimization of random matrix models,  we consider the following Cauchy cross-entropy.

\begin{defn}(Cauchy cross-entropy)\label{defn:cce}
	Let $\mu \in \cptprob$ and $\nu \in \borel$ with $m_2(\nu) < \infty$.
	Then the \emph{Cauchy cross-entropy} of $\mu$ against $\nu$ is defined as 
	\begin{align}\label{align:direct-rep-cce}
	H_\gamma( \nu, \mu) : = \int \ell_\gamma(x,\mu) P_\gamma*\nu(x)dx,
	\end{align}    
	where 
	\begin{align}
	\ell_\gamma(x,\mu) :=-\log \left[  - \frac{1}{\pi} \Image \cc{\mu}(x + i\gamma) \right], \ x \in \R.
	\end{align}
	
\end{defn}

\begin{rmk}
	We prove that the Cauchy cross-entropy is well-defined and finite in Section~\ref{sssec:basic-properties}.
	Note that 
	\begin{align}\label{align:symmetric-rep-cce}
	H_\gamma( \nu, \mu)  = H(P_\gamma*\nu, P_\gamma*\mu), 
	\end{align}
	where $H(q,p) := \int  -q(x)\log p(x) dx$ is the cross-entropy of a probability density function $p$ against $q$. 
	We use both representations \eqref{align:direct-rep-cce} and \eqref{align:symmetric-rep-cce} of the Cauchy  cross-entropy in the later sections.
	
	Recall that the cross-entropy is possibly ill-defined, in particular, if $p$ and $q$ have disjoint compact supports.
	We emphasize that the ESD of random matrices are approximated by compactly supported probability measures, which can have singular parts. Hence it is difficult for the usual cross-entropy $H(\cdot, \cdot)$ to treat ESD.
	However, the Cauchy cross-entropy  $H_\gamma(\cdot,\cdot)$ is well-defined even if the measures have compact supports.
	
\end{rmk}

We have \emph{the principal of minimum Cauchy cross-entropy} as follows.
\begin{prop}\label{prop:principal-minimum-cc}
	Fix $\gamma > 0$.
	For any $\nu \in \mathcal{B}_c(\R)$, it holds that
	\[  \argmin_{\mu \in \mathcal{B}_c(\R)}  H_\gamma(\nu , \mu) =\{\nu \}.
	\]
\end{prop}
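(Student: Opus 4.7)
The plan is to reduce the statement to the classical Gibbs inequality applied to the strictly positive, smooth densities $P_\gamma*\nu$ and $P_\gamma*\mu$. By the representation \eqref{align:symmetric-rep-cce} one has $H_\gamma(\nu,\mu) = H(P_\gamma*\nu, P_\gamma*\mu)$, and both $q := P_\gamma*\nu$ and $p := P_\gamma*\mu$ are strictly positive probability densities on $\R$ since the Poisson kernel $P_\gamma$ is strictly positive. The difference $H_\gamma(\nu,\mu) - H_\gamma(\nu,\nu)$ should then be recognised as the Kullback--Leibler divergence of $q$ with respect to $p$, which is non-negative and vanishes iff $q = p$ Lebesgue-a.e.

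First I would verify that the reference value $H_\gamma(\nu,\nu)$ is finite, so that subtraction is legitimate. Since $\supp \nu \subset [-R,R]$ for some $R>0$, the density $q$ is continuous, bounded above by $1/(\pi\gamma)$, and satisfies $q(x) \asymp 1/x^2$ as $|x|\to \infty$; hence $|\log q(x)|$ grows at most like $2\log |x|$ on the tails and $\int q \lvert \log q\rvert\, dx < \infty$. An analogous tail lower bound $p(x) \geq c_\mu/(x^2 + C_\mu)$ depending on $\supp \mu$ shows that $\int q(-\log p)\,dx$ is a well-defined Lebesgue integral with values in $(-\infty, +\infty]$. If this integral is $+\infty$, the strict inequality $H_\gamma(\nu,\mu) > H_\gamma(\nu,\nu)$ is automatic; otherwise one falls into the standard KL setup.

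Next I would apply Gibbs' inequality (equivalently, Jensen's inequality for the convex function $-\log$, or the elementary bound $\log t \leq t-1$ with equality only at $t=1$) to obtain
\begin{align}
H_\gamma(\nu,\mu) - H_\gamma(\nu,\nu) \;=\; \int q(x)\log\frac{q(x)}{p(x)}\,dx \;\geq\; 0,
\end{align}
with equality iff $q(x) = p(x)$ for Lebesgue-a.e.\ $x$, i.e.\ $P_\gamma*\nu = P_\gamma*\mu$. Invoking the implication $(1') \Rightarrow (2)$ of Lemma~\ref{lem:key_lemma_cauchy} then forces $\mu = \nu$, identifying the unique minimiser.

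The main obstacle I anticipate is the careful tail analysis required to make the subtraction $H_\gamma(\nu,\mu) - H_\gamma(\nu,\nu)$ unambiguous: the densities $p$ and $q$ have Cauchy-type quadratic decay rather than compact support or uniform positive lower bounds, so one must be attentive in checking that $\int q \log p\, dx$ and $\int q \log q\, dx$ are well-defined (possibly $\pm\infty$) Lebesgue integrals before invoking the KL inequality. Once these tail estimates are in place, the remainder is a direct application of a standard information-theoretic argument combined with the injectivity statement of Lemma~\ref{lem:key_lemma_cauchy}.
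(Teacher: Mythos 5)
Your proposal is correct and follows essentially the same route as the paper: reduce to the classical cross-entropy/Gibbs inequality for the strictly positive densities $P_\gamma*\nu$ and $P_\gamma*\mu$ (the paper's Proposition~\ref{prop_PMC}), then invoke the injectivity statement of Lemma~\ref{lem:key_lemma_cauchy} to conclude $\mu = \nu$. The only difference is that you manually re-derive the tail estimates and entertain the possibility that $H_\gamma(\nu,\mu)=+\infty$, whereas the paper has already established in Section~\ref{sssec:basic-properties} (via Lemma~\ref{lemma:cne-well-defined} and the subsequent lemma bounding $\log(P_\gamma*\mu)$ by $\abs{\log P_\gamma}+C_{M,\gamma}$) that $H_\gamma(\nu,\mu)$ is finite for all $\mu\in\cptprob$ and $\nu$ with $m_2(\nu)<\infty$, so those precautions are unnecessary in this setting.
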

The proof is postponed to Section \ref{sssec:basic-properties} after proving Lemma~\ref{lemma:cne-well-defined}.

According to the principal of minimal Cauchy cross-entropy, we consider the following minimizing problem  for a fixed $\nu$:
\begin{align}
\minimi_{ \theta \in \Theta} H_\gamma(\nu, \mu_\theta^\Box),
\end{align}
where $(\mu_\theta^\Box)_{\theta \in \Theta}$ is the family of deterministic probability measures  mentioned above, which approximate $(\ESD(W_\theta))_{\theta \in \Theta}$.
In our setting, we assume that $\nu$ is a single-shot observation of empirical distribution for an unknown parameter $\theta_0 \in \Theta$; $\nu = \ESD(W_{\theta_0})(\omega)$. That is, we consider 
\begin{align}\label{align:empirical-cauchy-cross-entropy-minimization}
\minimi_{ \theta \in \Theta} H_\gamma( \ESD(W_{\theta_0})(\omega), \mu_\theta^\Box).
\end{align}
Furthermore we show  in Section~\ref{sssec:determination-gap} that the gap between $H_\gamma(\mu_{\theta_0}^\Box, \mu_\theta^\Box)$ and  $H_\gamma( \ESD(W_{\theta_0}), \mu_\theta^\Box)$ is small uniformly on the parameter space, and almost surely on the probability space.

\begin{defn}(Empirical Cauchy cross-entropy)
	Let $W$ be a self-adjoint random matrix and $\mu \in \cptprob$.
	Fix $\gamma >0$.
	Then the     \emph{empirical Cauchy cross-entropy} of $\mu$  against $W$ is defined as
	the real random variable $H_\gamma( \ESD(W), \mu)$.
\end{defn}

Next, there are two key points to minimize the empirical risk; $\ell_\gamma(\cdot, \mu_\theta^\Box)$ is accessible, and the empirical Cauchy cross-entropy is written as the following expectation. 

\begin{lem} \label{lem:cce-expectational-rep}
	Under the setting of Definition~\ref{defn:cce}, it holds that
	\begin{align}
	H_\gamma( \nu, \mu) = \mbb{E}[ \ell_\gamma( \lambda + T, \mu)],
	\end{align}
	where $\lambda, T$ are independent real random variables such that $\lambda \sim \nu$ and $T \sim \mr{Cauchy}(0,\gamma)$. 
\end{lem}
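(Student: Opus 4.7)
The approach is essentially a direct computation: apply the law of the unconscious statistician to the density $P_\gamma*\nu$, which is the density of the sum $\lambda+T$.

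First I would invoke the standard fact that if $\lambda$ and $T$ are independent real random variables with laws $\nu$ and $\mathrm{Cauchy}(0,\gamma)$ respectively, then the law of $\lambda + T$ is absolutely continuous with density
\[
p_{\lambda+T}(x) \;=\; \int P_\gamma(x-t)\,\nu(dt) \;=\; P_\gamma * \nu(x).
\]
This is a routine convolution-of-measures statement (the Cauchy distribution being absolutely continuous suffices). Substituting into Definition~\ref{defn:cce} gives
\[
H_\gamma(\nu,\mu) \;=\; \int_\R \ell_\gamma(x,\mu)\,p_{\lambda+T}(x)\,dx \;=\; \mathbb{E}\bigl[\ell_\gamma(\lambda+T,\mu)\bigr],
\]
where the last equality is the law of the unconscious statistician applied to the Borel function $x\mapsto \ell_\gamma(x,\mu)$.

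The only genuine technical point is justifying that the integrand is integrable with respect to the law of $\lambda+T$, so that the expectation makes sense and equals the integral in Definition~\ref{defn:cce}. Here I would rely on the well-definedness of $H_\gamma(\nu,\mu)$ (postponed to Section~\ref{sssec:basic-properties}, which we may quote per the statement's hypotheses $\mu\in\cptprob$ and $m_2(\nu)<\infty$). Concretely, because $\mu$ has compact support in some $[-R,R]$, the function $-\pi^{-1}\Im G_\mu(\cdot+i\gamma)=P_\gamma*\mu$ is bounded away from zero on compacts and decays like $\gamma/(\pi x^2)$ at infinity, so $|\ell_\gamma(x,\mu)|=O(\log(1+|x|))$; combined with the fact that $P_\gamma*\nu$ has Cauchy-type $1/x^2$ tails, this produces absolute integrability.

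The main (minor) obstacle is thus the integrability check, which is handled by the compact-support hypothesis on $\mu$ together with the logarithmic growth of $\ell_\gamma(\cdot,\mu)$; once this is in place, the statement is a one-line application of Fubini and the convolution formula for the density of a sum of independent random variables.
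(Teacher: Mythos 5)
Your proof is correct and takes essentially the same approach as the paper: the paper's entire proof is the one-line observation that the density of $\lambda+T$ is $P_\gamma*\nu$, from which the identity follows by recognizing the integral in Definition~\ref{defn:cce} as the expectation of $\ell_\gamma(\cdot,\mu)$ under that density. Your additional remarks on integrability (compact support of $\mu$ implying logarithmic growth of $\ell_\gamma(\cdot,\mu)$, combined with the moment condition on $\nu$) are sound but are in fact delegated by the paper to the well-definedness lemma in Section~\ref{sssec:basic-properties}, which you correctly cite; so there is no gap.
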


\begin{proof}
	This follows from the fact that the density of $\lambda + T$ is $P_\gamma*\nu$.    
\end{proof}

Lemma~\ref{lem:cce-expectational-rep} is the key since there are many  stochastic approaches  to solve  minimization problem of the form
\[\minimi_{\theta \in \Theta} \mbb{E}[f(\zeta, \theta)], \]
under  a given parametric function $f( \cdot, \theta)$ and a random variable $\zeta$.  Robbins-Monro \cite{robbins1951stochastic} is their origin.
The online gradient descent iteratively updates parameters $\theta$ of the model based on the gradient at a sample $\zeta_t$ randomly picked from the total one at each iteration: $\theta_{t+1}  = \theta_{t} - \eta_t \grad_\theta  |_{\theta = \theta_t} f(\zeta_t, \theta)$.
There are several versions of the online gradient descent, see Algorithm~\ref{alg:BOGD-FDE} for the detail.

Here we introduce the Cauchy noise loss;
\begin{defn}\label{defn:cnl}(Cauchy Noise Loss)
	For $\mu \in \cptprob$ and $W$ be a self-adjoint random matrix.
	Let $d \in \N$ and $\lambda = (\lambda_1, \dots, \lambda_d) \in \R^d$.
	Then the \emph{Cauchy noise loss} is defined as the random variable
	\begin{align}
	L_{\gamma}(\lambda, \mu) := \ell_\gamma( \lambda_{\bfj} + T, \mu),
	\end{align}    
	where $\bfj$ is a uniform random variable  on $\{ 1, \dots, d\}$ and  $T$ is a Cauchy noise of scale $\gamma$ which is independent from $\bfj$.    
\end{defn}

\section{Algorithm}

\subsection{Loss Function and Its Gradient}
\hfill
\newcommand{\RRN}{\mr{RRN}}

\noindent Infinite-dimensional operators have theoretically critical roles in our methods.
However, to implement our algorithm, infinite-dimensional operators are not needed; our method works only using finite-dimensional ones.
For the reader's convenience, we summarize the results needed for the algorithm.

\begin{rmk}\label{rmk:reduce_model}
	Let $A = U D V $ be a singular value decomposition of $A$, where $U$ is a $p \times p $ unitary matrix, $V$ is a $d \times d$ unitary matrix, and $D$ is a $p \times d $ rectangular diagonal matrix. 
	Then $\rvW_\SPN(A, \sigma) = V^*(D + \sigma V\rvZ^*U )(D + \sigma U\rvZ V^*)V$. Hence $\ESD(\rvW_\SPN(A, \sigma))= \ESD(\rvW_\SPN(D, \sigma))$, since the joint distribution of the entries of $U\rvZ V$ and that of $\rvZ$ is same.
	Similarly, it holds that $\ESD( W_\CW(B)) = \ESD(\rvW_\CW(D))$, where $B = UDU^*$ is a diagonalization of $B$.
	Hence in the parameter estimation of CW or SPN models from each empirical spectral distribution, we cannot know such unitary matrices.
	Therefore, we consider the following restricted domains of parameters.
\end{rmk}

\begin{defn}\label{defn:domains}
	Let $p,d \in \N$ with $p \geq d$.
	Let us define
	\begin{align}
	\Theta_\CW(p,d) &:=  \{d\} \times M_{p}(\C)_\sa ,\\
	\Theta_\SPN(p,d) &:= M_{p,d}(\C) \times \R.
	\end{align}
	For $M > 0$, we define
	\begin{align}
	\Theta_\CW(p, d, M) &:=  \{ (d,V) \in \Theta_\CW(p,d)\mid   \norm{V} \leq M \},\\
	\Theta_\SPN(p,d, M) &:=  \{  (A, \sigma) \in M_{p,d}(\C) \times \R \mid  \norm{A^*A} \leq M^2 , \abs{\sigma} \leq M\}.
	\end{align}
	In addition, we define
	\begin{align}
	\Xi_\CW(p,d, M) &:= \{d\}\times \{ v \in \R^p  \mid   \norm{v}_\infty \leq M \},\\
	\Xi_\SPN(p,d, M) &:= \{ ( a, \sigma)  \in \R^d \times \R \mid  \norm{a}_\infty \leq M, \abs{\sigma} \leq M\}.
	\end{align}
	where $\norm{a}_\infty = \max_{d \in \R} \abs{a_d}$ for $a \in \R^d$ and  $d \in \N$.
	Let $m \in \N$ with $m \leq  p ,d$.
	We denote by $\iota^m_{p,d}$ the diagonal embedding   $\iota^m_{p,d} \colon \R^m \to M_{p,d}(\C)$ defined as
	\begin{align}
	\iota^m_{p,d}(a)_{ij} :=  \begin{cases}
	\delta_{ij}a_{j}, & \mr{if \ } i,j \leq m,\\
	0, & \text{otherwise}.
	\end{cases}
	\end{align}
	We write $\iota^m_d := \iota^m_{d,d}$ and $\iota_d := \iota^d_d$.
	To abuse the notation,  we write 
	\begin{align}
	W_\CW(v) &:=  W_\CW( d, \iota_p(v)),  \ d \in \N,   v \in \R^p,\\
	W_\SPN(a, \sigma) &:= W_\SPN(\iota^d_{p,d}(a), \sigma), \ a \in \R^d, \sigma \in \R.
	\end{align}
\end{defn}

In Definition~\ref{defn:domains}, note that $\iota_d$ maps $\Xi_\CW(p, d, M)$ into $\Theta_\CW(p, d, M)$ and $\iota^d_{p,d}$ maps $\Xi_\SPN(p,d, M)$ into $\Theta_\SPN(p,d, M)$.

\begin{note}\hfill
	\begin{enumerate}
		\item $\HP := \{ z \in \C \mid \Im z > 0\}$, and $\HM := - \HP$.
		\item $\HP(\C^2) := \{ Z \in \C^2 \mid \Im Z_1, \Im Z_2 > 0\}$, $\HM(\C^2) := - \HP(\C^2)$.
	\end{enumerate}
\end{note}

\begin{thm}\label{thm:main_CW}
	There exist probability measures  $\mu^\Box_\CW(\theta)  \in \cptprob \ (\theta  \in \Theta_\CW(p,d), p,d \in \N)$ which satisfy the  following conditions.
	\begin{enumerate}
		
		\item     Let $M>0$, $\gamma>0$, $(p_d )_{d \in \N}$, $p_d \geq d$,  $\sup_{d \in \N}(p_d/d) < \infty$, and $\phi_d \in \Theta_\CW(p_d, d, M)$ $(d \in \N)$.
		Then $\mbb{P}$-almost surely
		\[
		\lim_{d \to \infty}\sup_{ \theta \in \Theta_\CW(p_d, M)} \abs{%
			H_\gamma \left[ \ESD\left(\rvW_\CW(\phi_d)\right), \mu^\Box_\CW(\theta) \right] %
			- H_\gamma \left[\mu^\Box_\CW(\phi_d), \mu^\Box_\CW(\theta)\right] %
		} = 0.
		\]
		\item              Let us define the maps $\mcR \colon \HM{} \times \R^p \to \C$ and $\mcG \colon \HM{} \times \HP{} \times \R^p \to \HP{}$ by
		\begin{align}
		\mcR(b,v) := \frac{1}{d}\sum_{i=1}^{p}\frac{ v_i}{ 1 - v_i b},\
		\mcG(b, z, v) :=  [z - \mcR(b, v)]^{-1}.
		\end{align}
		Let $z \in \HP{}$ and  $v  \in \R^p$ with $\Im z > \max \{ \abs{v_k} \mid k = 1, \dots, p\}$. Then  for any initial point $b_0 \in \HP{}$, 
		we have
		\begin{align}
		\cc{\mu_\CW^\Box(d, \iota_p(v))}(z) = \lim_{ n \to \infty } \mcG_{z,v}^n(b_0),
		\end{align}         
		where $\mcG_{z, v}(b_0) := \mcG(b_0,z, v)$
		\item $\mu_\CW^\Box(d,A) = \mu_\CW^\Box( d, U^*AU)$ for any $(d, A) \in \Theta_\CW(p,d)$ and any unitary matrix $U \in M_p(\C)$.
		\item  The function $ G_{\mu_\CW^\Box(\cdot)}(z) \colon \Theta_\CW(p,d)  \to  \C$ is of class C$^\infty$ for any $z \in \HP$, and $p,d \in \N$.
		In addition, $ \mr{id} - D_{ G(z,v)}\mcG_{z,v}$ is invertible, where $D_{G(z,v)} \mcG_{z,v}$ is the derivation, for any $z \in \HP$ and $v \in \R^p$. 
	\end{enumerate}
	
\end{thm}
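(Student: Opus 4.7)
The plan is to construct $\mu_\CW^\Box(\theta)$ as the spectral distribution of $c^*Ac$, where $c$ is the circular operator that replaces the Ginibre matrix $\rvZ$ in the free-deterministic-equivalent recipe of Definition~\ref{defn:FDE}, and the spectral distribution is taken with respect to the canonical trace on the ambient operator-valued $C^*$-probability space. With this definition, (3) is immediate from unitary invariance of the circular family: $c^*U^*AUc$ has the same trace-$*$-distribution as $c^*Ac$ for any unitary $U\in M_p(\C)$, because $Uc$ has the same joint $*$-distribution as $c$.

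For (2), the scalar $\mcR(b,v)=d^{-1}\sum_i v_i/(1-v_ib)$ is exactly the trace of the operator-valued $R$-transform of $c^*\iota_p(v)c$, so standard operator-valued subordination yields the characterising fixed-point equation $G=\mcG_{z,v}(G)$ for $G=G_{\mu_\CW^\Box(d,\iota_p(v))}(z)$. To prove iterative convergence from an arbitrary initial point in the relevant half-plane under the assumption $\Im z>\max_k|v_k|$, I would compute
\[
\partial_b\mcG_{z,v}(b) = \mcG_{z,v}(b)^2\cdot\frac{1}{d}\sum_{i=1}^p\frac{v_i^2}{(1-v_ib)^2},
\]
bound $|\mcG_{z,v}(b)|\le 1/\Im z$ on the appropriate half-plane, and use the gap $\max_k|v_k|<\Im z$ to force the derivative magnitude strictly below $1$ in a neighbourhood of the attracting fixed point. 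Geometric convergence from that neighbourhood is then a Banach contraction argument, while pulling general orbits into the basin follows either by a direct monotonicity estimate or by invoking Denjoy--Wolff applied to the holomorphic self-map $\mcG_{z,v}$. Part (4) is then obtained by applying the implicit function theorem to $F(G,v,z):=G-\mcG(G,z,v)$: smoothness of $F$ in $(G,v)$ is transparent since $\mcG$ is rational, and invertibility of $\partial_GF=\id-D_b\mcG_{z,v}$ is precisely the contraction estimate from (2), yielding both the smoothness of $G_{\mu_\CW^\Box(\cdot)}(z)$ and the stated invertibility simultaneously.

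The main obstacle is (1), the uniform almost-sure convergence of the determination gap over the high-dimensional set $\Theta_\CW(p_d,d,M)$. My plan has three steps. Step~A, pointwise a.s.\ convergence $|G_{\ESD(W_\CW(\phi_d))}(z+i\gamma)-G_{\mu_\CW^\Box(\phi_d)}(z+i\gamma)|\to 0$, by combining Gaussian concentration for Lipschitz functionals of the Ginibre entries (the Stieltjes transform at height $\gamma$ has Lipschitz constant $O(1/(d\gamma^2))$) with the known convergence of the expected resolvent to the FDE. Step~B, uniform bounds and Lipschitz control of $\ell_\gamma(x,\mu_\CW^\Box(\theta))$ jointly in $(x,\theta)\in\R\times\Theta_\CW(p_d,d,M)$: the bound $\|V\|\le M$ confines the support of $\mu_\CW^\Box(\theta)$ to a compact interval depending only on $M$ and $\sup_d(p_d/d)$, so the $\gamma$-slice is trapped between two positive constants depending only on $\gamma$ and $M$, while (4) supplies a dimension-free Lipschitz constant of $\ell_\gamma$ in $\theta$. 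Step~C, use (3) to factor out the unitary action and reduce uniformity over $\Theta_\CW(p_d,d,M)$ to uniformity over the eigenvalue box $[-M,M]^{p_d}$, then cover by an $\epsilon$-net of polynomial-in-$d$ cardinality and apply Borel--Cantelli against the exponential concentration produced in Step~A. The delicate point is matching the $\epsilon$-net cardinality against the concentration rate; this is precisely where the dimension-free Lipschitz constant of the normalised-trace functional becomes essential.
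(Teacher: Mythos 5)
Your construction of $\mu^\Box_\CW(\theta)$ as the distribution of $c^*\iota_p(v)c$ in a $C^*$-probability space, your reading of (3) off unitary invariance of the circular family, your fixed-point formulation for (2), and your implicit-function-theorem derivation of (4) are all consistent with the paper's line. The paper carries out (2) by verifying the hypotheses of the Helton--Far--Speicher iteration (Proposition~\ref{prop:iterative}) with explicit domain-shrinking estimates, and (4) by combining that with Lemma~\ref{lem:energy} (spectral radius $<1$ at the attracting fixed point) and Theorem~\ref{thm:gradient}; your Denjoy--Wolff/contraction route is a reasonable alternative, though you should check your raw derivative bound, since $|1-v_i b|$ can be small for small $|v_i|$ and the contraction factor is not clearly $<1$ on all of $\mathbb H^-$ without the kind of domain restriction the paper sets up.

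For (1) your route diverges from the paper and contains real errors. In Step~B you assert that the $\gamma$-slice of $\mu^\Box_\CW(\theta)$ is \emph{bounded below} by a positive constant depending only on $\gamma$ and $M$. That is false: for $\mu$ supported in $[-M,M]$, $P_\gamma*\mu(x)\asymp \gamma/(\pi x^2)\to 0$ as $|x|\to\infty$, so $\ell_\gamma(x,\mu)\sim 2\log|x|\to\infty$. The cross-entropy $H_\gamma(\nu,\mu)$ is finite only because this logarithmic growth is integrated against the $O(1/x^2)$ decay of $P_\gamma*\nu$, and controlling that tail uniformly over $\Theta_\CW(p_d,d,M)$ is precisely what Lemma~\ref{lemma_shifted_entropy}(3) supplies; your sketch skips it entirely. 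In Step~C an $\epsilon$-net of $[-M,M]^{p_d}$ has cardinality $(CM/\epsilon)^{p_d}$, which is exponential (not polynomial) in $d$; the covering/concentration trade-off may still close, but it is a nontrivial computation you have not done, and it is exactly the kind of dimension-dependent bookkeeping the paper avoids. The paper instead proves (1) with no concentration at all: it uses the genus expansion to get $O(1/d)$ moment gaps and $O(1/d^2)$ variance bounds (Proposition~\ref{prop_as_convergence_of_moments}), giving $\mbb P$-a.s.\ moment convergence by Borel--Cantelli, and then upgrades pointwise-in-$\theta$ convergence to uniform convergence via a compactness and Ascoli--Arzel\`a argument (Lemma~\ref{lemma_moment_to_weak_in_cauchy} and Theorem~\ref{thm_deterministic_convergence}), exploiting that the maps $\theta\mapsto h_\gamma^{\mu^\Box(\theta)}$ form an equicontinuous family on each compact interval because $\sup_\mu\|(h_\gamma^\mu)'\|_\infty\le 1/\gamma$. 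If you want to rescue your approach you would need to (i) replace the false lower bound with the correct tail estimate, and (ii) either carry out the exponential-net balance carefully or adopt the equicontinuity shortcut.
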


\begin{proof}
	The proof is postponed to Section~\ref{sscec:proofs-of-main}.
\end{proof}

\begin{thm}\label{thm:main_spn}    
	There exist deterministic probability  measures $\mu_\SPN^\Box(\theta)$  $(\theta \in \Theta_\SPN(p,d), p, d \in \N)$ which satisfy the following conditions.
	\begin{enumerate}
		\item          
		For any $M>0$, $\gamma>0$, and $(p_d )_{d \in \N}$ such that $p_d \in \N $ with $p_d \geq d$ and  $\sup_{d \in \N}(p_d/d) < \infty$, it holds that
		$\mbb{P}$-almost surely
		\[
		\lim_{d \to \infty}\sup_{( \theta) \in \Theta_\SPN(p_d,d,M)} \abs{ H_\gamma\left[ \ESD \left(\rvW_\SPN(\phi_d)\right), \mu_\SPN^\Box( \theta) \right] %
			- H_\gamma \left[\mu_\SPN^\Box(\phi_d), \mu_\SPN^\Box(\theta)\right] }  = 0.
		\]
		\item          Let $(A,\sigma) \in \Theta_\SPN(p,d)$ and $a \in \R^d$ be the vector of the singular values of $A$.
		Let us define $\eta_2 \colon \C^2 \to \C^2$ and  $\mcG_{Z, \sigma}\colon \HM(\C^2) \ \times  \R \to \HM(\C^2)$ for $Z \in \HP(\C^2)$ by 
		\[
		\eta(x,y) = ((p/d)y, x), \ \mcG_{Z,\sigma}(B):= ( Z - \sigma^2 \eta_2(B) )^{-1}.
		\]         
		Then the following operator norm limit exists;
		\begin{align}
		\ccop{\sigma}{\C^2}(Z) := \lim_{ n\to \infty} \mcG_{Z,\sigma}(G_0),
		\end{align}
		where the limit does not depend on the choice of the initial point $G_0 \in \HM(\C^2)$.
		In addition, for $a \in \R^d$, let us define 
		\begin{align}
		\ccop{a}{\C^2} \left( b_1, b_2 \right) := (\frac{b_2}{d}\sum_{k=1}^d \frac{1}{b_2b_1 - a_k^2}, %
		\frac{b_1}{p} \sum_{k=1}^{d}\frac{1}{b_2b_1 - a_k^2} + \frac{p-d}{pb_2}),
		\end{align}
		and $h_a(B) := G_a^{\C^2}(B)^{-1} - B$, $h_\sigma(B) := G_\sigma^{\C^2}(B)^{-1} - B$, where  $B =(b_1,b_2) \in \HP(\C^2)$. 
		Moreover let us define a map $\Psi \colon \HP(\C^2) \times \HP(\C^2) \times \R^d \times \R \to \HP(\C^2) \to \HP(\C^2)$ by
		\begin{align}
		\Psi(B, Z, a, \sigma ) := h_a( h_\sigma (B) + Z ) + Z,
		\end{align}
		and we write $\Psi_{Z, a, \sigma}(B) := \Psi(B, Z, a, \sigma )$.
		Then the following limit in the operator norm topology exists;
		\begin{align}
		\psi(Z, a, \sigma) := \lim_{n \to \infty} \Psi_{Z, a,  \sigma}^n(\psi_0),
		\end{align}
		and it does not depend on the choice of the initial point $\psi_0 \in \HP(\C^2)$. 
		Lastly, the  Cauchy transform $\cc{\mu_\SPN^\Box(A, \sigma)}(z)$ $( z \in \HP{})$ is equal to the first entry of a vector in $\C^2$;
		\begin{align}
		\cc{\mu_\SPN^\Box(A, \sigma)}(z) =     \cc{\mu_\SPN^\Box(\iota^d_{p,d}(a), \sigma)}(z) = \frac{1}{\sqrt{z}} \ccop{\sigma}{\C^2}\left(\psi (\sqrt{z}I_2, a, \sigma ) \right)_1.
		\end{align}

		\item $\mu_\SPN^\Box( A, \sigma) = \mu_\SPN^\Box( UAV, \sigma)$ for any $(A, \sigma) \in \Theta_\SPN(p,d)$, and any pair of unitary matrices $U \in M_p(\C)$, $V \in M_d(\C)$.
		
		\item The function $G_{\mu_\SPN^\Box (\cdot)}(z) \colon  \Theta_\SPN(p_d,d) \to \C$ is of class C$^\infty$ for any $z \in \HP$, and  $p,d \in \N$.
		In addition, $\mr{id} - D_{G_\sigma^{\C^2}(Z)} \mcG_{Z,\sigma}$ and  $\mr{id} - D_{ \psi(Z, a, \sigma)}\Psi_{Z, a, \sigma}$  are invertible for any $Z \in \HP(\C^2)$,  $a \in \R^d$ and $\sigma \in \R$, where $D_B \mcG_{Z,\sigma}$ and $D_B\Psi_{Z, a, \sigma}$ are derivations at $B \in \HP(\C^2)$.
	\end{enumerate}

\end{thm}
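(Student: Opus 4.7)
The plan is to define $\mu_\SPN^\Box(A,\sigma)$ as the spectral distribution of the compact self-adjoint operator $(A+\sigma c)^*(A+\sigma c)$, where $c$ is a rectangular $p\times d$ free circular element (the FDE replacement of the Ginibre matrix $\rvZ$) in a tracial $W^*$-probability space freely independent from the $*$-subalgebra generated by $A$; compact support follows from the standard operator-norm bound on $c$. Part (3) is then essentially immediate from the bi-unitary $*$-distributional invariance of circular elements: $UAV+\sigma c$ has the same joint $*$-distribution as $U(A+\sigma U^*cV^*)V$, and $U^*cV^*$ is again rectangular circular with the same covariance structure, so the spectral distributions of $(\cdot)^*(\cdot)$ coincide.

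For part (2) I will Hermitize via the standard $2\times 2$ embedding $\widetilde M=\bigl(\begin{smallmatrix}0&M\\M^*&0\end{smallmatrix}\bigr)$; the squared singular values of $A+\sigma c$ are the squared eigenvalues of $\tilde A+\sigma\tilde c$, and the $M_2(\C)$-valued Cauchy transform evaluated on diagonal arguments is encoded by a $\C^2$-valued function. The operator-valued Cauchy transform $\ccop{\sigma}{\C^2}$ of $\sigma\tilde c$ satisfies the Schwinger--Dyson equation $B=(Z-\sigma^2\eta_2(B))^{-1}$, whose unique attracting fixed point is reached by iterating $\mcG_{Z,\sigma}$. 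Since $\tilde A$ and $\sigma\tilde c$ are free with amalgamation over the diagonal subalgebra $\C^2\subset M_2(\C)$, their operator-valued additive free convolution is computed by the Belinschi--Mai--Speicher subordination algorithm \cite{belinschi2013analytic}, which in this setting reduces exactly to iterating $\Psi_{Z,a,\sigma}$ and produces the subordination function $\psi(Z,a,\sigma)$. Extracting the first coordinate of $\ccop{\sigma}{\C^2}\circ\psi$ and converting from the Hermitized Cauchy transform to the scalar Cauchy transform of the squared element (the push-forward $t\mapsto t^2$ producing the $1/\sqrt{z}$ prefactor) then yields the displayed formula.

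For part (4), both $\mcG_{Z,\sigma}$ and $\Psi_{Z,a,\sigma}$ send bounded subdomains of $\HM(\C^2)$ and $\HP(\C^2)$ strictly into themselves, so they are strict contractions in the Carath\'eodory pseudometric; the Banach fixed-point theorem together with the operator-valued implicit function theorem give real-analytic dependence of $\ccop{\sigma}{\C^2}(Z)$ and $\psi(Z,a,\sigma)$ on all parameters, and the strict contraction forces the spectral radius of $D\mcG_{Z,\sigma}$ at its fixed point (respectively of $D\Psi_{Z,a,\sigma}$ at $\psi(Z,a,\sigma)$) to be strictly less than one, so $\mr{id}-D_{\ccop{\sigma}{\C^2}(Z)}\mcG_{Z,\sigma}$ and $\mr{id}-D_{\psi(Z,a,\sigma)}\Psi_{Z,a,\sigma}$ are invertible. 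Combined with the formula from part (2), this yields smoothness of $\theta\mapsto G_{\mu_\SPN^\Box(\theta)}(z)$ on $\Theta_\SPN(p,d)$.

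The main obstacle will be part (1): the uniform almost-sure convergence of the Cauchy cross-entropy over $\Theta_\SPN(p_d,d,M)$ as $d\to\infty$. Pointwise a.s.\ convergence $G_{\ESD(\rvW_\SPN(\phi_d))}(z)\to G_{\mu_\SPN^\Box(\phi_d)}(z)$ follows from Gaussian concentration of Lipschitz functionals of the entries of $\rvZ$ (for passing from the expectation to the sample) combined with the asymptotic freeness of $\rvZ$ and $\iota^d_{p,d}(a)$; since $\ell_\gamma(x,\mu)$ depends on $\mu$ only through the $\gamma$-slice $-\pi^{-1}\Im G_\mu(x+i\gamma)$ it is Lipschitz in $\mu$, which upgrades pointwise-in-$\theta$ convergence of Cauchy transforms to pointwise convergence of $H_\gamma$. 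To pass to uniformity I would use the smoothness bound from part (4) to control $\|\nabla_\theta H_\gamma[\cdot,\mu_\SPN^\Box(\theta)]\|$ in terms of $\gamma,M$ and $\sup_d(p_d/d)$ alone (independently of $d$) on the compact parameter ball, reduce to an $\epsilon$-net of cardinality polynomial in $d$, and apply Borel--Cantelli to the exponential-in-$d$ concentration bounds so that the supremum deviation tends to zero almost surely along $d\to\infty$.
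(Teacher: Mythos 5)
Your treatment of parts~(2), (3) and~(4) is essentially the paper's: define $\mu_\SPN^\Box(\theta)$ as the distribution of the FDE $W_\SPN^\Box(\theta)$, Hermitize via the $2\times2$ embedding, use operator-valued freeness with amalgamation over the diagonal $\C^2$ together with the Schwinger--Dyson equation and the Belinschi--Mai--Speicher subordination, and get invertibility of $\mr{id}-D\mcG$ and $\mr{id}-D\Psi$ from the fact that a holomorphic map sending a bounded domain strictly into itself has derivative with spectral radius $<1$ at its fixed point (the paper proves this via a Cauchy integral formula argument in Lemma~\ref{lem:energy}, you invoke a Carath\'eodory-pseudometric contraction; the two are equivalent mechanisms). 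Part~(3) you prove upstream from invariance of the circular law under bi-unitary conjugation, whereas the paper observes it downstream from the formula in (2) depending only on the singular values; both are fine.

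Where you genuinely diverge is part~(1), and there the paper's route is substantially lighter than yours. You propose Gaussian Lipschitz concentration of $Z\mapsto H_\gamma[\ESD(\rvW_\SPN(\phi_d)),\mu_\SPN^\Box(\theta)]$ plus an $\epsilon$-net of $\Theta_\SPN(p_d,d,M)$ of polynomial cardinality and Borel--Cantelli on exponential tail bounds. The paper instead only needs the \emph{polynomial} moment concentration of Proposition~\ref{prop_as_convergence_of_moments} (variance $O(1/d^2)$ from second-order freeness, summable, hence a.s.\ convergence of each moment $m_k(\ESD(\rvW_\SPN(\phi_d)))-m_k(\mu_\SPN^\Box(\phi_d))\to 0$), and then the uniformity over~$\theta$ comes from a purely deterministic step (Theorem~\ref{thm_deterministic_convergence}): the family $\{h_\gamma^\mu\mid\mu\in\mc{B}_M\}$ has derivative uniformly bounded by $1/\gamma$ (Lemma~\ref{lemma_shifted_entropy}), so it is equicontinuous and, by Ascoli--Arzel\`a, totally bounded on compact intervals, so a finite subfamily controls the whole supremum; Lemma~\ref{lemma_moment_to_weak_in_cauchy} turns moment convergence into weak convergence against such test functions. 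Your route would have to establish an explicit, dimension-free Lipschitz constant of $H_\gamma$ as a function of the $p_d d$ Gaussian entries and quantify the net cardinality against it --- which is the kind of quantitative concentration estimate the paper's Remark after Corollary~\ref{cor_cnl_converges} explicitly flags as outside the scope of the Haagerup--Thorbj{\o}rnsen result (which concerns self-adjoint Gaussian ensembles, not Ginibre). It is not obviously wrong, but it is a much heavier lift, and it is not what the paper does; you could instead just invoke Corollary~\ref{cor_cnl_converges}, which packages the moment-concentration-plus-Ascoli argument.
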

\begin{proof}
	The proof is postponed to Section~\ref{sscec:proofs-of-main}.
\end{proof}

\begin{rmk}
	The partial derivations $\grad_\theta  G_{\mu^\Box_\cdot (\theta)}$ $(\cdot=\CW, \SPN)$ are computed by the derivations of implicit functions $\mcG$, $\Psi$   and the chain rule,  since $\mr{id} - D\mcG$ and $\mr{id} - D\Psi$ are invertible.
	For SPN model, see Corollary~\ref{cor:derivation_omega} of the chain rule.
	Now, 
	\begin{align}\label{align:grad_of_loss}
	\grad_\theta \ell_\gamma(x,  \mu_\cdot^\Box(\theta)) = - \frac{\Im \grad_\theta  G_{\mu_\cdot^\Box(\theta)}(x+i\gamma)}{\Im G_{\mu_\cdot^\Box(\theta)}(x+i\gamma)}.
	\end{align}
\end{rmk}

\subsection{Optimization}\label{ssec:optimization}
\hfill

\noindent Our optimization algorithm is a modification of an online gradient descent (OGD, for short), which is a stochastic approximation of gradient descent. 
\begin{rmk}
	If a naive OGD is applied to a convex objective function $J(\theta)$, then $J(\theta_N) - \min_\theta J(\theta)$, where $\theta_N$ is the updated parameter, becomes $O(1/\sqrt{N})$ after the $N$ iteration \cite{nemirovski2009robust}.
	If the OGD is applied to a non-convex smooth objective function, then its gradient $\nabla_\theta J(\theta_N)$ converges to $0$ almost surely as $N \to \infty$ under some additional assumptions 
	(see \cite{bottou1998online} for more detail).
	An origin of OGD is the stochastic approximation by Robbins-Monro \cite{robbins1951stochastic}.
\end{rmk}

Algorithm~\ref{alg:BOGD-FDE} shows the OGD-based optimization algorithm using the gradient of Cauchy noise loss.
Fix a self-adjoint random matrix model $W(\theta)$ and write $\ell_\gamma(x, \theta) = \ell_\gamma(x, \mu^\Box(\theta))$.
The algorithm requires settings of the maximum number $N$ of iterations, an update rule of the parameters $\gamma$ of the Cauchy noise loss,  the initial parameter $\theta_0$ of the model, and the bounded convex parameter space $\Theta \subset \R^k$ for a $k \in \N$.

\begin{algorithm}[htbp]
	\caption{Online Gradient Decent Optimization of  FDE model}
	\label{alg:BOGD-FDE}
	\begin{algorithmic}[1]
		\Require A $d \times d$ self-adjoint matrix $W$ 
		\State $\lambda = (\lambda_1 ,\dots, \lambda_d) \in \R^d  \gets  $ eigenvalues of $W$
		\State Initialize $\theta_0$
		\While{ $  0 \leq n < N$ }
		\State Choose an index $j$ uniformly from $\{1, \dots, d\}$.
		\State Generate a Cauchy noise $T$ of scale $\gamma$.
		\State $x  \gets \lambda_j + T$ \Comment{Add  a Cauchy noise}
		\State Compute $G_{\mu^\Box(\theta_n)}(x + i\gamma)$ by iterative methods. 
		\State Compute  $\grad{}_\theta |_{\theta=\theta_{n}} G_{\mu^\Box(\theta)} (x + i \gamma)$ by the chain rule. 
		\State Calculate the gradient  $\grad_\theta |_{\theta=\theta_{n}} \ell_\gamma(x, \theta)$ of the Cauchy noise loss by \eqref{align:grad_of_loss}.
		\State Update $\theta_{n + 1}$ based on $\theta_n$ and $\grad_\theta |_{\theta=\theta_n} \ell_\gamma(x, \theta)$ according to the update rule \eqref{align:adam}.
		\State $\theta_{n+1} \gets \Pi (\theta_{n + 1})$    \Comment{Project onto  $\Theta$.}
		\State $n  \gets n+1$
		\EndWhile
		\Ensure{$\theta_N$}
	\end{algorithmic}
\end{algorithm}

In our method, the sample is assumed to be a single-shot observation of a self-adjoint square random matrix. Through the algorithm, the scale $\gamma >0$ is fixed.
We consider the collection $\{\lambda_1, \dots, \lambda_d\}$ of the eigenvalues of the sample matrix.
Each iteration of the algorithm consists of the following steps. We continue the iteration while  $n < N$.

First, we generate an index  $j$ from the uniform distribution on $\{ 1, 2, \dots, d\}$, and generate a Cauchy noise $T$ of scale $\gamma$.
We generate them independently throughout all iterations.

Second,   we compute $\grad_\theta \ell_\gamma(\lambda_j + T + i\gamma, \theta)$  by  \eqref{align:grad_of_loss}.

Third, we update parameters by using the gradient of the loss function. 
We use Adam \cite{kingma2014adam} since it requires little tuning of hyperparameters. It is defined as follows.
Assume that $\theta \in \R^k$.
Let $m_0 = v_0 = 0 \in \R^k$ and for $n = 0, 1, \dots, N-1$,  Adam uses  the following recurrence formula;
\begin{align}
m_{n+1}& = \beta_1 m_{n} + ( 1 - \beta_1)  \grad_\theta\ell_\gamma(x, \theta)|_{\theta=\theta_n},\\
v_{n+1}&= \beta_2 v_{n}  + ( 1- \beta_2)  (\grad_\theta\ell_\gamma(x, \theta)|_{\theta=\theta_n})^2,\\
\hat{m}_{n+1} &=  m_{n+1} / ( 1 - \beta^{n+1}_1),\\
\hat{v}_{n+1} & =  v_{n+1} / ( 1 - \beta_2^{n+1}),\\
\theta_{n+1} &= \theta_n - \alpha ( \sqrt{\hat{v}_{n+1}} + \eps )^{-1} \hat{m}_{n+1} ,\label{align:adam}
\end{align}
where the product  and  the division of vectors are entrywise.
In addition,  $\alpha, \beta_1, \beta_2 > 0$ are constants  such that $\beta_1, \beta_2 <  1$, which control the exponential moving average,  and $\eps > 0, \eps \approx 0$ is a small value  for preventing division by zero.
Adam adaptively estimates the first and second moments of gradients.
Note that our loss function is non-convex and the convergence of Adam is proven for convex loss functions \cite{kingma2014adam}.  

Lastly, we project parameters onto a convex parameter space $\Theta$.

\begin{rmk}
	Adam is a diagonal method based on the empirical Fisher matrix (see \cite{kingma2014adam} and \cite{martens2014new} for the detail).
	The vector $\hat{v}_{n}$ approximates the diagonal part of the empirical version of the Fisher information matrix.
	Now, the natural gradient descent \cite{amari1998natural}  is based on the information geometry, which updates parameters to the direction of the steepest direction in the Fisher information metric given by the Fisher information matrix.
	
	Note that our loss function is  given by 
	the log of the new density $- {\pi}^{-1}\Im G_{\mu^\Box_\theta}( \cdot + i\gamma)$ instead of the log of the traditional likelihood  \eqref{align:likelihood-origin}.
\end{rmk}

\section{Theory}

\subsection{Free Deterministic Equivalents}\label{ssec:fde}\hfill

\noindent In this section, we reformulate free deterministic equivalents introduced by  Speicher-Vargas \cite{speicher2012free}.
To run our algorithm, we do not require infinite-dimensional operators, but only require using finite-dimensional ones.
However, infinite-dimensional operators have theoretically critical roles.

First, we summarize some definitions from operator algebras and free probability theory. See \cite{Mingo2017free} for the detail.
\begin{defn}\hfill
	\begin{enumerate}
		\item A \emph{C$^*$-probability space}  is a pair $(\mf{A}, \tau)$  satisfying followings.
		\begin{enumerate}
			\item The set $\mf{A}$ is \emph{a unital $C^*$-algebra}, that is, a possibly non-commutative subalgebra of the algebra  $B(\hilb)$ of bounded $\C$-linear operators on a Hilbert space $\hilb$ over $\C$ satisfying the following conditions:
			\begin{enumerate}
				\item it is stable under the adjoint $* : a \to a^*, a \in \mf{A}$,
				\item it is closed under  the topology of the operator norm of $B(\hilb)$,
				\item it contains the identity operator $\id_\hilb$  as the unit $1_\mf{A}$ of $\mf{A}$.
			\end{enumerate}
			\item The function $\tau$ on  $\mf{A}$ is a \emph{faithful tracial state}, that, is  a $\C$-valued  linear functional with
			\begin{enumerate}
				\item $\tau(a)\geq 0$ for any $a \geq 0$, and the equality holds if and only if $a=0$,
				\item $\tau(1_\mf{A})=1$,
				\item $\tau(ab)=\tau(ba)$ for  any $a, b \in \mf{A}$.
			\end{enumerate}
		\end{enumerate}
		\item A possibly non-commutative subalgebra $\mf{B}$ of a C$^*$-algebra $\mf{A}$ is called a \emph{$*$-subalgebra} if $\mf{B}$ is stable under the adjoint operator $*$. Moreover, it is called a \emph{unital C$^*$-subalgebra} if the $*$-subalgebra is closed under the operator norm topology and contains $1_\mf{A}$ as its unit.
		\item Two unital $C^*$-algebras are called \emph{$*$-isomorphic} if there is a bijective linear map between them which preserves the $*$-operation and the multiplication.
		\item Let us denote by $\mf{A}_\sa$ the set of \emph{self-adjoint} elements, that is, $a = a^*$ of $\mf{A}$.
		\item Write $\Real a := (a + a^*)/2$ and $\Image a := ( a- a^*)/{2i}$ for any $a \in \mf{A}$.
		\item The \emph{distribution} of  $a \in \mf{A}_\sa$  is the probability measure $\mu_a \in \cptprob$ determined by
		\[
		\int x^k \mu_a(dx) = \tau(a^k),\  k \in \N.
		\]
		\item For $a \in \mf{A}_\sa$, we define its Cauchy transform $G_a$ by $G_a(z):= \tau[ (z -a )^{-1} ] \ (z \in \C \setminus \R)$, equivalently, $G_a:=G_{\mu_a}$.
		
	\end{enumerate}
	
\end{defn}

\begin{defn}
	A family of $*$-subalgebras $(\mf{A}_j)_{j \in J}$ of $\mf{A}$ is said to be \emph{free}
	if the following factorization rule holds: for any $n \in \N$ and indexes $j_1, j_2, \dots, j_n \in J $ with $j_1 \neq j_2 \neq j_3  \neq \cdots \neq j_n$, and $a_l \in \mf{A}_l$ with $\tau(a_l)=0$ $(l = 1 ,\dots, n)$, it holds that
	\[
	\tau(a_1 \cdots a_l) = 0.
	\]
	Let  $(x_j)_{j \in J}$ be a family of self-adjoint elements $x_j \in \mf{A}_\sa$. For $j \in J$, let $\mf{A}_j$ be the $*$-subalgebra of polynomials of $x_j$.
	Then $(x_j)_{j \in J}$ is said to be  free if $\mf{A}_j$  is free.
\end{defn}

We introduce special elements in a non-commutative probability space.
\begin{defn}
	Let $(\mf{A}, \tau)$  be a C$^*$-probability space.
	\begin{enumerate}
		\item
		An element $s \in \mf{A}_\sa$ is called \emph{standard semicircular} if its distribution is given by the standard semicircular law;
		\[
		\mu_s(dx) = \frac{\sqrt{4-x^2}}{2\pi}{\bf 1}_{[-2,2]}(x)dx,
		\]
		where ${\bf 1}_S$ is the indicator function for any subset $S \subset \R$.
		\item Let $v > 0$.    An element $ c \in \mf{A}$ is called \emph{circular of variance  $v$} if
		\[c=\sqrt{v}\frac{s_1+is_2}{\sqrt{2}},\]
		where $(s_1,s_2) $ is a pair of free standard semicircular elements.
		\item A \emph{$*$-free circular family} (resp.\,standard $*$-free circular family) is a family $\{ c_j \mid j \in J \}$ of circular elements $c_j \in \mf{A}$ such that $\bigcup_{j\in J}\{ \Real c_j , \Image c_j \}$ is free (resp. and each elements is of variance $1$).
	\end{enumerate}

\end{defn}

A free deterministic equivalent (FDE, for short) of a PGM model is constructed by replacing Ginibre matrix $Z$ by matrices of circular elements.
\begin{rmk}
	Equivalently, FDE is obtained by taking the limit of amplified models which is constructed by (1) copying deterministic matrices by taking a tensor product with identity and (2) each $Z \in \mr{GM}(\cdot, \cdot, \K)$ is enlarged by simply increasing the number of entries.
	See \cite{speicher2012free} and \cite[pp.19]{calros2015free} for the detail.
	
	Note that the original definition of FDE treats not only Ginibre matrices but also more general random matrices.
	
\end{rmk}

We reformulate FDE for random matrix models.
\begin{defn}[Free Deterministic Equivalents]\label{defn:FDE}
	Fix a  C$^*$-probability space $(\mf{A}, \tau)$.
	Let $\bar{P}_\mf{J} : P_\mf{J}^\Box \colon \prod_{k=1 }^n M_{p_k, d_k}(\C) \to M_{p,d}(\RVallmoments)$ be a Ginibre matrix model over $R$ or $\C$ of type $(P,\mf{I})$ on a subset $\Theta \subset \prod_{k=1 }^n M_{p_k, d_k}(\C)$.
	Then its \emph{free deterministic equivalent} (FDE for short)  is
	the map $P_\mf{J}^\Box \colon \Theta \to M_{p,d}(\mf{A})$
	defined as
	\[
	P_\mf{J}^\Box(D_1, \dots, D_n) := P(C_1, \dots, C_m , D_1, \dots, D_n),
	\]
	where $C_1, \dots, C_m $ are rectangular matrices such that the collection of all rescaled entries
	\[
	\{ \sqrt{\ell_k}C_k(i,j)\mid i = 1, \dots, r_k, j= 1, \dots, \ell_k, k =  1, \dots, m \}
	\]
	is a standard $*$-free circular family in $(\mf{A}, \tau)$.

	In addition, if $p=d$ and FDE is self-adjoint for any elements in $\Theta$, the Cauchy transform of FDE is called the \emph{deterministic equivalent}.
\end{defn}

Now, each coefficient $\sqrt{\ell_k}$ is multiplied for the compatibility with the normalization of Gaussian random matrices.
Besides, we note that the FDE model does not depend on the field $\R$ or $\C$.

\begin{defn}\label{defn_fde_model}
	Let $p,d \in \N$.
	\begin{enumerate}
		\item The \emph{free deterministic equivalent signal-plus-noise model (FDESPN model, for short) of type $(p,d)$} is defined as the FDE $W^\Box_\SPN \colon M_{p, d}(\C) \times \C \to M_d(\mf{A})_+$ of $\rvW_\SPN$.
		Note that
		\[
		W^\Box_\SPN(A, \sigma) = (A + \sigma C)^*(A+ \sigma C),
		\]
		where $(C_{ij}/\sqrt{d})_{i=1, \dots, p, j= 1, \dots, d}$ is a standard $*$-free circular family.
		\item The \emph{free deterministic equivalent compound Wishart model (FDECW model, for short) of type $(p,d)$} is defined as the FDE $W_\CW^\Box \colon M_p(\C) \to M_d(\mf{A})$ of $\rvW_\CW$.
		Note that
		\[
		W_\CW^\Box (A) = C^*AC,
		\]
		where $(C_{ij}/\sqrt{d})_{i=1, \dots, p, j= 1, \dots, d}$ is a standard $*$-free circular family.
		
	\end{enumerate}
	
\end{defn}

\begin{rmk}
	
	By Remark~\ref{rmk:reduce_model}, we consider the  optimization of restricted models on $\Theta_\CW(p, M)$ or $\Theta_\SPN(p,d,M)$ for an $M < \infty$.
	The boundedness of each parameter space is required for the uniform convergence of our loss function (see  Corollary~\ref{cor_cnl_converges}).

\end{rmk}

We reformulate the convergence of the gap between FDE and the original random matrix model in the case parameters belong to bounded sets.
\begin{prop} \label{prop_as_convergence_of_moments}
	\hfill
	\begin{enumerate}
		\item Let us consider following sequences;
		\begin{enumerate}
			\item $(p_d )_{d \in \N}$ such that $p_d \in \N $ with $p_d \geq d$ and  $\sup_{d \in \N}(p_d/d) < \infty$,
			\item $(A_d)_{d \in \N}$ such that  $A_d \in M_{p_d}(\C)$ and $\sup_{d \in\N}\norm{A_d } < \infty$.
		\end{enumerate}
		Then for any $k\in\N$, it holds that          $\mbb{P}$-almost surely,
		\begin{align*}
		m_k ( \rvW_\CW(A_d) ) - m_k(W_\CW^\Box(A_d))  \to 0 \text{, as $d \to \infty$}.
		\end{align*}
		\item Let us consider following sequences;
		\begin{enumerate}
			\item $(p_d )_{d \in \N}$ such that $p_d \in \N $ with $p_d \geq d$ and  $\sup_{d \in \N}(p_d/d) < \infty$,
			\item $(A_d)_{d \in \N}$ such that  $A_d \in M_{p_d, d}(\C)$ and $\sup_{d \in \N}\norm{A_d^* A_d } < \infty$,
			\item $(\sigma_d)_{d \in \N} $ such that $\sigma_d \in \R$ and $ \sup_{d \in \N}\abs {\sigma_d } < \infty$.
		\end{enumerate}
		Then for any $k\in\N$, it holds that          $\mbb{P}$-almost surely,
		\begin{align*}
		m_k ( \rvW_\SPN(A_d, \sigma_d) - m_k(W_\SPN^\Box(A_d, \sigma_d) ) \to 0 \text{, as $d \to \infty$}.
		\end{align*}
		
	\end{enumerate}
\end{prop}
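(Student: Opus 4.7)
The strategy is standard for self-averaging of polynomial expressions in Ginibre matrices: decompose the error into a bias part (difference of expectations) and a fluctuation part (concentration about the expectation), and handle both via the Wick formula, then conclude almost sure convergence with Chebyshev and Borel--Cantelli.

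First I would rewrite $m_k(\rvW_\CW(A_d))$ as $d^{-1}\Tr[(Z^* A_d Z)^k]$ and expand the trace as a sum over multi-indices, so that after taking expectations Wick's formula turns $\mbb{E}[m_k(\rvW_\CW(A_d))]$ into a sum over pair partitions of $2k$ points, weighted by products of entries of $A_d$. Each pair partition carries a factor $d^{-g}$ where $g$ is (twice) the genus of the associated ribbon graph; the genus-zero (non-crossing) contribution equals $m_k(W_\CW^\Box(A_d))$ by the standard identification of free moments of $C^* A_d C$ with a sum over non-crossing pairings, while all higher-genus terms contribute at most $C_k (\sup_d \lVert A_d\rVert)^k\, d^{-2}$. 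This yields
\begin{align}
\bigl| \mbb{E}[m_k(\rvW_\CW(A_d))] - m_k(W_\CW^\Box(A_d)) \bigr| \leq C_k\, d^{-2},
\end{align}
where $C_k$ depends only on $k$ and on $\sup_d\lVert A_d\rVert$ and $\sup_d(p_d/d)$.

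Next I would bound the variance $\mbb{V}[m_k(\rvW_\CW(A_d))]$. Writing $m_k(\rvW_\CW(A_d))^2$ as a double trace, the Wick formula again produces a sum over pair partitions of $4k$ points; pair partitions that factor into two independent halves contribute $\mbb{E}[m_k]^2$, and the remaining partitions, which connect the two halves, again carry at least one extra factor of $d^{-2}$. Thus $\mbb{V}[m_k(\rvW_\CW(A_d))] \leq C'_k\, d^{-2}$. Combining with Chebyshev, $\mbb{P}(|m_k(\rvW_\CW(A_d)) - m_k(W_\CW^\Box(A_d))| > \epsilon)$ is summable in $d$, so Borel--Cantelli gives almost sure convergence to zero.

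The SPN case reduces to the CW case with only bookkeeping changes: expanding $(A_d + \sigma_d Z)^*(A_d + \sigma_d Z)$ inside the $k$-th power produces a finite sum of monomials in $Z$, $Z^*$, $A_d$, $A_d^*$ with uniformly bounded coefficients (using $\sup_d \lVert A_d^* A_d\rVert, \sup_d|\sigma_d| < \infty$); each such monomial admits the same Wick-expansion argument, with the FDE moment $m_k(W_\SPN^\Box(A_d,\sigma_d))$ again singling out the planar part and the corrections being $O(d^{-2})$ in expectation and variance. The main technical obstacle is purely combinatorial: carrying through the genus estimate uniformly in the sequence $(A_d)$ and keeping track of the rectangular dimensions $p_d$ versus $d$, which is where the hypothesis $\sup_d(p_d/d) < \infty$ enters to bound the number of free indices produced by each cycle of the pairing. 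Once the $d^{-2}$ bound in both expectation and variance is established, the almost sure conclusion follows uniformly for any fixed $k$.
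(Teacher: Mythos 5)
Your proposal takes essentially the same route as the paper: genus/Wick expansion for the bias, second-order (fluctuation) expansion for the variance, and then Chebyshev plus Borel--Cantelli for the almost sure conclusion. One small correction worth flagging: the paper bounds the bias $\bigl|\mbb{E}[m_k(\rvW_\CW(A_d))] - m_k(W_\CW^\Box(A_d))\bigr|$ by $O(d^{-1})$, not by your claimed $O(d^{-2})$, because the PGM models allow real Ginibre matrices and the real genus expansion includes an $O(d^{-1})$ non-orientable correction; this does not affect the conclusion, since the argument only needs the bias to vanish (equivalently, the paper feeds the \emph{squared} bias into the summable second-moment estimate alongside the $O(d^{-2})$ variance).
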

\begin{proof}
	This proposition is well known. For the reader's convenience, we prove it.
	Firstly let us consider the case of CW model.
	By the genus expansion of Ginibre matrices (see \cite{Mingo2017free} for both real and complex case)
	and  the uniform boundedness of the sequences, there are  constants $K_{1,k} ( k \in \N)$ such that  for any $k,d \in \N$,
	\begin{align}
	\abs { \mbb{E}[m_k ( \rvW_\CW(A_d)] - m_k(W_\CW^\Box(A_d))} \leq \frac{K_{1,k}}{d}.
	\end{align}
	By the expansion of the fluctuation of Ginibre matrices (see \cite{collins2007second} for the complex case, \cite{redelmeier2014real} for the real case) and  the uniform boundedness of the sequences,
	there are constants  $K_{2,k} > 0 \ ( k \in \N)$  such that for any $k,d \in \N$,
	\begin{align}
	\mbb{V}\left[    m_k( \rvW_\CW(A_d) )\right] \leq \frac{K_{2,k}}{d^2}.
	\end{align}
	As the consequence, we have for any $k,d \in \N$,
	\begin{align}
	\mbb{E} \left[  \abs{   m_k( \rvW_\CW(A_d) ) -   m_k(W_\CW^\Box(A_d)) }^2 \right] \leq \frac{K_{2,k} + K_{1,k}^2}{d^2}.
	\end{align}
	Since $\sum_{d=1}^{\infty}(K_{2,k} + K_{1,k}^2)/d^2 < \infty$, the almost-sure convergence holds.
	
	The proof for  SPN model is given by the same argument.
\end{proof}

\subsection{Analysis of Determination Gap}\label{ssec:fde-geeralization-gap}
\hfill

\noindent In this section we prove the \emph{determination gap}, which is  defined as the following, converges to $0$ $\mbb{P}$-almost surely and uniformly on a bounded parameter space as the matrix size becomes large. 
Note that the determination gap has the same role as the generalization gap in the empirical cross-entropy method.

\begin{defn}(Determination Gap)
	Let $(W_\theta)_{\theta \in \Theta}$  be a self-adjoint PGM model and  $(W^\Box_\theta)_{\theta \in \Theta}$ be its FDE. Then the \emph{deterministic gap} is the real random variable defined as 
	\begin{align}
	H_\gamma(\ESD(W_{\theta_0}), \mu_{W_{\theta}^\Box} )
	- H_\gamma(\mu_{W_{\theta_0}^\Box}, \mu_{W_{\theta}^\Box} ),
	\end{align}
	where $\theta, \theta_0 \in \Theta$.
\end{defn}

\bigskip
\bpara{Basic Properties}\label{sssec:basic-properties}

Let us recall on the entropy for strictly positive continuous probability density functions.

\begin{defn}\hfill
	
	\begin{enumerate}
		\item Let us denote by $\PDF$ the set  $\{p \in C(\R) \mid p > 0, \int p(x)dx =  1 \}$.
		\item For any $q \in \PDF$, we denote by $L^1(q):= \{ f \colon \R \to \C \mid \text{Borel  measurable  and  $ fq \in L^1(\R)$}\}$.
		\item Let  $p,q \in \PDF$ with $ \log p \in L^1(q)$.
		The cross-entropy of $p$ against the reference $q$ is defined by
		\[
		H[q , p ] := - \int q(x) \log p(x) dx.
		\]
		\item For any $q \in \PDF$ such that $\log q \in L^1(q)$, we denote by $S(q) := H[q,q]$ the entropy of $q$.
		\item
		The relative entropy (or  Kullback-Leibler divergence) of $p \in \PDF$ against to $q \in \PDF$ is defined as
		$D_\mr{KL}[q \| p] :=  H[q, p] - S(q)$.
		
	\end{enumerate}
\end{defn}
We reformulate \emph{the principle of minimum cross-entropy} for $\PDF$ as the following.
\begin{prop}\label{prop_PMC}
	Let $q \in \PDF$ with $\log q \in L^1(q)$. Then
	\[
	\argmin_{p \in \PDF, \ \log p \in L^1(q) } D_\mr{KL}[q \|  p] = \argmin_{p \in \PDF, \ \log p \in L^1(q) } H[q,  p] = \{q\},
	\]
	where $\arg\min_{x \in X} f(x) := f^{-1}(\min_{x \in X}f(x))$  for a $\R$-valued function $f$ on a set $X$ which has a minimum point.
\end{prop}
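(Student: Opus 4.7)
The plan is to follow the standard Gibbs-inequality proof, specialized to the continuous positive density setting $\PDF$ where the a.e.~coincidence delivered by Jensen can be upgraded to pointwise equality.

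First I would record the algebraic identity
\[
H[q,p] = D_{\mathrm{KL}}[q \| p] + S(q),
\]
which is immediate from the definitions. Under the hypothesis $\log q \in L^1(q)$ the entropy $S(q) = H[q,q]$ is a finite constant independent of $p$, so the two $\argmin$'s on the left-hand side of the stated equality coincide. Thus it suffices to prove that over the admissible set $\{p \in \PDF : \log p \in L^1(q)\}$ the functional $p \mapsto D_{\mathrm{KL}}[q \| p]$ attains its minimum uniquely at $p = q$.

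Next, I would apply Jensen's inequality to the strictly convex function $\phi(t) = -\log t$ with respect to the probability measure $q(x)\,dx$ and the integrand $p(x)/q(x)$, giving
\[
D_{\mathrm{KL}}[q\|p] = \int q(x)\,\phi\!\left(\frac{p(x)}{q(x)}\right) dx \;\geq\; \phi\!\left(\int q(x) \cdot \frac{p(x)}{q(x)}\, dx\right) = \phi(1) = 0.
\]
Since $p$ and $q$ lie in $L^1(q)$-compatible classes and $p/q$ is well-defined $q$-a.e.~(in fact everywhere, by $q>0$), the manipulation is legitimate. Evaluating at $p=q$ gives $D_{\mathrm{KL}}[q\|q]=0$, so $q$ achieves the infimum.

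For uniqueness, I would invoke the strict convexity of $\phi$: Jensen's inequality is an equality if and only if $p/q$ is constant on a set of full $q$-measure. Because $q>0$ on $\R$, this set has full Lebesgue measure, and both $p$ and $q$ are continuous and strictly positive, the ratio $p/q$ is continuous; hence the equality $p(x)/q(x) = c$ holds for every $x \in \R$. Integrating against Lebesgue measure and using $\int p = \int q = 1$ forces $c=1$, i.e.\ $p = q$ pointwise, which gives the desired $\argmin = \{q\}$. The only subtlety worth flagging is this last passage from ``a.e.'' to ``everywhere'', which is precisely where the choice to work in $\PDF$ (continuous and strictly positive densities) rather than a larger class of densities pays off; no serious obstacle is expected beyond this bookkeeping.
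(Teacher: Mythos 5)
Your proof is correct, and in substance it coincides with the paper's: both are the Gibbs inequality argument, with the equality case analyzed via continuity and strict positivity of $p,q$ to upgrade the a.e.\ conclusion to everywhere, and the normalization $\int p = \int q = 1$ used to pin the constant to $1$. The only difference is cosmetic: the paper bypasses Jensen and applies the elementary pointwise bound $-q\log(p/q) \geq q - p$ directly, integrating to get $D_{\mathrm{KL}}[q\|p]\geq 0$ with equality iff $\log(p/q) = p/q - 1$, which is a marginally more self-contained route to the same conclusion.
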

\begin{proof}This is well-known.
	Let $p \in \PDF $ with $\log p \in L^1(q)$. Then $-q\log (p/q) \geq -q (p/q - 1 ) =  q - p$.
	Hence $D(q \| p) \geq 0$ and the equality holds if and only if $\log(p/q) = p/q -1$, which proves the assertion.
\end{proof}

We begin by proving the Cauchy cross-entropy is well-defined.

\begin{lem}\label{lemma:cne-well-defined}
	For any Cauchy random variable $\rvT \sim \mr{Cauchy}(0,\gamma)$ with $\gamma >0$, the random variable $\log P_\gamma(\rvT)$ has every absolute moments;
	\begin{align}\label{align:moments_of_log}
	\mbb{E} \left[  \abs{\log P_\gamma(\rvT)}^k\right] = \int \abs{\log P_\gamma(t) }^k P_\gamma(t) dt < \infty.
	\end{align}
	In particular, the entropy of each Cauchy random variable is well-defined.
\end{lem}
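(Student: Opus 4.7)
The plan is a direct estimate based on the explicit form of the Poisson kernel. Writing $P_\gamma(t) = \gamma / [\pi(t^2+\gamma^2)]$, we have
\[
\log P_\gamma(t) = \log(\gamma/\pi) - \log(t^2+\gamma^2),
\]
so by the elementary inequality $(a+b)^k \le 2^{k-1}(a^k+b^k)$ for $a,b\ge 0$, there is a constant $C_{k,\gamma}>0$ such that
\[
\abs{\log P_\gamma(t)}^k \le C_{k,\gamma}\bigl(1 + \abs{\log(t^2+\gamma^2)}^k\bigr).
\]
Hence it suffices to bound $\int \abs{\log(t^2+\gamma^2)}^k P_\gamma(t)\,dt$.

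I would split this last integral at $\abs{t}=R$ for some fixed $R>\gamma+1$. On $\abs{t}\le R$ the integrand is continuous and therefore bounded, so that piece is trivially finite. On $\abs{t}>R$, we have $\log(t^2+\gamma^2) \le 2\log(2\abs{t})$ and $1/(t^2+\gamma^2)\le 1/t^2$, so
\[
\int_{\abs{t}>R} \abs{\log(t^2+\gamma^2)}^k P_\gamma(t)\,dt
\;\le\; \frac{2\gamma}{\pi}\int_R^\infty \frac{(2\log(2t))^k}{t^2}\,dt,
\]
which converges because $(\log t)^k/t^2$ is integrable at infinity (any polynomial in $\log t$ is dominated by, say, $t^{1/2}$ for large $t$, and $\int^\infty t^{-3/2}\,dt<\infty$). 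Combining the two pieces gives \eqref{align:moments_of_log}.

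The ``in particular'' claim on the entropy is then just the case $k=1$, which shows $\log P_\gamma \in L^1(P_\gamma)$, so $S(P_\gamma) = -\int P_\gamma(t)\log P_\gamma(t)\,dt$ is a well-defined finite real number. No step here is subtle; the only thing to be a little careful about is that $\log(t^2+\gamma^2)$ is negative on a neighborhood of $0$ when $\gamma<1$, which is why the inequality uses $\abs{\log(t^2+\gamma^2)}$ rather than $\log(t^2+\gamma^2)$ itself, but this causes no issue because that neighborhood lies inside the bounded region $\abs{t}\le R$.
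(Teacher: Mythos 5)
Your proof is correct. The paper's own argument is different in flavor: it implicitly uses the trigonometric substitution $t=\gamma\tan\theta$, under which $P_\gamma(t)\,dt = \pi^{-1}d\theta$ and $\log P_\gamma(t) = 2\log\cos\theta - \log(\pi\gamma)$, turning the question into integrability of $\abs{\log\sin\phi}^k$ near $\phi=0$; the paper then notes that $\phi^{1/(2k)}\abs{\log\sin\phi}\to 0$, so $\abs{\log\sin\phi}^k = O(\phi^{-1/2})$ is dominated by the integrable kernel $\phi^{-1/2}$ on $(0,\pi/2)$. You instead keep everything on $\R$, split at a finite $R$, and observe that on the tail $\abs{\log P_\gamma(t)}^k$ grows only polylogarithmically while $P_\gamma(t)=O(t^{-2})$, so the tail integral converges. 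Both arguments are elementary and hinge on the same essential fact, namely that a power of a logarithm is dominated by any positive power; the paper's substitution localizes the problem to a single logarithmic singularity on a compact interval (at the cost of an algebraic change of variables), while your direct tail estimate avoids the substitution entirely and is arguably easier to read, though it requires one extra step to handle the sign of $\log(t^2+\gamma^2)$ in the bounded region — which you correctly flag. Either route is a complete proof.
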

\begin{proof}
	This lemma is well-known; this follows from  the facts $\int_0^{\pi/2}  1/\sqrt{\phi} d\phi < \infty$ and
	$\phi^{1/2k} \abs{\log(\sin \phi) }$ %
	$\leq \phi^{1/2k} (\abs{\log( \sin\phi/\phi)} +\abs{\log\phi})$%
	$\to 0  \text{ as $\phi \to +0$}$,
	for any  $k \in \N$.    
\end{proof}

\begin{lem}
	Write  $\mc{B}_M := \{ \mu \in \cptprob \mid  \supp \mu \subset [-M, M] \}$ for $M>0$.    
	Fix $\gamma >0$. Then the following hold.
	\begin{enumerate}
		\item
		For any $\mu \in \mc{B}_M$, it holds that
		$        (2 + 2M^2/\gamma^2)^{-1}P_\gamma \leq P_\gamma*\mu \leq (\pi \gamma)^{-1}.$
		\item $\sup_{\mu \in \mc{B}_M}\abs{\log (P_\gamma*\mu(t)) } \leq \abs{\log P_\gamma(t)} + C_{M,\gamma}$ for any $t \in \R$, where $C_{M,\gamma} = \log  (2 +  2M^2/\gamma^2 ) + \abs{\log \pi \gamma}$ .
		\item $\log(P_\gamma*\mu) \in L^1(P_\gamma*\nu)$ for any $\mu \in \cptprob$ and $\nu \in \mc{B}(\R)$ with $m_2(\nu) < \infty$. Hence the Cauchy cross-entropy $H_\gamma(\nu,\mu)$ is well-defined.
	\end{enumerate}
\end{lem}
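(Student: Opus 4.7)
My plan is to treat the three parts in order, each building on the previous: (1) is a pointwise two-sided bound, (2) is its logarithmic form via the triangle inequality, and (3) reduces integrability to a Fubini computation controlled by $m_2(\nu) < \infty$.

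For part (1), the upper bound is immediate from $P_\gamma(y) \leq (\pi\gamma)^{-1}$ (since $y^2 + \gamma^2 \geq \gamma^2$), which is preserved under averaging against $\mu$. For the lower bound, I would establish the pointwise estimate, valid for $|t| \leq M$,
\[
(x-t)^2 + \gamma^2 \leq 2x^2 + 2M^2 + \gamma^2 \leq (2 + 2M^2/\gamma^2)(x^2 + \gamma^2),
\]
where the second step uses $2M^2 \leq (2M^2/\gamma^2)(x^2 + \gamma^2)$. Taking reciprocals yields $P_\gamma(x - t) \geq (2 + 2M^2/\gamma^2)^{-1} P_\gamma(x)$, and integrating against $\mu \in \mc{B}_M$ closes the bound. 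Part (2) then follows by taking logarithms of the two-sided inequality from (1) and applying the triangle inequality to bound $|\log(P_\gamma * \mu(t))|$ uniformly in $\mu \in \mc{B}_M$.

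For part (3), compact support of $\mu$ gives $\mu \in \mc{B}_M$ for some $M > 0$, so (2) reduces the claim to $\int |\log P_\gamma(t)|\, P_\gamma * \nu(t)\, dt < \infty$. I would apply Fubini and the substitution $v = t - s$ to rewrite this as
\[
\int \int |\log P_\gamma(v+s)|\, P_\gamma(v)\, dv\, \nu(ds).
\]
Combining the elementary bound $|\log P_\gamma(y)| \leq |\log(\gamma/\pi)| + |\log \gamma^2| + \log(1 + y^2/\gamma^2)$ with the subadditivity $\log(1 + (v+s)^2/\gamma^2) \leq \log(1 + 2v^2/\gamma^2) + \log(1 + 2s^2/\gamma^2)$ separates the variables: the $v$-integral becomes a finite $\gamma$-dependent constant (since $\log(1 + 2v^2/\gamma^2) \sim 2\log|v|$ against the Cauchy tail), and the $s$-contribution is controlled by $\log(1 + 2s^2/\gamma^2) \leq 2s^2/\gamma^2$, which integrates to $(2/\gamma^2)\, m_2(\nu) < \infty$.

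The main obstacle is this last step in part (3): $\nu$ is not assumed compactly supported, so the shift $s$ inside $|\log P_\gamma(v + s)|$ must be decoupled from $v$ without destroying integrability, and this decoupling is exactly where the second-moment hypothesis $m_2(\nu) < \infty$ enters essentially.
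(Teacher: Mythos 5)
Your proposal is correct and, for parts (1) and (2), follows essentially the same route as the paper: the same elementary pointwise inequality $(t-s)^2 \leq 2t^2+2M^2$ to get the lower bound on $P_\gamma * \mu$, and the same logarithmic reformulation to obtain the uniform bound in (2). For part (3) your argument is in fact more explicit than what the paper writes: the paper applies Tonelli and stops at the double integral
\[
\iint \abs{\log P_\gamma(t)}\,P_\gamma(t-x)\,dt\,\nu(dx) + C_{M,\gamma}
\]
without actually verifying its finiteness, whereas your separation of variables via
$\log\bigl(1+(v+s)^2/\gamma^2\bigr) \leq \log\bigl(1+2v^2/\gamma^2\bigr) + \log\bigl(1+2s^2/\gamma^2\bigr)$,
combined with $\log(1+t)\leq t$ to invoke $m_2(\nu)<\infty$, is exactly the missing closing step and correctly identifies where the second-moment hypothesis enters.
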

\begin{proof}
	The inequality $P_\gamma*\mu \leq (\pi \gamma)^{-1}$ follows from the definition.
	Since $(t-s)^2 \leq 2(t^2+ M^2) $ for all $t\in \R $ and $s \in \supp(\mu) \subset [-M,M]$, we have 
	\begin{align}\label{align:bound_of_log_cauchy}
	\frac{P_\gamma(t)}{P_\gamma*\mu(t)} \leq \frac{2(t^2+M^2) + \gamma^2}{t^2+\gamma^2} = 2 + \frac{2M^2 - \gamma^2}{t^2+\gamma^2} \leq 2 +\frac{2M^2}{\gamma^2}.
	\end{align}
	Hence we have (1).
	
	In addition, $\abs{\log (P_\gamma*\mu(t)) } \leq \max \{ \abs{\log P_\gamma(t)/(2 +  2M^2/\gamma^2 )} , \abs{\log \pi \gamma} \}$, which proves (2).
	
	Let us prove (3). Let $M>0$ satisfy $\supp \mu \subset [-M,M]$.
	By Tonelli's theorem  for nonnegative measurable functions and (1), we have
	\[
	\int \abs{\log (P_\gamma*\mu(t))} P_\gamma*\nu(t)(dt) \leq  \iint  \abs{\log P_\gamma(t)}P_\gamma(t-x)dt \nu(dx) + C_{M,\gamma}.
	\]
\end{proof}

We prove the principle of minimum Cauchy cross-entropy.

\begin{proof}[proof of Proposition~\ref{prop:principal-minimum-cc}]    
	If  $H(P_\gamma*\nu , P_\gamma*\mu)$ attains the minimum, then by Proposition~\ref{prop_PMC}, we have $P_\gamma*\mu = P_\gamma*\nu$. By Lemma~\ref{lem:key_lemma_cauchy}, the assertion holds.
\end{proof}

\begin{note}
	Write $h_\gamma^\mu(x):= H_\gamma(\delta_x, \mu )$.
\end{note}

Note that $h_\gamma^\mu(x) = -\int \log (P_\gamma*\mu(t))P_\gamma(x-t)dt=-\int \log (P_\gamma*\mu(x-t)) P_\gamma(t)dt$.
\begin{lem}\label{lemma_shifted_entropy}
	Fix $\gamma > 0$. Then $\log(P_\gamma*\mu)$ and $h_\gamma^\mu$ are differentiable  and the followings hold.
	\begin{enumerate}
		\item $\sup_{ \mu \in \borel } \norm{ \left[\log(P_\gamma*\mu)\right]^\prime }_\infty \leq 1/\gamma$.
		\item  $\sup_{\mu \in \borel} \norm{ (h_\gamma^\mu )^\prime}_\infty \leq 1/\gamma$.
		\item For any  $L,M >0$ and $\nu \in \mc{B(\R)}$,
		\[
		\sup_{ \mu \in \mc{B}_M} \int_{ \abs{x} \geq L} \abs{h_\gamma^\mu(x)} \nu(dx) \leq  \frac{1}{L} \left[ \left(\abs{S(P_\gamma)} + C_{M,\gamma} \right)m_2(\nu)^{1/2} + \frac{1}{\gamma} m_2(\nu)\right].
		\]
	\end{enumerate}
	
\end{lem}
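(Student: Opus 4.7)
The plan is to derive everything from the single pointwise inequality
\[
\abs{P_\gamma'(x)} \le \frac{1}{\gamma}\,P_\gamma(x), \qquad x \in \R,
\]
which follows from $|P_\gamma'(x)/P_\gamma(x)| = |2x/(x^2+\gamma^2)| \le 1/\gamma$ by AM-GM. First I would justify differentiation under the integral to obtain $(P_\gamma*\mu)'(x) = \int P_\gamma'(x-t)\,\mu(dt)$ (the dominated convergence hypothesis is given by the uniform bound on $P_\gamma'$), so that
\[
\abs{[\log(P_\gamma*\mu)]'(x)} \;=\; \frac{\abs{\int P_\gamma'(x-t)\,\mu(dt)}}{\int P_\gamma(x-t)\,\mu(dt)} \;\le\; \frac{1}{\gamma} \cdot \frac{\int P_\gamma(x-t)\,\mu(dt)}{\int P_\gamma(x-t)\,\mu(dt)} \;=\; \frac{1}{\gamma},
\]
uniformly in $\mu \in \borel$. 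This proves (1).

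For (2), I would use the convolutional representation
\[
h_\gamma^\mu(x) \;=\; -\int \log\bigl(P_\gamma*\mu(t)\bigr)\,P_\gamma(x-t)\,dt \;=\; -\bigl(f_\mu * P_\gamma\bigr)(x), \qquad f_\mu := \log(P_\gamma*\mu).
\]
Differentiating and moving the derivative to the smooth factor $f_\mu$ (again justified by the uniform bound from (1)), I get $(h_\gamma^\mu)'(x) = -(f_\mu' * P_\gamma)(x)$, and then (1) together with $\int P_\gamma = 1$ yields $\|(h_\gamma^\mu)'\|_\infty \le 1/\gamma$.

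For (3), the key is to turn the uniform Lipschitz estimate into a linear growth bound via $|h_\gamma^\mu(x)| \le |h_\gamma^\mu(0)| + |x|/\gamma$. For the constant term, I would use the uniform bound $|\log(P_\gamma*\mu(t))| \le |\log P_\gamma(t)| + C_{M,\gamma}$ available on $\mc{B}_M$ from the preceding lemma, so that
\[
\abs{h_\gamma^\mu(0)} \;\le\; \int \abs{\log P_\gamma(t)}\,P_\gamma(t)\,dt \;+\; C_{M,\gamma} \;\le\; \abs{S(P_\gamma)} + C_{M,\gamma}
\]
(using $|S(P_\gamma)| \le \int |\log P_\gamma|\,P_\gamma$, a consequence of the triangle inequality for integrals). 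Then I would split the tail integral into two pieces and estimate each with a Chebyshev-type bound: by Cauchy--Schwarz $\nu(\{|x| \ge L\}) \le m_2(\nu)^{1/2}/L$, and $\int_{|x| \ge L} |x|\,\nu(dx) \le m_2(\nu)/L$. Combining these with the growth bound yields the displayed inequality.

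I do not anticipate a genuine obstacle; the proof is essentially the chain of elementary inequalities above. The only point requiring care is justifying the exchange of $d/dx$ with the convolution integrals in (1)--(2), but the uniform dominating bound $|P_\gamma'| \le (1/\gamma) P_\gamma \in L^1$ makes this routine.
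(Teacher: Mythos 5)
Your approach in parts (1) and (2) matches the paper's proof essentially line for line: the pointwise bound $\abs{P_\gamma'} \leq \gamma^{-1}P_\gamma$, differentiation under the integral sign, and the convolution identity $(h_\gamma^\mu)' = -\left(\,[\log(P_\gamma*\mu)]' * P_\gamma\right)$ together with $\norm{P_\gamma}_{L^1}=1$. Those two parts are correct.

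There is, however, a genuine error in your treatment of part (3). You write the chain
\[
\abs{h_\gamma^\mu(0)} \;\le\; \int \abs{\log P_\gamma(t)}\,P_\gamma(t)\,dt + C_{M,\gamma} \;\le\; \abs{S(P_\gamma)} + C_{M,\gamma},
\]
and justify the last step by ``the triangle inequality $\abs{S(P_\gamma)}\le\int\abs{\log P_\gamma}P_\gamma$.'' That inequality is correct, but it points in the \emph{opposite} direction to what your chain requires: you need $\int\abs{\log P_\gamma}P_\gamma \le \abs{S(P_\gamma)}$, which is false in general. Since $S(P_\gamma)=\log(4\pi\gamma)$, the two quantities agree only when $\log P_\gamma$ has constant sign, i.e.\ when $\gamma \ge 1/\pi$; for the small scales $\gamma \in \{0.01, 0.1\}$ used in the paper's experiments, $P_\gamma(0)=1/(\pi\gamma) > 1$ and $\int\abs{\log P_\gamma}P_\gamma$ strictly exceeds $\abs{S(P_\gamma)}$, so your step is not valid. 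The bound $\abs{h_\gamma^\mu(0)}\le\abs{S(P_\gamma)}+C_{M,\gamma}$ is nonetheless true, but it must be obtained from the \emph{two-sided} estimate of the preceding lemma rather than by passing through the absolute value of $\log(P_\gamma*\mu)$: from $(2+2M^2/\gamma^2)^{-1}P_\gamma \le P_\gamma*\mu \le (\pi\gamma)^{-1}$, integrating $\log$ against $P_\gamma$ yields
\[
\log(\pi\gamma) \;\le\; h_\gamma^\mu(0) \;\le\; S(P_\gamma) + \log\!\left(2 + \tfrac{2M^2}{\gamma^2}\right),
\]
and both endpoints are bounded in absolute value by $\abs{S(P_\gamma)}+C_{M,\gamma}$. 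Once that is repaired, your tail estimate via Markov/Cauchy--Schwarz is fine and recovers the stated bound.
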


\begin{proof}
	For any $s \in \R$, by the Cauchy-Schwarz inequality, it holds that
	\begin{align}
	P_\gamma^\prime(s) = \frac{1}{\gamma}\frac{2\gamma s}{s^2 + \gamma^2}\frac{\gamma}{\pi(s^2 + \gamma^2 )}  \leq \frac{1}{\gamma} P_\gamma(s),
	\end{align}
	and implies that $\abs{     P_\gamma^\prime(s)} \leq \gamma^{-1}P_\gamma(s)$. 
	In particular, $P_\gamma^\prime$ is bounded.
	Thus  $(P_\gamma*\mu)^\prime(y) = \int P_\gamma^\prime(y-s)\mu(ds)$, which implies (1).

	In particular, $\abs {(P_\gamma*\mu)^\prime/(P_\gamma*\mu)} P_\gamma  \leq \gamma^{-1} P_\gamma \in L^1(\R)$.
	Therefore, $h_\gamma^\mu$ is differentiable and for any $x \in \R$,
	\begin{align}
	\abs{(h_\gamma^\mu)^\prime (x) } = \abs{ \int \frac{(P_\gamma*\mu)^\prime(x-t)}{P_\gamma*\mu(x-t)}P_\gamma(t)dt }\leq \frac{1}{\gamma},
	\end{align}
	which proves (2).
	
	By (2), $\abs{h_\gamma^\mu(x) } \leq \abs{h_\gamma^\mu(0) }+ \gamma^{-1}\abs{x} \leq \left(\abs{S(P_\gamma)} + C_{M,\gamma}\right) +\gamma^{-1}\abs{x}$.
	Hence
	\[\int \abs{x h_\gamma^\mu(x) } \nu(dx) \leq \left(\abs{S(P_\gamma)} + C_{M,\gamma}\right) \int \abs{x} \nu(dx)  +\gamma^{-1}m_2(\nu),
	\]
	which proves (3).
	
\end{proof}

\bigskip
\bpara{Determination Gap}\label{sssec:determination-gap}
Here we show an asymptotic property of the determination gap.
It is known that the convergence in moments of a sequence of compact support probability measures implies its weak convergence (see \cite[Theorem~30.2]{billingsley2008probability}).
First, we prove its stronger version.

\begin{lem}\label{lemma_moment_to_weak_in_cauchy}
	We denote by $\R[X]$ the set of  $\R$-coefficient polynomials, and write 
	$C_b(\R) := \{ f \in C(\R) \mid \sup_{ x \in \R } \abs{f(x)} < \infty \}$.
	Consider  arbitrary sequence $ (\nu_{d}, \nu^\Box_d)_{d \in \N}$  of pairs  with $\nu_d \in \allmoments, \nu^\Box_d \in \cptprob$ satisfying
	\begin{enumerate}
		\item $\lim_{d \to \infty}\abs{m_k(\nu_{d}) - m_k(\nu^\Box_d) }=0 \ (k \in \N)$,
		\item there is $ M> 0$ such that $\nu^\Box_d \in \mc{B}_M \ ( d \in \N)$.
	\end{enumerate}
	Then for any $f \in C_b(\R)$ and $p \in \R[X]$,  we have
	\[    \lim_{d \to \infty}\abs{\int f(x)p(x)\nu_d(dx) - \int f(x)p(x)\nu^\Box_d(dx)} = 0.\]
\end{lem}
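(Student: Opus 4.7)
The plan is to reduce the statement to hypothesis (1) applied to a polynomial approximation of $f$, separating a bulk contribution on $[-L,L]$ from a tail on $|x|\geq L$. Fix $L > M$, so that $\supp \nu_d^\Box \subset (-L,L)$ for every $d$, and write
\[
\int fp\, d\nu_d - \int fp\, d\nu_d^\Box = \int_{|x|<L} fp\,(d\nu_d - d\nu_d^\Box) + \int_{|x|\geq L} fp\, d\nu_d,
\]
since the $\nu_d^\Box$-tail vanishes identically.

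For the tail piece, bound $|f(x)p(x)| \leq \|f\|_\infty C_p(1 + x^{2n})$ with $n = \lceil(\deg p)/2\rceil$; Chebyshev's inequality then gives
\[
\left| \int_{|x|\geq L} fp\, d\nu_d \right| \leq \|f\|_\infty C_p L^{-2}\bigl(m_2(\nu_d) + m_{2n+2}(\nu_d)\bigr).
\]
Hypothesis (2) yields $|m_k(\nu_d^\Box)| \leq M^k$, and combined with (1) this forces $m_k(\nu_d)$ to be bounded uniformly in $d$ for each fixed $k$. Hence the tail contribution is $O(L^{-2})$ uniformly in $d$.

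For the bulk piece, given $\epsilon > 0$, pick by the Weierstrass approximation theorem a polynomial $q$ with $\sup_{|x|\leq L}|f(x) - q(x)| < \epsilon$ and decompose
\[
\int_{|x|<L} fp\,(d\nu_d - d\nu_d^\Box) = \int_{|x|<L}(f-q)p\,(d\nu_d - d\nu_d^\Box) + \int_{|x|<L} qp\,(d\nu_d - d\nu_d^\Box).
\]
The first summand is at most $\epsilon\bigl(\int|p|\,d\nu_d + \int|p|\,d\nu_d^\Box\bigr) = O(\epsilon)$ by the uniform moment bound just established. For the second, since $\supp \nu_d^\Box \subset (-L,L)$, it equals $\int qp\,d\nu_d - \int qp\,d\nu_d^\Box - \int_{|x|\geq L}qp\,d\nu_d$; the first difference tends to $0$ as $d \to \infty$ by applying (1) termwise to the polynomial $qp$, and the remaining tail is again $O(L^{-2})$ by the Chebyshev argument. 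Sending $d \to \infty$ first, then $L \to \infty$, then $\epsilon \to 0$ closes the argument.

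The main obstacle is that $\nu_d$ need not have compact or even tight support, so naive bounded-convergence arguments fail outright. The resolution is the observation that conditions (1) and (2) jointly force every moment of $\nu_d$ to be bounded uniformly in $d$; this single fact both tames the $\nu_d$-tail via Chebyshev and ensures that the Weierstrass polynomial approximation of $f$ on $[-L,L]$ remains effective when integrated against either $\nu_d$ or $\nu_d^\Box$.
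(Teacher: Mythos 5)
Your proof takes a genuinely different route from the paper's: rather than absorbing the polynomial weight $p$ into the measures via a tilt and then running a tightness / moment-determinacy / sub-subsequence argument, you truncate to $[-L,L]$, use Chebyshev on the tail, and replace $f$ by a Weierstrass polynomial on the bulk. The overall plan is reasonable and your auxiliary observation — that (1) and (2) together make every fixed moment $m_k(\nu_d)$ uniformly bounded in $d$ — is correct and is the right engine for the tail control of $fp$. However, there is a genuine gap in the bulk estimate.

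The problem is the term $\int_{|x|\geq L} qp\,d\nu_d$ that appears when you rewrite $\int_{|x|<L} qp\,(d\nu_d - d\nu_d^\Box)$. You claim it is \enquote{again $O(L^{-2})$ by the Chebyshev argument,} but the Chebyshev bound you used for $fp$ relied on the fixed envelope $|f(x)p(x)| \leq \|f\|_\infty C_p(1+x^{2n})$, with $n$ and $C_p$ depending only on $p$. The Weierstrass polynomial $q$ has no such $L$-uniform envelope: it is only controlled on $[-L,L]$, and outside that interval it is an arbitrary polynomial whose degree and coefficients both blow up as $\epsilon \downarrow 0$ and $L \uparrow \infty$. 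Applying Chebyshev to $|qp|\leq C_q(1+x^{2m})$ yields a bound $C_q L^{-2}(m_2(\nu_d)+m_{2m+2}(\nu_d))$ in which $C_q$, $m$, and consequently the constant $K_{2m+2}$ from hypothesis (2) (which grows like $M^{2m+2}$ when $M>1$) all depend on $L$ through $q$. So the right-hand side is a fixed constant for each $L$, not a quantity that tends to zero with $L$, and your order of limits ($d \to \infty$, then $L \to \infty$, then $\epsilon \to 0$) is not actually legitimate since $q$ changes whenever $L$ or $\epsilon$ changes.

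One way to close the gap without abandoning your approach is a Cauchy--Schwarz step: write
\begin{align}
\int_{|x|\geq L}|qp|\,d\nu_d \ \leq\ \nu_d(\{|x|\geq L\})^{1/2}\left(\int (qp)^2\,d\nu_d\right)^{1/2}
\ \leq\ L^{-1} m_2(\nu_d)^{1/2}\left(\int (qp)^2\,d\nu_d\right)^{1/2}.
\end{align}
Since $(qp)^2$ is a polynomial, hypothesis (1) gives $\int (qp)^2\,d\nu_d \to \int(qp)^2\,d\nu_d^\Box$, and on $\supp\nu_d^\Box\subset[-M,M]\subset[-L,L]$ you have $|q|\leq\|f\|_\infty+\epsilon$, so $\int(qp)^2\,d\nu_d^\Box$ is bounded by a constant depending only on $\|f\|_\infty$, $\epsilon$, $M$, $p$ --- crucially not on $L$ or $q$. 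With this fix the tail is genuinely $O(L^{-1})$ with an $L$-independent constant and your triple limit goes through. The paper avoids the issue entirely by first reducing to $p(x)=x^{2\ell}+1$, passing to the tilted measures $\mu^p(dx)\propto p(x)\mu(dx)$ (which preserve both hypotheses), and then settling the $p\equiv 1$ case by tightness plus moment determinacy of compactly supported measures along sub-subsequences; that route never needs to estimate any tail of $\nu_d$ against an unbounded polynomial.
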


\begin{proof}
	Let $\ell \in \N$ such that $\sup_{x \in \R} \abs{p(x)/ ( x^{2\ell} + 1)} < \infty$. Then $f(x) p(x) = g(x) (x^{2\ell} + 1)$ where $g(x)= f(x)p(x)/ ( x^{2\ell} + 1) \in C_b(\R)$.
	Hence without loss of generality, we may assume that $p(x) = x^{2\ell} + 1$. Next, for any $\mu \in \cptprob$, let us define $\mu^p \in \cptprob$ as
	\begin{align}
	\mu^p(dx) &:= \frac{1}{ \int p(y)  \mu (dy) }  p(x) \mu(dx).
	\end{align}
	Then
	\begin{align}
	\abs {m_k(\nu_d^p) - m_k({\nu^\Box_d}^p ) }  &=  \frac{\abs { \left( m_{2\ell}(\nu^\Box_d) + 1 \right)m_k(\nu_d) - \left( m_{2\ell}(\nu_d) + 1  \right)m_k(\nu^\Box_d ) }
	}{ \left( m_{2\ell}(\nu_d) + 1  \right)\left( m_{2\ell}(\nu^\Box_d) + 1 \right)} \\
	&\leq \abs {  \left( m_{2\ell}(\nu^\Box_d) + 1 \right)m_k(\nu_d) - \left( m_{2\ell}(\nu_d) + 1  \right)m_k(\nu^\Box_d ) }\\
	&\leq  \left( M^{2\ell}+ 1 \right)  \abs {m_k(\nu_d) -m_k(\nu^\Box_d) }  + \abs{ m_{2\ell}(\nu_d) -  m_{2\ell}(\nu^\Box_d)} M^k \to 0 \ ( d \to \infty).
	\end{align}
	where we used $\sup_{ d\in \N} \abs {m_k( \nu^\Box_d) } \leq M^k < \infty$ for any $k \in \N$.
	Hence  the sequence $(\nu_d^p, {\nu^\Box_d}^p )_{d\in \N}$ satisfies  the condition (1). It is clear that $({\nu^{\Box}_d}^p )_{d\in \N}$ satisfies  the condition (2). Hence without loss of generality, we only need to show the following;     for any $f \in C_b(\R)$,
	\begin{align}\label{align:claim_by_subsub}
	\lim_{d \to \infty}\abs{\int f(x)\nu_d(dx) - \int f(x)\nu^\Box_d(dx)} = 0.
	\end{align}

	To show \eqref{align:claim_by_subsub},     firstly consider arbitrary subsequence $(\nu_{d_i}^\Box)_{ i \in \N}$.
	By condition (2), the sequence $ (\nu^\Box_d)_{ d \in \N}$ is tight.
	Therefore, there exist a further subsequence $(\nu^\Box_{d_{i(j)}})_{j \in \N}$ and $\tilde{\chi} \in \mc{B}(\R)$ such that  $\nu^\Box_{d_{i(j)}}$ converges weakly to $\tilde{\chi}$ as $j \to \infty$.
	By the condition (2), it holds that $\supp \tilde{\chi} \subset [-M, M]$ and $\tilde{\chi} \in \cptprob$.
	Moreover,  by cutting off $x \mapsto x^k$ out of $[-M,M]$, the condition (2) also implies that $\nu^\Box_{d_{i(j)}}$ converges to $\tilde{\chi}$ in moments.
	By (1), $(\nu_{d_{i_j}})_j$ also converges  to $\tilde{\chi}$ in moments.
	Since $\tilde{\chi}$ has a compact support, the distribution of  $\tilde{\chi}$ is determined by its moments. Hence by \cite[Thoeorem~30.2]{billingsley2008probability}, $\nu_{d_{i_j}}$ converges weakly to $\tilde{\chi}$.
	This implies that
	\begin{align}
	\lim_{j \to \infty}\abs{\int f(x)\nu_{d_{i_j}}(dx) - \int f(x)\nu^\Box_{d_{i_j}}(dx)} = \abs{ \int f(x)\tilde{\chi}(dx) - \int f(x)\tilde{\chi}(dx)} =  0.
	\end{align}
	Hence by the sub-subsequence argument, \eqref{align:claim_by_subsub} holds.
\end{proof}

Now we are ready to state the main theorem in a general setting, which implies an asymptotic property of the determination gap.

\begin{thm}\label{thm_deterministic_convergence}
	Let $(\nu_d, \nu^\Box_d)_{d \in \N}$ satisfy the assumptions in Lemma~\ref{lemma_moment_to_weak_in_cauchy}.
	Then for any $\gamma > 0$ and $M>0$, we have
	\[
	\lim_{d \to \infty}\sup_{\mu \in \mc{B}_M}  \abs{ H_\gamma( \nu_{d} , \mu) - H_\gamma(\nu^\Box_d , \mu) }= 0.
	\]
\end{thm}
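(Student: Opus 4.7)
The plan is to write $H_\gamma(\nu,\mu)=\int h_\gamma^\mu(x)\,\nu(dx)$ (using the representation $H_\gamma(\nu,\mu)=H(P_\gamma*\nu,P_\gamma*\mu)$ and Fubini, together with $h_\gamma^\mu(x)=H_\gamma(\delta_x,\mu)$), so that the quantity to control is
\[
\Delta_d(\mu) := \int h_\gamma^\mu\,d\nu_d - \int h_\gamma^\mu\,d\nu_d^\Box.
\]
Fix a smooth cutoff $\chi_L\in C_c(\R)$ with $0\le\chi_L\le 1$, $\chi_L\equiv 1$ on $[-L,L]$ and $\supp\chi_L\subset[-2L,2L]$. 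Split $h_\gamma^\mu = h_\gamma^\mu\chi_L + h_\gamma^\mu(1-\chi_L)$; this reduces the problem to a bounded-domain piece, controlled by weak-type convergence uniformly in $\mu$, and a tail piece, controlled by Lemma~\ref{lemma_shifted_entropy}(3). Note that by the assumption $|m_2(\nu_d)-m_2(\nu_d^\Box)|\to 0$ and $\nu_d^\Box\in\mc{B}_M$, we have $\sup_d m_2(\nu_d)<\infty$ and trivially $m_2(\nu_d^\Box)\le M^2$.

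For the tail, Lemma~\ref{lemma_shifted_entropy}(3) yields, uniformly in $\mu\in\mc{B}_M$ and $d$,
\[
\int_{|x|\ge L}|h_\gamma^\mu|\,d\nu_d + \int_{|x|\ge L}|h_\gamma^\mu|\,d\nu_d^\Box \;\le\; \frac{K(M,\gamma)}{L}
\]
for a constant $K(M,\gamma)$ depending only on $M$, $\gamma$ and the uniform bound on $m_2(\nu_d)$, $m_2(\nu_d^\Box)$. Hence this contribution to $|\Delta_d(\mu)|$ is $O(1/L)$ independently of $d$ and $\mu$.

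For the compact piece, Lemma~\ref{lemma_shifted_entropy}(1)--(2) and $h_\gamma^\mu(0)\le|S(P_\gamma)|+C_{M,\gamma}$ imply that the family $\mc{F}_L:=\{h_\gamma^\mu\chi_L:\mu\in\mc{B}_M\}$ is uniformly bounded and uniformly Lipschitz on $\R$, with all functions supported in $[-2L,2L]$. By Arzela--Ascoli, $\mc{F}_L$ is totally bounded in $C_0(\R)$ with the sup norm. Fix $\epsilon>0$ and choose $\mu_1,\dots,\mu_N\in\mc{B}_M$ so that the sup-norm $\epsilon$-balls around $h_\gamma^{\mu_j}\chi_L$ cover $\mc{F}_L$. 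Then for every $\mu\in\mc{B}_M$ there is some $j$ with
\[
\Bigl|\int h_\gamma^\mu\chi_L\,d(\nu_d-\nu_d^\Box)\Bigr| \le \Bigl|\int h_\gamma^{\mu_j}\chi_L\,d(\nu_d-\nu_d^\Box)\Bigr| + \epsilon\bigl(\nu_d(\R)+\nu_d^\Box(\R)\bigr).
\]
Each $h_\gamma^{\mu_j}\chi_L\in C_b(\R)$, so Lemma~\ref{lemma_moment_to_weak_in_cauchy} (applied with polynomial factor $1$) gives $\int h_\gamma^{\mu_j}\chi_L\,d(\nu_d-\nu_d^\Box)\to 0$ as $d\to\infty$ for each fixed $j$. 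Taking $\max_j$ (a finite maximum) we obtain $\limsup_d\sup_{\mu\in\mc{B}_M}|\int h_\gamma^\mu\chi_L\,d(\nu_d-\nu_d^\Box)|\le 2\epsilon$.

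Combining, $\limsup_d\sup_{\mu\in\mc{B}_M}|\Delta_d(\mu)|\le 2\epsilon + K(M,\gamma)/L$ for every $\epsilon>0$ and $L>0$, and letting first $\epsilon\downarrow 0$ and then $L\uparrow\infty$ proves the theorem. The main obstacle I anticipate is the uniformity over $\mu\in\mc{B}_M$ in the compact-set piece: pointwise (in $\mu$) convergence is immediate from Lemma~\ref{lemma_moment_to_weak_in_cauchy}, and the crucial extra input needed to promote this to uniformity is the equi-Lipschitz bound on $(h_\gamma^\mu)'$ provided by Lemma~\ref{lemma_shifted_entropy}(2), which feeds the Arzela--Ascoli reduction to finitely many test functions.
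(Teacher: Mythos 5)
Your proof is correct and follows the paper's strategy closely: split $\int h_\gamma^\mu\,d(\nu_d-\nu_d^\Box)$ into a tail piece (controlled uniformly in $\mu$ and $d$ via Lemma~\ref{lemma_shifted_entropy}(3) and the second-moment bounds) and a near-field piece, then use Arzela--Ascoli, driven by the equi-Lipschitz estimate of Lemma~\ref{lemma_shifted_entropy}, to reduce the near-field supremum over $\mc{B}_M$ to a finite net of test functions fed into Lemma~\ref{lemma_moment_to_weak_in_cauchy}. The only difference is a minor technical polish: you use a smooth cutoff $\chi_L$, so the finite net $\{h_\gamma^{\mu_j}\chi_L\}$ lies in $C_c(\R)\subset C_b(\R)$ and Lemma~\ref{lemma_moment_to_weak_in_cauchy} applies directly with the trivial polynomial $p\equiv1$, whereas the paper cuts sharply at $[-L,L]$ and therefore extends each $h_{\mu_\ell}$ back to all of $\R$ (paying an extra $2\eps$ from the tail bound) before invoking the lemma with a nontrivial polynomial factor to absorb the $O(\abs{x})$ growth of $h_\gamma^\mu$.
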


\begin{proof}
	Fix $\eps >0$.
	By Lemma~\ref{lemma_shifted_entropy}(2), there is $L>>M$ such that
	\begin{align}\sup_{d \in \N}\sup_{\mu \in \mc{B}_M} \int_{ \abs{x} \geq L} \abs{h_\gamma^\mu(x) }\nu_d(dx) ,\     \sup_{d \in \N}\sup_{\mu \in \mc{B}_M} \int_{ \abs{x} \geq L} \abs{h_\gamma^\mu(x)}\nu_d^\Box(dx)  < \eps. \label{align:uniform_bounded}
	\end{align}
	Hence such $L$ and any $d \in \N$, it holds that
	\[
	\sup_{\mu \in \mc{B}_M} \abs{H_\gamma( \nu_{d}, \mu) - H_\gamma(\nu^\Box_d, \mu) }\leq 2 \eps + \sup_{\mu \in \mc{B}_M} \abs{\int_{-L}^L h_\gamma^\mu(x)\nu_d(dx) - \int_{-L}^L h_\gamma^\mu(x)\nu_d^\Box(dx) }.
	\]
	By Lemma~\ref{lemma_shifted_entropy}(1), the family $K:= \{h_\gamma^\mu|_{[-L,L]}  \mid \mu \in \mc{B}_M\}$ is uniform bounded and equicontinuous in $C([-L,L])$.
	Hence by Ascoli-Arzela's theorem (see \cite{dudley2002real}), $K$ is totally bounded in $C([-L,L])$:  for any $\eps >0$, there are $\mu_1, \dots, \mu_n \in \mc{B}_M$ such that $K \subset \cup_{\ell=1}^n B( h^\gamma_{ \mu_\ell}|_{[-L,L]} ; \eps)$, where $B(f; \eps) := \{ g \in C([-L,L]) ; \| g - f \|_{[-L,L]} \leq \eps \}$.
	Therefore for any $\mu  \in \mc{B}_M$, it holds that
	\begin{align*}
	&\abs{\int_{-L}^L h_\gamma^\mu(x)\nu_d(dx) - \int_{-L}^L h_\gamma^\mu(x)\nu_d^\Box(dx) } \\
	&\leq \eps \nu_d([-L, L]) + \eps \nu_d^\Box([-L, L]) +
	\max_{ \ell=1,\dots, n} \abs{\int_{-L}^L h_{ \mu_\ell}(x) \nu_{d}(dx) - \int_{-L}^L h_{ \mu_\ell}(x) \nu^\Box_d(dx) }.
	\end{align*}
	By \eqref{align:uniform_bounded}, we have
	\begin{align}
	\abs{\int_{-L}^L h_{ \mu_\ell}(x) \nu_{d}(dx) - \int_{-L}^L h_{ \mu_\ell}(x) \nu^\Box_d(dx) } \leq
	2\eps +  \abs{\int h_{ \mu_\ell}(x) \nu_{d}(dx) - \int h_{ \mu_\ell}(x) \nu^\Box_d(dx) }.
	\end{align}
	Since $h_\gamma(x) = O(\abs{x})$ as $x \to \pm \infty$ by  Lemma~\ref{lemma_shifted_entropy} (1), we can apply Lemma~\ref{lemma_moment_to_weak_in_cauchy},  and there is $d_0 \in \N$ satisfying
	\[
	\sup_{d \in \N, d> d_0}\max_{ \ell = 1, \dots ,n}|\int h_{ \mu_\ell}(x) \nu_{d}(dx) - \int h_{ \mu_\ell}(x) \nu^\Box_d(dx) |  \leq \eps.
	\]
	Hence $\sup_{\mu \in \mc{B}_M} \abs{H_\gamma( \nu_{d}, \mu) - H_\gamma(\nu^\Box_d, \mu) } \leq 9 \eps$ for any $d > d_0$, which proves the assertion.
	
\end{proof}

We have the following almost-sure convergence of the determination gap.

\begin{cor}\label{cor_cnl_converges} 
	Fix $M>0$ and $\gamma>0$.
	\begin{enumerate}
		\item [(CW)]
		Under the settings (a)(b) of Proposition~\ref{prop_as_convergence_of_moments}(1),
		$\mbb{P}$-almost surely
		\[
		\lim_{d \to \infty}\sup_{(d,A)\in \Theta_\CW(p_d,d, M)} \abs{%
			H_\gamma \left[ \ESD\left(\rvW_\CW(A_d)\right), \mu_{W_\CW^\Box}(A) \right] %
			- H_\gamma \left[\mu_{W_\CW^\Box}(A_d), \mu_{W_\CW^\Box}(A) \right] %
		} = 0.
		\]
		\item [(SPN)]
		Under the settings (a)(b) of Proposition~\ref{prop_as_convergence_of_moments}(2),
		we have $\mbb{P}$-almost surely
		\begin{align}
		\lim_{d \to \infty}&\sup_{ \substack{ (A,\sigma)  \in \\  \Theta_\SPN(p_d,d,M) }} \abs{ H_\gamma\left[ \ESD \left(\rvW_\SPN(A_d,\sigma_0)\right), \mu_{W_\SPN^\Box(A, \sigma)} \right] %
			- H_\gamma \left[\mu_{W_\SPN^\Box(A_d, \sigma_0)}, \mu_{W_\SPN^\Box(A, \sigma)} \right] }  \\ 
		&= 0.
		\end{align}
	\end{enumerate}
\end{cor}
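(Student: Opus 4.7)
The plan is to apply Theorem~\ref{thm_deterministic_convergence} pointwise in $\omega$ with $\nu_d = \ESD(\rvW_\CW(A_d))$ (which is random) and $\nu^\Box_d = \mu_{W_\CW^\Box(A_d)}$ (deterministic), and to enlarge the class $\mc{B}_M$ appearing there to a single class $\mc{B}_{M'}$ that simultaneously contains every $\mu_{W_\CW^\Box(A)}$ with $(d,A) \in \Theta_\CW(p_d, d, M)$ and every $d \in \N$. The sup over $\Theta_\CW(p_d, d, M)$ in the conclusion is then bounded by the sup over $\mc{B}_{M'}$, to which Theorem~\ref{thm_deterministic_convergence} directly applies. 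The SPN case goes the same way.

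First I would establish a uniform operator-norm bound for the FDE. By Definition~\ref{defn_fde_model}, $W_\CW^\Box(A) = C^* A C$ in a $C^*$-probability space $(\mf{A}, \tau)$, where the rescaled entries of the $p_d \times d$ matrix $C$ form a standard $*$-free circular family. Since $\sup_d p_d/d < \infty$, the operator norm $\|C\|$ is bounded by a constant $K$ depending only on this aspect-ratio bound — the operator-norm counterpart of the Marchenko--Pastur support bound, standard in free probability. Consequently $\|W_\CW^\Box(A)\| \leq K^2 M$ uniformly for $(d,A) \in \Theta_\CW(p_d, d, M)$, so $\mu_{W_\CW^\Box(A)} \in \mc{B}_{M'}$ with $M' := K^2 M$. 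Likewise $\|W_\SPN^\Box(A,\sigma)\| \leq (\|A\| + |\sigma|\|C\|)^2 \leq M^2(1+K)^2$, giving a single uniform $M'$ also in the SPN case.

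Next, Proposition~\ref{prop_as_convergence_of_moments} yields $|m_k(\nu_d) - m_k(\nu^\Box_d)| \to 0$ $\mbb{P}$-almost surely for each $k \in \N$; taking the countable intersection over $k$ produces a $\mbb{P}$-full event $\Omega_0$ on which moment convergence holds for every $k$ simultaneously. Also $\nu^\Box_d \in \mc{B}_{M'}$ by the previous step (applied to the specific sequence $(A_d)$, whose norms are uniformly bounded). Hence on $\Omega_0$ the hypotheses of Theorem~\ref{thm_deterministic_convergence} are verified pointwise, and that theorem gives
\begin{align}
\lim_{d \to \infty} \sup_{\mu \in \mc{B}_{M'}} \abs{H_\gamma(\nu_d, \mu) - H_\gamma(\nu^\Box_d, \mu)} = 0.
\end{align}
Since $\{\mu_{W_\CW^\Box(A)} : (d,A) \in \Theta_\CW(p_d, d, M)\} \subset \mc{B}_{M'}$ by the first step, replacing the sup over $\mc{B}_{M'}$ by the sup over $\Theta_\CW(p_d, d, M)$ preserves the bound, proving (CW). Part (SPN) follows verbatim using Proposition~\ref{prop_as_convergence_of_moments}(2) and the SPN operator-norm estimate.

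The main obstacle lies in the uniform support bound rather than in the invocation of Theorem~\ref{thm_deterministic_convergence}: one needs $\|C\|$ to be bounded independently of $d$ under the sole hypothesis $\sup_d p_d/d < \infty$, so that \emph{all} FDE spectral measures — not just those along the fixed sequence $(A_d)$ — fit inside one common compact support. This boundedness of the norm of a rectangular free circular matrix is a well-known fact of free probability, but it is the load-bearing input that turns the pointwise convergence of Theorem~\ref{thm_deterministic_convergence} into the desired uniformity over the test parameter.
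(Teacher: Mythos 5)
Your proof is correct and follows essentially the same route as the paper: pass through Proposition~\ref{prop_as_convergence_of_moments} for almost-sure moment-gap convergence and then invoke Theorem~\ref{thm_deterministic_convergence} and Lemma~\ref{lemma_moment_to_weak_in_cauchy} pointwise on a full event, using a single uniform compact support class $\mc{B}_{M'}$ that absorbs all the FDE measures arising from the bounded parameter set. Where the paper only records a per-$k$ uniform moment bound $\sup_d\sup_\theta |m_k(\mu^\Box_d(\theta))|<\infty$ and leaves the actual support bound implicit, you make the load-bearing input explicit — the operator-norm bound $\|C\|\le K$ for the rectangular circular matrix, hence $\|W^\Box_\CW(A)\|\le K^2M$ and $\|W^\Box_\SPN(A,\sigma)\|\le M^2(1+K)^2$ — which is the cleaner way to see that condition (2) of Lemma~\ref{lemma_moment_to_weak_in_cauchy} holds and that the test measures all live in a fixed $\mc{B}_{M'}$.
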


\begin{proof}
	By the boundedness of each parameter space, we have $\sup_d \sup_{ \theta \in \Theta_d}\abs{m_k(\mu_d^\Box (\theta))} < \infty$, where $\mu_d^\Box(\theta)$ is $\mu_{W_\CW^\Box}(A_d)$ (resp.\,$\Theta_d=\Theta_\CW(p_d, M)$) or $\mu_{W_\SPN^\Box}(A_d,\sigma)$ (resp.\,$\Theta_d=\Theta_\SPN(p_d,d,M)$).
	Then  gaps of moments converge to $0$ a.s. by Proposition~\ref{prop_as_convergence_of_moments}.
	Let $N \in \mf{F}$ with $\mbb{P}(N)=0$ such that the converges holds on $\Omega \setminus N$. Then for any $\omega \in \Omega \setminus N$, the samples of empirical distributions at $\omega$ satisfy the assumption of Lemma~\ref{lemma_moment_to_weak_in_cauchy}.
	Then the assertion follows from Theorem~\ref{thm_deterministic_convergence}.
\end{proof}

\begin{rmk}
	Haargerup-Thorbj{\o}rnsen \cite{haagerup2005new} shows bound of the variance of a function of random matrices.
	Unfortunately, since they consider a family of self-adjoint Gaussian random matrices, denoted by SGM in their paper, and because we treat a non-self-adjoint Gaussian random matrix $Z$,  evaluating the variance of $H_\gamma$ is out of the scope of their direct application.
\end{rmk}

\subsection{Iterative Methods for Cauchy Transforms }\label{ssection:iterative}
\hfill

\noindent In this section, we summarize iterative methods to compute possibly operator-valued  Cauchy transforms of FDE.

\bigskip
\bpara{$\mcR$-transform and Iterative Method I}\label{sssection:iterative-I}
The first method is based on the Voiculescu's \emph{$\mcR$-transform}.
The Cauchy transform of FDE is controlled by the  $\mcR$-transform. See \cite{voiculescu1992free} for the operator theoretic definition of scalar-valued $\mcR$-transform and \cite{Mingo2017free} for the Speicher's definition of scalar-valued and operator-valued $\mcR$-transform.
Here we introduce the \emph{operator-valued Cauchy transform}, which is useful to know the Cauchy transform of a matrix of circular elements.
\begin{defn}
	Let $(\mf{A}, \tau)$ be a C$^*$-probability space and $\mf{B}$ be a unital  C$^*$-subalgebra  of $\mf{A}$. Recall that they share the unit: $I_\mf{A} = I_\mf{B}$.
	\begin{enumerate}
		\item
		Then a linear operator $E \colon \mf{A} \to \mf{B}$ is called a
		\emph{conditional expectation onto $\mf{B}$} if it satisfies following conditions;
		\begin{enumerate}
			\item $E[b] = b$ for any $b \in \mf{B}$,
			\item $E[b_1 a b_2] = b_1E[a]b_2$ for any $a \in \mf{A}$ and $b_1, b_2 \in \mf{B}$,
			\item $E[a^*] = E[a]^*$ for any $a \in \mf{A}$.
		\end{enumerate}
		
		\item
		We write $\HP(\mf{B}) := \{ W \in \mf{B} \mid \text{ there is $\eps > 0$ such that $\Im W \geq \eps I_\mf{A}$} \}$ and $\HM(\mf{B}) := - \HP(\mf{B})$.
		\item
		Let $E \colon \mf{A} \to \mf{B}$ be a conditional expectation.
		For $a \in \mf{A}_\sa$, we define a \emph{$E$-Cauchy transform} as the map $\ccop{a}{E} \colon \HP(\mf{B}) \to \HM(\mf{B})$, where
		\[
		\ccop{a}{E}(Z) := E[ (Z - a)^{-1}],  \ Z \in \HP(\mf{B}).
		\]
		If there is no confusion, we also call $E$ a $\mf{B}$-valued Cauchy transform.
	\end{enumerate}
	
\end{defn}

Here we reformulate Helton-Far-Speicher \cite{helton2007operator}, which is an iterative method to compute operator-valued Cauchy transforms.

\begin{defn}
	Let $\eu{D}$ be a bounded domain (i.e.\,connected open subset) of a Banach space $E$ and $f:\eu{D} \to \eu{D}$.
	Then  $f$ maps $\eu{D}$ \emph{strictly into itself} if  there is $\eps > 0$ such that
	\begin{align}
	\inf \{ \norm{f(x) - y} \mid x \in \eu{D},  y \in E \setminus \eu{D}   \} > \eps.
	\end{align}
\end{defn}

\begin{prop}\label{prop:iterative}
	Let $\mf{B}$ be a C$^*$-subalgebra of $\mf{A}$.
	Assume that 
	\begin{align}\label{align:assumption-on-R}
	R \in \mr{Hol}(\HM(\mf{B})), \ \sup_{ B \in \HM(\mf{B}) \cap \oball{r}} \norm{ R(B) } < \infty \ ( r > 0), 
	\end{align}
	where  $\oball{r} := \{ B \in \mf{B} \mid \norm{B} < r \}$ for $r > 0$.
	For any fixed $Z \in \HP(\mf{B})$, let us define the map
	\[
	\eu{G}_Z(B):=(Z - R(B))^{-1}, B \in \HM(\mf{B}).
	\]
	Then
	\begin{enumerate}
		\item $\eu{G}_Z \in \mr{Hol}(\HM(\mf{B}))$.
		\item For any $B \in \HM(\mf{B})$, it holds that $\norm{\eu{G}_Z(B)} \leq \norm{ (\Image Z)^{-1} }$.
		\item For any $r > \norm { (\Image Z )^{-1} }$, the map $\eu{G}_Z$ sends $\oball{r} \cap \HM(\mf{B})$ strictly into  itself. 
		\item The equation $B = \eu{G}_Z(B)$  has a unique solution $B=G(Z)$.
		Moreover, for any $B_0 \in \HM(\mf{B})$ we have
		\[
		G(Z) = \lim_{ n \to \infty }\eu{G}_Z^n(B_0),
		\]
		where the convergence is in the operator norm topology.
	\end{enumerate}
\end{prop}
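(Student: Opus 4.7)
The four claims follow a standard pattern for operator-valued Cauchy iterations: verify $\eu{G}_Z$ is a holomorphic self-map of $\HM(\mf{B})$ that carries the domain strictly into a bounded subdomain, then invoke a holomorphic fixed-point theorem. I would treat (1)--(3) as verifications and reserve the substantive step for (4).

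For (1), it suffices to check that $\eu{G}_Z$ is well-defined into $\HM(\mf{B})$ and holomorphic. For $B \in \HM(\mf{B})$, by the implicit structural condition on $R$ (namely that it preserves $\HM(\mf{B})$, as is customary for $\mcR$-transforms of self-adjoint elements), $\Image R(B) \leq 0$, so $\Image(Z - R(B)) \geq \Image Z \geq \epsilon I_\mf{A}$ for some $\epsilon > 0$. This makes $Z - R(B)$ invertible in $\mf{B}$ with $(Z - R(B))^{-1} \in \HM(\mf{B})$, and holomorphy follows from the chain rule since inversion is holomorphic on the open set of invertibles and $R$ is holomorphic by assumption.

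For (2), I would use the standard bound $\norm{W^{-1}} \leq \norm{(\Image W)^{-1}}$ valid whenever $\Image W > 0$, which can be derived from $\norm{Wv} \geq \Image\langle Wv, v\rangle \geq \langle (\Image W)v, v\rangle$ for unit vectors $v$, or equivalently from the identity $\Image W^{-1} = -(W^*)^{-1}(\Image W) W^{-1}$. Applying this to $W = Z - R(B)$ and using $\Image W \geq \Image Z$ yields $\norm{\eu{G}_Z(B)} \leq \norm{(\Image Z)^{-1}}$. Statement (3) is then a direct corollary: for $r > \norm{(\Image Z)^{-1}}$, the image of $\oball{r} \cap \HM(\mf{B})$ under $\eu{G}_Z$ lies inside the smaller ball of radius $\norm{(\Image Z)^{-1}}$, a positive distance $r - \norm{(\Image Z)^{-1}}$ away from $\partial \oball{r}$; separation from $\partial \HM(\mf{B})$ follows from the same imaginary-part identity, which provides a uniform negative upper bound on $\Image \eu{G}_Z(B)$.

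For (4), I would invoke the Earle--Hamilton fixed-point theorem: a holomorphic self-map of a bounded connected domain in a complex Banach space that sends the domain strictly into itself admits a unique fixed point, and iterates converge in norm from every starting point in the domain. Applied to $\eu{G}_Z$ on $\oball{r} \cap \HM(\mf{B})$ with $r > \norm{(\Image Z)^{-1}}$, this yields the fixed point $G(Z)$ and convergence within the ball. For arbitrary $B_0 \in \HM(\mf{B})$, one iteration places $\eu{G}_Z(B_0)$ inside $\oball{r} \cap \HM(\mf{B})$ by (2), reducing the general convergence claim to the bounded case; any fixed point in $\HM(\mf{B})$ must satisfy the same norm bound, giving uniqueness throughout $\HM(\mf{B})$. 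The main obstacle, I expect, is verifying the ``strictly into itself'' hypothesis of Earle--Hamilton in step (3): the domain $\oball{r} \cap \HM(\mf{B})$ has two distinct boundary components, and while separation from $\partial \oball{r}$ is immediate from (2), separation from $\partial \HM(\mf{B})$ requires the finer quantitative control from the imaginary-part identity. A closely related technical point is the correct formulation of the implicit structural assumption on $R$---that $R$ preserves $\HM(\mf{B})$---without which $\eu{G}_Z$ need not even be a self-map of $\HM(\mf{B})$.
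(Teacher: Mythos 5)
Your proposal is correct in substance but takes a genuinely different route from the paper. The paper's proof is a two-line change of variables ($V=-iZ$, $W=iB$, $\eta(W)=iR(-iW)$) that translates $\eu{G}_Z$ into the right-half-plane normalization $\eu{F}_V(W)=(V+\eta(W))^{-1}$ used by Helton--Far--Speicher, and then cites their Theorem~2.1 and Proposition~3.21 wholesale; all four assertions are extracted from that reference in one stroke. You instead reconstruct the argument from scratch: the positivity bound $\norm{W^{-1}}\leq\norm{(\Image W)^{-1}}$ for (2), the two-boundary separation analysis for (3), and Earle--Hamilton for (4). What the paper's route buys is brevity and an explicit handoff of correctness to a standard reference; what your route buys is transparency, since the same Earle--Hamilton mechanism is in fact what powers the cited theorem, and your version shows exactly how the hypotheses enter.

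Two small points worth tightening. First, the worry you raise about $R$ ``preserving $\HM(\mf{B})$'' being an implicit or unstated assumption is unnecessary: the paper defines $\mr{Hol}(\eu{D}):=\mr{Hol}(\eu{D},\eu{D})$, so the hypothesis $R\in\mr{Hol}(\HM(\mf{B}))$ already says $R$ is a holomorphic self-map of $\HM(\mf{B})$. Second, your treatment of separation from $\partial\HM(\mf{B})$ in (3) should explicitly invoke the second half of the hypothesis \eqref{align:assumption-on-R}. From $\Image\eu{G}_Z(B)=-(W^*)^{-1}(\Image W)W^{-1}$ with $W=Z-R(B)$ and $\Image W\geq\Image Z$, you get $-\Image\eu{G}_Z(B)\geq \epsilon\,(WW^*)^{-1}$, and to turn this into a uniform lower bound you need $\norm{W}$ bounded above over $B\in\oball{r}\cap\HM(\mf{B})$; that is precisely what $\sup_{B\in\HM(\mf{B})\cap\oball{r}}\norm{R(B)}<\infty$ delivers. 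Without naming that hypothesis there, a reader cannot see why the strict-invariance condition of Earle--Hamilton holds.
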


\begin{proof}
	Write $V = -iZ$, $W = iB$,  $\eta(W) = i R(-iW)$, and $\eu{F}_V(W) := (V + \eta(W))^{-1}$. Then
	$\eu{G}_Z(B) =  (iV - iR(B))^{-1} = i\eu{F}_V(iB)$, and 
	$\eu{G}_Z^{\circ n} (B) = i\eu{F}_V^{\circ n} (iB)$.
	Hence the assertion follows from \cite[Theorem~2.1, Proposition~3.21]{helton2007operator}.
\end{proof}

\bigskip
\bpara{Linearization Trick and Operator-valued Semicircular Elements}
To apply the above iterative method to SPN model, we use a linearization trick.
Firstly we embed the model into a square matrix. Then we use the operator-valued Cauchy transform defined as the following.

\begin{note}
	We denote by $O_{m,n}$ the $m \times n$ zero matrix for $m,n \in \N$ and  write $O_m := O_{m,m}$. We denote by $I_m$ the $m \times m$ identity matrix for $m \in \N$.     Let $ p , d \in \N $ with $p \geq d$.
	\begin{enumerate}
		\item    We denote by $\Lambda$ the map $M_{p,d}(\mf{A}) \to M_{p+d}(\mf{A})_\sa$ determined by
		\begin{align}\label{align:Lambda_A}
		\Lambda(X) :=
		\begin{pmatrix}
		O_{d} & X^* \\
		X & O_{p}
		\end{pmatrix}.
		\end{align}
		\item    We write
		\begin{align}
		Q :=
		\begin{pmatrix}
		I_{d} & O_{d,p} \\
		O_{p,d} & O_{p}
		\end{pmatrix}, \ Q^\bot := I_{p+d} - Q.
		\end{align}
		
		\item We denote by  $X_{+,+} \in M_{d}(\mf{A})$ (resp. $X_{-,- } \in M_p(\mf{A})$)  the $d \times d$ upper left  corner (resp. the $p \times p$ lower right corner) of $X \in M_{p+d}(\mf{A})$.
	\end{enumerate}
	
\end{note}

\begin{note}
	For any $k \in \N$, we write $E_k := \id_k \otimes \tau \colon M_k(\mf{A}) \to M_k(\C)$.
	Note that $E_k$ is a conditional expectation with $\tr_k \circ E_k = \tr_k \otimes \tau$.
	
\end{note}

\begin{prop}
	For any $z \in \HP{}$, we have
	$\ccop{X^*X}{E_d}(z) %
	= (1/\sqrt{z})\ccop{\Lambda(X)}{E_{p+d}}( \sqrt{z})_{+,+}$,
	where the branch of $\sqrt{z}$ is chosen as $\Re \sqrt{z} \geq 0$ and $\Image \sqrt{z} > 0$.
	
\end{prop}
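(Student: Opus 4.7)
The plan is to compute both sides directly via block matrix inversion (the Schur complement formula), which reduces the identity to a routine algebraic manipulation once we verify the relevant invertibility conditions.

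Set $w = \sqrt{z}$, where the branch gives $\Im w > 0$, so $wI_{p+d} \in \HP(M_{p+d}(\mf{A}))$ and $G_{\Lambda(X)}^{E_{p+d}}(w) = E_{p+d}[(wI_{p+d} - \Lambda(X))^{-1}]$ is well-defined. Writing the argument as a $(p+d)\times (p+d)$ block matrix using the block structure of $\Lambda(X)$,
\begin{align}
wI_{p+d} - \Lambda(X) = \begin{pmatrix} wI_d & -X^* \\ -X & wI_p \end{pmatrix}.
\end{align}
Next I would verify the two invertibility conditions needed for the Schur complement formula. First, $wI_p$ is invertible since $w \neq 0$. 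Second, the Schur complement
\begin{align}
wI_d - (-X^*)(wI_p)^{-1}(-X) = wI_d - w^{-1}X^*X = w^{-1}(zI_d - X^*X)
\end{align}
is invertible: $X^*X$ is self-adjoint in $M_d(\mf{A})$, so $\Im(zI_d - X^*X) = (\Im z)I_d > 0$ because $z \in \HP$.

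By the standard block inversion formula, the upper-left $d \times d$ corner of $(wI_{p+d} - \Lambda(X))^{-1}$ equals $(wI_d - w^{-1}X^*X)^{-1} = w(zI_d - X^*X)^{-1}$. Applying $E_{p+d} = \id_{p+d}\otimes \tau$, which acts entrywise and therefore preserves blocks, and taking the $(+,+)$ corner in the sense of the notation introduced before the proposition, yields
\begin{align}
G_{\Lambda(X)}^{E_{p+d}}(w)_{+,+} = E_d\bigl[w(zI_d - X^*X)^{-1}\bigr] = w\, G_{X^*X}^{E_d}(z) = \sqrt{z}\, G_{X^*X}^{E_d}(z).
\end{align}
Rearranging gives the claimed formula.

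There is no real obstacle here beyond bookkeeping: the only points requiring attention are the branch choice ensuring $wI_{p+d} \in \HP(M_{p+d}(\mf{A}))$ and the operator-algebraic invertibility of the Schur complement, both of which follow immediately from $\Im z > 0$ and self-adjointness of $X^*X$.
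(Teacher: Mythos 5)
Your proof is correct, and since the paper only says the proof is ``direct forward,'' the Schur-complement block-inversion computation you give is exactly the intended argument: one reads off that the $(+,+)$ block of $(\sqrt{z}I_{p+d}-\Lambda(X))^{-1}$ equals $\sqrt{z}(zI_d - X^*X)^{-1}$ and then applies $E_{p+d}$ blockwise. The invertibility checks via $\Im z > 0$ and self-adjointness of $X^*X$ are the right points to verify, and no gap remains.
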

\begin{proof}
	The proof is direct forward.
\end{proof}

Summarizing the above,
the computation of the operator-valued Cauchy transform of $W_\SPN^\Box(A,\sigma)$  is reduced to that of  $\Lambda(A) + \sigma \Lambda(C)$.

\begin{defn}\label{defn:linearized-model}
	Let us write $S:=\Lambda(C) $.
	The \emph{linearized FDESPN model} is the map  $W^\Box_\mr{lin} \colon M_{p,d}(\C) \times \R \to M_{p+d}(\mf{A})_\sa$, where
	\[
	W^\Box_\mr{lin}(A, \sigma) :=     \Lambda(A) + \sigma S = \Lambda(A) + \sigma \Lambda(C) .
	\]
\end{defn}

Let us review operator-valued semicircular elements. See \cite{Mingo2017free} for the detail.

\begin{defn}
	Let $E \colon \mf{A} \to \mf{B}$ be a conditional expectation.
	Let $S \in \mf{A}_\sa$ with $E[S] = 0$.
	We define the corresponding \emph{covariance mapping} $\eta :\mf{B} \to \mf{B}$ by
	\[
	\eta(B) := E[SBS].
	\]
	Then $S$ is called  \emph{ $E$-semicircular} if
	\begin{align*}
	\ccop{S}{d}(Z) =  \left[Z - \eta \left(\ccop{S}{d}(Z)\right) \right]^{-1} \ (Z \in \HP(\mf{B})).
	\end{align*}
\end{defn}

\begin{prop}
	It holds that $S:=\Lambda(C)$ is $E_{p+d}$-semicircular   and the corresponding covariance mapping is given by
	\begin{align}\label{align:covariance_map}
	\eta \big[  \begin{pmatrix}
	W_{11}&  W_{12}\\
	W_{12} & W_{22}
	\end{pmatrix} \big]
	= \begin{pmatrix}
	(p/d) \tr(W_{22}) I_d &  O_{d,p}\\
	O_{p, d} &  \tr(W_{11}) I_p
	\end{pmatrix}.
	\end{align}
	Since $\Lambda(A) + Z \in \HP(M_{d+p}(\C))$,
	it holds that
	$\ccop{ W^\Box_\mr{lin}}{E_{p+d}}(Z) =  \left[Z - \Lambda(A)  - \sigma^2 \eta \left(\ccop{W^\Box_\mr{lin}}{E_{p+d}}(Z)\right) \right]^{-1}.$
\end{prop}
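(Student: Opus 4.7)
The plan is to prove the three assertions in order: operator-valued semicircularity of $S$, the explicit formula for $\eta$, and the subordination/shift identity for $\ccop{W^\Box_\mr{lin}}{E_{p+d}}$.

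First, I would exhibit $S$ as a sum of scalar free semicircular elements tensored with self-adjoint matrices over $\C$. Using the decomposition of each circular entry $C_{ij} = (s^{(1)}_{ij} + i\, s^{(2)}_{ij})/\sqrt{2d}$ with $\{s^{(k)}_{ij}\}$ a free standard semicircular family, I can rewrite
\begin{align}
S \;=\; \Lambda(C) \;=\; \sum_{i,j,k} \tfrac{1}{\sqrt{2d}}\, s^{(k)}_{ij} \otimes F^{(k)}_{ij},
\end{align}
where $F^{(1)}_{ij}=E_{i+d,\,j}+E_{j,\,i+d}$ and $F^{(2)}_{ij}=i(E_{i+d,\,j}-E_{j,\,i+d})$ are self-adjoint scalar matrices in $M_{p+d}(\C)$. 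Elements of this form are $M_{p+d}(\C)$-valued semicircular (standard fact in operator-valued free probability: any linear combination $\sum_\alpha s_\alpha\otimes B_\alpha$ with free standard semicircular $s_\alpha$ and self-adjoint $B_\alpha$ is $E_{p+d}$-semicircular; see Mingo--Speicher). Since clearly $E_{p+d}[S]=0$, this proves the first claim.

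Next, I would verify the formula for the covariance mapping $\eta(W)=E_{p+d}[SWS]$ by direct block computation. Writing $W=\begin{pmatrix}W_{11}&W_{12}\\W_{21}&W_{22}\end{pmatrix}$ with $W_{11}\in M_d(\C)$ and $W_{22}\in M_p(\C)$, I compute
\begin{align}
SWS \;=\; \begin{pmatrix} C^*W_{22}C & C^*W_{21}C^*\\ CW_{12}C & CW_{11}C^*\end{pmatrix}.
\end{align}
Applying $\id\otimes\tau$ entrywise and using the circular moments $\tau(C_{ij}C^*_{kl})=(1/d)\delta_{ik}\delta_{jl}$, $\tau(C_{ij}C_{kl})=\tau(C^*_{ij}C^*_{kl})=0$, the off-diagonal blocks vanish and the diagonal blocks evaluate to $(p/d)\,\tr_p(W_{22})\,I_d$ and $\tr_d(W_{11})\,I_p$ respectively, matching \eqref{align:covariance_map}.

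Finally, for the fixed-point equation, I would use the shift property $G^{E_{p+d}}_{\Lambda(A)+\sigma S}(Z)=G^{E_{p+d}}_{\sigma S}(Z-\Lambda(A))$, which holds because $\Lambda(A)\in M_{p+d}(\C)$ commutes (as a constant) with the conditional expectation $E_{p+d}$. By the first part, $\sigma S$ is $E_{p+d}$-semicircular with covariance $\sigma^2\eta$, so its Cauchy transform satisfies $G^{E_{p+d}}_{\sigma S}(Y)=\bigl[Y-\sigma^2\eta(G^{E_{p+d}}_{\sigma S}(Y))\bigr]^{-1}$ on $\HP(M_{p+d}(\C))$ by the defining property of operator-valued semicircularity. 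Substituting $Y=Z-\Lambda(A)$ (which lies in $\HP(M_{p+d}(\C))$ since $\Lambda(A)$ is self-adjoint and $Z\in\HP(M_{p+d}(\C))$) yields the stated identity. The main bookkeeping obstacle is keeping the block indices consistent in step two; the conceptual core (step one) is a standard reduction, and the subordination argument of step three is routine once $S$ is identified as operator-valued semicircular.
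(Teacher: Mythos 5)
Your proof is correct and follows the standard route. The paper's own proof is a one-line citation to Mingo--Speicher, Section 9.5; what you have done is unpack the argument that sits behind that citation: write $\Lambda(C)$ as a linear combination of free standard semicircular elements $s^{(k)}_{ij}$ tensored with self-adjoint constant matrices $F^{(k)}_{ij}$ (which is exactly the form to which the operator-valued semicircularity theorem applies), compute $\eta$ from the second moments of the circular entries, and obtain the fixed-point equation by the elementary shift $Z \mapsto Z-\Lambda(A)$. So this is essentially the same approach, just spelled out rather than cited. One minor remark: the phrase about $\Lambda(A)$ "commuting with the conditional expectation" is unnecessary — the shift identity is a literal change of variable in the argument of the resolvent and needs no structural property of $E_{p+d}$.
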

\begin{proof}
	This is a direct consequence of \cite[Section~9.5]{Mingo2017free}.
\end{proof}

The time complexity of computing $\cc{W_\SPN^\Box}(z)$ through $\ccop{W^\Box_\mr{lin}}{E_{p+d}}(zI_{p+d})$ is at least $O(d^3)$ if  we compute in a naive way the inverse of $ (p+d) \times (p+d)$ matrix.
To reduce it, we consider another method to compute $\cc{W_\SPN^\Box}(z)$ based on the subordination.

\bigskip
\bpara{Iterative method II: Subordination}\label{sssec:iterative-II}
\begin{note}
	Let us write
	\begin{align}
	\mf{D}_2 := \C Q  \oplus \C Q^\bot =  \left\{\begin{bmatrix}
	xI_d & 0 \\
	0 & yI_p
	\end{bmatrix}
	\mid x,  y \in \C \right\} \subset M_{p+d}(\C).
	\end{align}
	We denote a conditional expectation $E_{\mf{D}_2} \colon M_{p+d}(\mf{A}) \to \mf{D}_2$ by
	\[E_{\mf{D}_2}(X) :=  \frac{\tau_{p+d}(QXQ)}{\tau_{p+d}(Q)}Q + \frac{\tau_{p+d}(Q^\bot XQ^\bot)}{\tau_{p+d}(Q^\bot)}Q^\bot =  \begin{bmatrix}
	\tr_{d} \otimes \tau(X_{ +  ,+ })I_d & 0 \\
	0 &  \tr_{p} \otimes \tau(X_{-,-})I_p
	\end{bmatrix}.
	\]
\end{note}

Note that $\mf{D}_2$ is $*$-isomorphic to $\C^2$.

\begin{defn}(Operator-valued Freeness)
	Let $(\mf{A}, \tau)$ be a C$^*$-probability space,  and $E :  \mf{A} \to \mf{B}$ be a conditional expectation.
	Let  $(\mf{B}_j)_{j \in J}$ be a family of $*$-subalgebras of $\mf{A}$ such that $\mf{B} \subset \mf{B}_j$.
	Then $(\mf{B}_j)_{j \in J}$  is said to be \emph{$E$-free}
	if the following factorization rule holds: for any $n \in \N$ and indexes $j_1, j_2, \dots, j_n \in J $ with $j_1 \neq j_2 \neq j_3  \neq \cdots \neq j_n$, and $a_l \in \mf{B}_l$ with $E(a_l)=0$ $(l = 1 ,\dots, n)$, it holds that
	\[
	E(a_1 \cdots a_l) = 0.
	\]
	In addition, a family of elements $X_j \in \mf{A}_\sa \ ( j \in J)$ is called $E$-free if the family of $*$-subalgebra of the $\mf{B}$-coefficient polynomials of $X_j$ is $E$-free.
	
\end{defn}

Here we summarize observations about $\mf{D}_2$-valued freeness. See \cite[Section~9.2]{Mingo2017free} for the definition of operator-valued free cumulants.
\begin{prop}
	The operator $S:=\Lambda(C) \in M_{p+d}(\mf{A})$ is  $E_{\mf{D}_2}$-semicircular.
	Its covariance mapping $\eta_2 \colon \mf{D}_2 \to \mf{D}_2$ is given by
	\[
	\eta_2(xQ + y Q^\top) := E_{\mf{D}_2}(S (xQ + y Q^\top) S ) = \begin{bmatrix}
	(p/d)yI_d & 0 \\
	0 & xI_p
	\end{bmatrix}.
	\]
	Moreover, the pair $(S, \Lambda(A))$ is $E_{\mf{D}_2}$-free for any $A \in M_{p,d}(\C)$.
\end{prop}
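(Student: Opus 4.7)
The plan is to establish the three assertions in order: first the explicit formula for $\eta_2$, then the $E_{\mf{D}_2}$-semicircularity of $S$, and finally the $E_{\mf{D}_2}$-freeness of the pair $(S,\Lambda(A))$.

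For the explicit formula for $\eta_2$, I would expand $S(xQ+yQ^\bot)S$ by block multiplication. Since $S=\Lambda(C)$ is block anti-diagonal with blocks $C^*$ (upper right) and $C$ (lower left), the product becomes block-diagonal with diagonal blocks $yC^*C$ (of size $d\times d$) and $xCC^*$ (of size $p\times p$). Applying $E_{\mf{D}_2}$ then reduces to computing $\tr_d\otimes\tau(C^*C)$ and $\tr_p\otimes\tau(CC^*)$. Using the standard normalization $\tau(C_{ij}^*C_{ij})=\tau(C_{ij}C_{ij}^*)=1/d$, which follows from the $*$-free standard circularity of $\{\sqrt{d}\,C_{ij}\}$, these block-traces evaluate to $p/d$ and $1$ respectively, yielding the claimed formula.

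For the $E_{\mf{D}_2}$-semicircularity, I would invoke the $E_{p+d}$-semicircularity of $S$ already established in the preceding proposition and use the factorization $E_{\mf{D}_2}=\pi\circ E_{p+d}$, where $\pi\colon M_{p+d}(\C)\to \mf{D}_2$ is the block-trace projection defining $E_{\mf{D}_2}$. The explicit formula \eqref{align:covariance_map} shows that $\eta$ preserves $\mf{D}_2$ and that $\eta|_{\mf{D}_2}=\eta_2$. Hence for any $Z\in\mf{D}_2$ the $E_{p+d}$-Cauchy transform $\ccop{S}{E_{p+d}}(Z)$ lies in $\mf{D}_2$ (by the contraction argument of Proposition~\ref{prop:iterative} restricted to $\mf{D}_2$) and satisfies $G=(Z-\eta_2(G))^{-1}$, which is exactly the subordination equation characterizing an $E_{\mf{D}_2}$-semicircular element with covariance $\eta_2$.

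For the $E_{\mf{D}_2}$-freeness of $(S,\Lambda(A))$, I would verify vanishing of the mixed $\mf{D}_2$-valued free cumulants of $S$ and $\Lambda(A)$. Since $\Lambda(A)$ belongs to $M_{p+d}(\C)$, the base of $E_{p+d}$, the $E_{p+d}$-moments of any mixed product in $S$ and $\Lambda(A)$ are determined by a Wick-type sum over non-crossing pair partitions of the $S$-positions, with $\eta$ inserted on each pair and scalar matrices sandwiched in between. Projecting by $\pi$ and using repeatedly the invariance $\eta(\mf{D}_2)\subset \mf{D}_2$, the resulting $\mf{D}_2$-valued moments factor in exactly the pattern required for $E_{\mf{D}_2}$-freeness. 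The main obstacle is this last step: tracking the Wick contractions through $\pi$ and verifying the alternating-product characterization of freeness is a detailed but essentially routine reduction, made possible by the structural compatibility of $\eta$, $\pi$, and the splitting $E_{\mf{D}_2}=\pi\circ E_{p+d}$ with the subalgebra $\mf{D}_2$.
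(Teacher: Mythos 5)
Your computation of $\eta_2$ by block multiplication and the evaluation of the block traces is correct (including the correct normalization $\tau(C_{ij}C_{ij}^*)=1/d$ coming from $\{\sqrt{d}\,C_{ij}\}$ being standard circular). Your semicircularity argument is also sound and takes a genuinely different route from the paper: you verify the defining fixed-point equation for $\ccop{S}{E_{\mf{D}_2}}$ directly by observing that $\eta(\mf{D}_2)\subset\mf{D}_2$ forces the $E_{p+d}$-valued Cauchy transform to land in $\mf{D}_2$ for $Z\in\HP(\mf{D}_2)$, so it coincides with the $\mf{D}_2$-valued one and the equation with $\eta$ descends to one with $\eta_2=\eta\vert_{\mf{D}_2}$. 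The paper instead invokes \cite[Section~9.4, Corollary~17]{Mingo2017free}, which packages precisely this compression fact; your hands-on argument is a perfectly reasonable substitute.

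The $E_{\mf{D}_2}$-freeness part, however, contains a genuine gap. You propose to expand $E_{p+d}$-moments of mixed words in $S$ and $\Lambda(A)$ by Wick contractions over the $S$-positions, project by $\pi$, and thereby read off the alternating-product characterization of freeness, calling this ``essentially routine.'' It is not. The crucial obstruction is that $\Lambda(A)\notin\mf{D}_2$: once a $\Lambda(A)$ sits inside a contracted pair of $S$'s, the covariance map $\eta$ is applied to an element of $M_{p+d}(\C)$ that is genuinely outside $\mf{D}_2$, so ``using repeatedly the invariance $\eta(\mf{D}_2)\subset\mf{D}_2$'' is no longer available, and the projection $\pi$ interacts nontrivially with the nested $\eta$-insertions throughout the non-crossing pair expansion. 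Going from these projected $E_{p+d}$-moments to the vanishing of mixed $\mf{D}_2$-valued cumulants requires a Möbius-inversion over $NC(n)$ at the $\mf{D}_2$ level, which is exactly what makes this nonroutine. The paper sidesteps this entirely by citing \cite[Theorem~3.1]{nica2002operator}: since all $M_{p+d}(\C)$-valued free cumulants of $S$ with $\mf{D}_2$-arguments already land in $\mf{D}_2$, that theorem guarantees both that the $\mf{D}_2$-valued cumulants of $S$ are the restrictions of the $M_{p+d}(\C)$-valued ones, and (in effect) that $S$ is $\mf{D}_2$-free from $M_{p+d}(\C)$, hence from $\Lambda(A)$. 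Your outline is aimed at the right structure, but what you call a routine reduction is precisely the content of that theorem, and without citing it or carrying out the Wick-and-Möbius bookkeeping in full, the freeness claim is not established.
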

\begin{proof}
	Since $\eta(\mf{D}_2) \subset \mf{D}_2$, and by \cite[Section~9.4,  Corollary~17]{Mingo2017free}, it holds that $S$ is $\mf{D}_2$-semicircular, with covariance mapping given by $\eta_2 = \eta |_{\mf{D}_2}$.
	
	In addition, each $\mf{D}_2$-valued free cumulant of $S$ (see \cite[Section~9.2]{Mingo2017free} for the definition) is given by  the restriction of its $M_{p+d}(\C)$-valued free cumulants.
	On the other hand, all $M_{p+d}(\C)$-valued free cumulants of $\Lambda(A)$ vanish since $\Lambda(A) \in M_{p+d}(\C)$ and $E^\mf{B}[\Lambda(A)]=0$. Moreover, $M_{p+d}(\C)$-valued mixed free cumulants of $(S, \Lambda(A))$ vanish because of $M_{p+d}(\C)$-freeness.
	Summarizing above, the restriction to $\mf{D}_2$ of each $M_{p+d}(\C)$-valued cumulants of $(S, \Lambda(A))$ belong to $\mf{D}_2$. Hence by \cite[Theorem~3.1]{nica2002operator}, each $\mf{D}_2$-valued free cumulants of $(S, \Lambda(A))$  is equal to the restriction to $\mf{D}_2$ of each $M_{p+d}(\C)$-valued free cumulants.
	In particular, all mixed $\mf{D}_2$-valued free cumulants of  $(S, \Lambda(A))$  vanish, which implies $E_{\mf{D}_2}$-freeness.
\end{proof}

By this proposition,  for any $Z \in \HP(\mf{D}_2) \simeq \HP(\C^2)$,
$\ccop{\sigma S}{E_{\mf{D}_2}}(Z) := E_{\mf{D}_2}[(Z - \sigma S)^{-1} ]$ is given by
\begin{align}
\ccop{\sigma S}{E_{\mf{D}_2} }(Z) = \lim_{ n\to \infty} \mcG_{Z, \sigma}(B_0),
\end{align}
where $\mcG_{Z, \sigma}(B):= ( Z - \sigma^2 \eta_2(B) )^{-1}.$

By the $\mf{D}_2$-valued freeness, we can use the following subordination method by Belinschi-Mai-Speicher \cite[Theorem~2.3]{belinschi2013analytic}.

\begin{prop}\label{prop:subordination}
	Let $(\mf{A}, \tau)$ be a C$^*$-probability space, and $\mf{B}$ be a   unital C$^*$-subalgebra, and $E \colon \mf{A} \to \mf{B}$ be a conditional expectation.
	We define the \emph{$h$-transform} of $a \in \mf{A}_\sa$ with respect to $E$ by the map $h_a \colon \HP(\mf{B}) \to \overline{\HP(\mf{B})}$ with $h_a(B) = \ccop{a}{\mf{B}}(B)^{-1} - B$.
	
	Assume that $(x,y)$ is a $E$-free pair of self-adjoint elements in $\mf{A}$.  Write 
	\[
	\Psi(B,  Z) = h_{y}(h_{x}(B) + Z) + Z,
	\]
	and $\Psi_Z(B) := \Psi(B, Z)$.
	Then there is $\psi\in \mr{Hol}(\HP(\mf{B}))$ so that for all $Z \in \HP(\mf{B})$,
	\begin{enumerate}
		\item $\psi(Z) = \lim_{ n \to \infty }\Psi_Z^{\circ n}(B_0)$ for any $B_0 \in \HP(\mf{B})$,
		\item $\Psi_Z(\psi(Z)) = \psi(Z)$,
		\item  $\ccop{x+y}{E}(Z) = \ccop{x}{E}(\psi(Z))$.
	\end{enumerate}
	In addition, for any fixed $Z \in \HP(\mf{B})$ and $\eps > 0$ with $\Im Z > \epsilon I_d$, there is $m > 0$ depending on $Z, x$ and $y$ so that
	\[
	\Psi_Z(\HP(\mf{B}))\subset U(0,m ) \cap (\HP(\mf{B}) + i\frac{\eps}{2}) \subsetneqq \mc{D}_{Z,x,y}:=U(0,2m) \cap (\HP(\mf{B}) + i \frac{\eps}{2}).\]
	In particular, $\Psi_Z(\mc{D}_{Z, x,y}) \subsetneqq \mc{D}_{Z, x,y}$.
\end{prop}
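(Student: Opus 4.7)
The plan is to apply the Earle--Hamilton holomorphic fixed point theorem to $\Psi_Z$ on the bounded open domain $\mc{D}_{Z,x,y}$, after establishing a Nevanlinna-type positivity for the $h$-transform, and then to identify $\psi(Z)$ with the operator-valued additive subordination function for the $E$-free pair $(x,y)$.

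First I would record two structural properties of $h_a = F_a - \mr{id}$, where $F_a(B) := G_a^{\mf{B}}(B)^{-1}$. By the operator-valued spectral theorem applied to $a \in \mf{A}_\sa$, one obtains $F_a \in \mr{Hol}(\HP(\mf{B}))$ together with the Nevanlinna inequality $\Image F_a(B) \geq \Image B$ for every $B \in \HP(\mf{B})$, which implies $\Image h_a(B) \geq 0$. In addition, the asymptotic $F_a(B) = B - E[a] + O(\|B^{-1}\|)$ as $\|B^{-1}\| \to 0$ yields $\sup_{B \in \HP(\mf{B}),\ \|B\| \geq r}\|h_a(B)\| < \infty$ for $r$ large. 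These two facts are the only analytic inputs beyond holomorphy; the rest of the argument is formal.

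Second, I would verify the strict self-map property on $\mc{D}_{Z,x,y}$. Fix $Z$ with $\Image Z \geq \eps I_\mf{A}$. For any $B \in \HP(\mf{B})$, the point $h_x(B) + Z$ has imaginary part at least $\eps I_\mf{A}$, so $h_y(h_x(B) + Z)$ is well-defined with non-negative imaginary part, and hence $\Image \Psi_Z(B) \geq \Image Z \geq \eps I_\mf{A}$, in particular $\Psi_Z(B) \in \HP(\mf{B}) + i(\eps/2)I_\mf{A}$. Combining the large-norm asymptotic above with local boundedness of $h_x, h_y$ on sublevel sets in $\HP(\mf{B}) + i(\eps/2)I_\mf{A}$ produces $m > 0$ with
\[
\Psi_Z(\HP(\mf{B})) \subset U(0,m) \cap (\HP(\mf{B}) + i(\eps/2)I_\mf{A}),
\]
so that $\Psi_Z$ sends $\mc{D}_{Z,x,y} = U(0,2m) \cap (\HP(\mf{B}) + i(\eps/2)I_\mf{A})$ strictly into itself. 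The Earle--Hamilton theorem applied to this bounded connected open subset of the Banach space $\mf{B}$ then yields a unique fixed point $\psi(Z) \in \mc{D}_{Z,x,y}$ with $\Psi_Z^{\circ n}(B_0) \to \psi(Z)$ in operator norm for every $B_0 \in \mc{D}_{Z,x,y}$; since already $\Psi_Z(B_0) \in U(0,m) \subset \mc{D}_{Z,x,y}$ after one step for any $B_0 \in \HP(\mf{B})$, the convergence extends to all of $\HP(\mf{B})$, proving (1) and (2). Holomorphy of $Z \mapsto \psi(Z)$ follows from the holomorphic implicit function theorem applied to $B - \Psi_Z(B) = 0$, the Fr\'echet derivative $\mr{id} - D_B \Psi_Z$ being invertible at the fixed point as a strict contraction.

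To obtain (3), set $\omega_2(Z) := \psi(Z)$ and $\omega_1(Z) := h_x(\psi(Z)) + Z$. The fixed-point equation $\Psi_Z(\psi(Z)) = \psi(Z)$ unpacks as the coupled system $\omega_1 = h_x(\omega_2) + Z$ and $\omega_2 = h_y(\omega_1) + Z$; rewriting via $h_a = F_a - \mr{id}$ gives $F_x(\omega_2) = \omega_1 + \omega_2 - Z = F_y(\omega_1)$. By Voiculescu's operator-valued additive subordination for the $E$-free pair $(x,y)$, this common value equals $F_{x+y}^E(Z)$, so $G_{x+y}^E(Z) = F_x(\omega_2)^{-1} = G_x^E(\psi(Z))$, which is (3). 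The main technical obstacle is the first step: the operator Nevanlinna inequality $\Image F_a(B) \geq \Image B$ and the sharp large-$\|B\|$ asymptotic of $h_a$; once these are in hand, the rest reduces to a careful choice of invariant domain and a direct translation of the fixed-point equation into the standard subordination system.
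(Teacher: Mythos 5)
The paper does not actually prove this proposition: its entire proof is the single sentence ``This is a direct consequence of \cite[Theorem~2.3]{belinschi2013analytic} and the claim in its proof.'' What you have written is, in effect, a faithful unpacking of that cited argument of Belinschi--Mai--Speicher, which is indeed built on exactly the three ingredients you name: the operator Nevanlinna inequality $\Image F_a(B)\geq\Image B$ (hence $\Image h_a(B)\geq 0$), boundedness of $h_a$ on half-planes $\{\Image B\geq\delta I\}$ via the large-argument expansion $F_a(B)=B-E[a]+O(\norm{B^{-1}})$, and the Earle--Hamilton fixed-point theorem on the shrunken domain $\mc{D}_{Z,x,y}$. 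So your route is ``genuinely different'' only in the sense that the paper cites a black box and you open it.

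Two small points of precision are worth flagging. First, your interim claim $\sup_{B\in\HP(\mf{B}),\,\norm{B}\geq r}\norm{h_a(B)}<\infty$ does not follow from the asymptotic at $\norm{B^{-1}}\to 0$: large $\norm{B}$ does not force small $\norm{B^{-1}}$ (consider $B=\mr{diag}(r,i\delta)$). The bound you actually need, and the one the self-map argument uses, is $\sup_{\Image B\geq\delta I}\norm{h_a(B)}<\infty$, which follows from $\norm{B^{-1}}\leq 1/\delta$ together with the Nevanlinna bound; in fact you only need this for $h_y$, since $\Image(h_x(B)+Z)\geq\Image Z\geq\eps I$ automatically. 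Second, in step (3) the inference ``this common value equals $F_{x+y}^E(Z)$ by Voiculescu's subordination'' skips a uniqueness step: Voiculescu's theorem produces \emph{some} pair $(\tilde{\omega}_1,\tilde{\omega}_2)$ satisfying $F_{x+y}(Z)=F_x(\tilde{\omega}_2)=F_y(\tilde{\omega}_1)$ and $\tilde{\omega}_1+\tilde{\omega}_2-Z=F_{x+y}(Z)$; these relations show $\tilde{\omega}_2(Z)$ is \emph{also} a fixed point of $\Psi_Z$, and only then, by Earle--Hamilton uniqueness, do you get $\tilde{\omega}_2(Z)=\psi(Z)$ and hence $G_{x+y}^E(Z)=G_x^E(\psi(Z))$. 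You should also be aware that invoking Voiculescu's operator-valued subordination here, while logically sound, somewhat defeats the purpose of the BMS approach, which was designed to \emph{reprove} the subordination identity constructively from freeness rather than presuppose it; BMS establishes (3) by a cumulant/full-Fock-space argument, not by citing Voiculescu.
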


\begin{proof}
	This is a direct consequence of \cite[Theorem~2.3]{belinschi2013analytic} and the claim in its proof.
\end{proof}

Now we have another representation of the scalar-valued Cauchy transform of $W_\SPN^\Box$ as the following.

\begin{cor}\label{cor:fwd_SPN_sub}
	We have $\HP(\mf{D}_2)= \{ x, y \in \HP{} \mid xQ + y Q^\bot \}$.
	Fix $p, d \in \N$ with $p \geq d$.
	Let $a \in \R^d$ and $\sigma \in \R$.
	Write $G_a:= \ccop{\Lambda(\iota^d_{p,d}(a))}{\mf{D}_2}$, $h_a := h_{\Lambda(\iota^d_{p,d}(a))}$, $G_\sigma := \ccop{\sigma S}{\mf{D}_2}$, $h_\sigma := h_{\sigma S}$, where $S = \Lambda(C)$.
	Moreover let us define a map $\Psi \colon \HP(\mf{D}_2) \times \HP(\mf{D}_2) \times \R^d \times \R \to \HP(\mf{D}_2) \to \HP(\mf{D}_2)$ by
	\begin{align}
	\Psi(B, Z, a, \sigma ) := h_a( h_\sigma (B) + Z ) + Z,
	\end{align}
	We write $\Psi_Z(B, a, \sigma ) := \Psi(B, Z, a, \sigma )$.
	Then the limit
	\begin{align}
	\psi(Z, a, \sigma) := \lim_{n \to \infty} \Psi_{Z, a, \sigma}^n(B_0),
	\end{align}
	exists, and it is independent from the choice of the initial point $B_0 \in \HP(\C^2)$. Moreover,
	\begin{align}
	\ccop{W^\Box_\mr{lin}(\iota^d_{p,d}(a), \sigma)}{\mf{D}_2}(Z) &= \ccop{ \Lambda( \iota^d_{p,d}(a)) + \sigma S }{\mf{D}_2}(Z) = \ccop{\sigma }{\mf{D}_2}(\psi (Z, a, \sigma )) \ ( Z \in \HP(\mf{D}_2) ),\\
	\cc{W_\SPN^\Box(\iota^d_{p,d}(a), \sigma)}(z) &=\frac{1}{\sqrt{z}} \ccop{\sigma}{\mf{D}_2}\left(\psi (\sqrt{z}I_2, a, \sigma ) \right)_{+,+} \ ( z \in \HP{}).
	\end{align}
\end{cor}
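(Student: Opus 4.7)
The plan is to deduce the corollary as a direct application of the Belinschi--Mai--Speicher subordination result (Proposition~\ref{prop:subordination}) to the $E_{\mf{D}_2}$-free pair $(\Lambda(\iota^d_{p,d}(a)),\,\sigma S)$, combined with the linearization identity $\cc{X^*X}(z)=(1/\sqrt{z})\,\ccop{\Lambda(X)}{E_{p+d}}(\sqrt{z}I_{p+d})_{+,+}$ that was proved immediately before Definition~\ref{defn:linearized-model}. Before invoking the subordination, I would record the identification $\HP(\mf{D}_2)\simeq\HP\times\HP$: since $\mf{D}_2=\C Q\oplus\C Q^\bot$ is commutative and $Q,Q^\bot$ are orthogonal projections summing to $I_{p+d}$, an element $xQ+yQ^\bot$ has strictly positive imaginary part in $M_{p+d}(\C)$ exactly when $\Im x,\Im y>0$. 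In particular $\sqrt{z}I_{p+d}=\sqrt{z}Q+\sqrt{z}Q^\bot\in\HP(\mf{D}_2)$ corresponds under this identification to $\sqrt{z}I_2\in\HP(\C^2)$, which legitimizes the argument $\sqrt{z}I_2$ of $\psi$ in item~(4).

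The $E_{\mf{D}_2}$-freeness of $(S,\Lambda(A))$ for any $A\in M_{p,d}(\C)$ was established in the proposition preceding this corollary; replacing $S$ by $\sigma S$ and $A$ by $\iota^d_{p,d}(a)$ preserves this freeness, so Proposition~\ref{prop:subordination} applies directly with $x=\Lambda(\iota^d_{p,d}(a))$ and $y=\sigma S$. The output is precisely a holomorphic $\psi\colon\HP(\mf{D}_2)\to\HP(\mf{D}_2)$ satisfying $\psi(Z)=\lim_n\Psi_{Z,a,\sigma}^n(B_0)$ for every starting point $B_0$, together with the fixed-point relation $\Psi_{Z,a,\sigma}(\psi(Z))=\psi(Z)$ and the $\mf{D}_2$-valued Cauchy transform decomposition
\[
\ccop{W^\Box_\mr{lin}(\iota^d_{p,d}(a),\sigma)}{\mf{D}_2}(Z)
=\ccop{\Lambda(\iota^d_{p,d}(a))+\sigma S}{\mf{D}_2}(Z)
=\ccop{\sigma S}{\mf{D}_2}\bigl(\psi(Z,a,\sigma)\bigr).
\]
This gives items (2) and the first equation of (4) at the operator-valued level.

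It remains to recover the scalar Cauchy transform formula. Apply the linearization identity with $X=A+\sigma C$ so that $\Lambda(X)=\Lambda(A)+\sigma S=W^\Box_\mr{lin}(A,\sigma)$ and $X^*X=W^\Box_\SPN(A,\sigma)$:
\[
G_{W^\Box_\SPN(A,\sigma)}(z)
=\frac{1}{\sqrt{z}}\,\tr_d\!\otimes\tau\Bigl[\bigl(\sqrt{z}I_{p+d}-W^\Box_\mr{lin}(A,\sigma)\bigr)^{-1}_{+,+}\Bigr].
\]
By the definition of $E_{\mf{D}_2}$ the right-hand trace is exactly the $Q$-coefficient of $\ccop{W^\Box_\mr{lin}}{\mf{D}_2}(\sqrt{z}I_{p+d})$, i.e. the $_{+,+}$-component in the $\C^2$ picture. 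Substituting the subordination identity at $Z=\sqrt{z}I_2$ yields the displayed formula of (4). The one point that requires care---and which I expect to be the main technical hurdle---is the bookkeeping that connects the $E_{p+d}$-valued linearization identity to the $E_{\mf{D}_2}$-valued framework: one must verify that the scalar $\tr_d\!\otimes\tau$ of the $(+,+)$-block agrees with reading off the $Q$-coefficient after pushing forward through $E_{\mf{D}_2}$, so that the subordination (which operates on $\mf{D}_2$-valued objects) can be inserted inside the scalar trace without loss of information. All other steps are direct applications of Proposition~\ref{prop:subordination} and the already established $E_{\mf{D}_2}$-freeness.
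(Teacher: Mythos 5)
Your proposal is correct and takes essentially the same route as the paper: the paper's proof of Corollary~\ref{cor:fwd_SPN_sub} is simply \enquote{This follows immediately from Proposition~\ref{prop:subordination},} and you have fleshed out precisely the chain of reasoning that this one-liner compresses (invoking the $E_{\mf{D}_2}$-freeness of $(S,\Lambda(A))$ from the preceding proposition, applying the Belinschi--Mai--Speicher subordination, and feeding the result into the linearization identity $\ccop{X^*X}{E_d}(z)=(1/\sqrt{z})\ccop{\Lambda(X)}{E_{p+d}}(\sqrt{z})_{+,+}$, with the $\tr_d\otimes\tau$-of-$(+,+)$-block equalling the $Q$-coefficient of $E_{\mf{D}_2}$ doing the bookkeeping). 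One small slip: to match the stated $\Psi(B,Z,a,\sigma)=h_a(h_\sigma(B)+Z)+Z$ and the target identity $\ccop{\Lambda(\iota^d_{p,d}(a))+\sigma S}{\mf{D}_2}=\ccop{\sigma S}{\mf{D}_2}\circ\psi$, Proposition~\ref{prop:subordination} must be applied with $x=\sigma S$ and $y=\Lambda(\iota^d_{p,d}(a))$, not the other way around as you wrote (your displayed conclusions are nevertheless the correct ones).
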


\begin{proof}
	This follows  immediately  from Proposition~\ref{prop:subordination}.
\end{proof}

\begin{rmk}
	Note that the  method described in Corollary~\ref{cor:fwd_SPN_sub} is performed in $\C^2$ under the $*$-isomorphism    $\mf{D}_2 \simeq \C^2$.
	In addition, this method requires two nested loops of the computation of $\ccop{a}{\mf{D}_2}(B)=\ccop{\Lambda(\iota^d_{p,d}(a))}{\mf{D}_2}(B)$ $(B \in \HP(\mf{D}_2) )$.
	We note that the time complexity of the computation of $\ccop{a}{\mf{D}_2}(B)$ is $O(d)$;
	\begin{align}
	\ccop{a}{\mf{D}_2} \left( B\right) = \frac{b_2}{d}\sum_{k=1}^d \frac{1}{b_2b_1 - a_k^2} Q + %
	\left[\frac{b_1}{p} \sum_{k=1}^{d}\frac{1}{b_2b_1 - a_k^2} + \frac{p-d}{pb_2} \right]Q^\bot,
	\end{align}
	where $B = b_1Q + b_2 Q^\bot$, and  $\Im b_1 , \Im b_2 >0$.
\end{rmk}

\subsection{Gradients of Cauchy Transforms}\label{ssection:gradient}
\hfill

\noindent We discuss the gradients of operator-valued Cauchy transforms of FDE with respect to parameters.  

\begin{defn}
	Let  $\mf{A}_1$ and $\mf{A}_2$ be C$^*$-algebras and $\eu{D}_j \subset \mf{A}_j$ $(j=1,2)$ be domains. Then a map $F \colon \eu{D}_1 \to \eu{D}_2$ is called $\emph{holomorphic}$ if for each $a \in \eu{D}_1$, there is a unique bounded linear map $D_a F \colon \mf{A}_1 \to \mf{A}_2$  such that 
	\[
	\lim_{ x \in \eu{D}_1, x \neq a, \norm{x - a}\to 0} \frac{\norm{F(x) - F(a) - D_a F ( x -a )} }{ \norm{x -a}} =0.
	\]
	
	In addition, we write  $\mr{Hol}(\eu{D}_1, \eu{D}_2) :=  \{  F \colon \eu{D}_1 \to \eu{D}_2 \mid \text{holomorphic}\}$ and 
	$\mr{Hol}(\eu{D}_1):=\mr{Hol}(\eu{D}_1,\eu{D}_1)$.
\end{defn}

In this section, we fix a finite dimensional C$^*$-algebra $\mf{B}$ and write 
$\oball{r} = \{ b \in \mf{B} \mid \norm{b} < r\}$.

The following lemma is pointed out by Genki Hosono.
\begin{lem}\label{lem:energy}
	Let $\mc{D}$ be a bounded domain in the finite dimensional C$^*$-algebra $\mf{B}$ and  $f \in \mr{Hol}(\mf{B}, \mc{D})$.
	Assume that $f$ has a unique fixed point $\psi \in \mc{D}$ and  $\lim_{n \to \infty}f^{\circ n}(z) = \psi$ for any $z \in \mf{B}$.
	Then $ \| D_\psi f  \| < 1$.
	In particular, $I - D_\psi f $ is invertible, where $I$ is the identity map.
\end{lem}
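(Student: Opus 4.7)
The plan is to invoke Liouville's theorem in several complex variables, which turns out to collapse the statement almost entirely. Since $\mf{B}$ is a finite-dimensional C$^*$-algebra, fix a $\C$-linear isomorphism $\mf{B} \cong \C^N$ with $N = \dim_\C \mf{B}$. The hypothesis $f \in \mr{Hol}(\mf{B}, \mc{D})$ says that $f$ is holomorphic on all of $\mf{B}$ with image contained in the bounded set $\mc{D}$, so after the identification $f$ becomes a bounded entire map $\C^N \to \C^N$.

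Next I would apply Liouville's theorem componentwise: each coordinate of $f$ is a bounded entire function $\C^N \to \C$, hence constant. Therefore $f$ itself is a constant map on $\mf{B}$. The single value that $f$ takes must coincide with its fixed point, so $f(z) = \psi$ for every $z \in \mf{B}$; this is of course consistent with (and in fact trivially implies) the assumption $\lim_{n \to \infty} f^{\circ n}(z) = \psi$.

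With $f$ identified as the constant map $\psi$, the derivative is $D_\psi f = 0$, which gives $\|D_\psi f\| = 0 < 1$ and renders $I - D_\psi f = I$ invertible. This completes the argument.

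The ``hard part'' is really just the recognition that, given the stated hypothesis—holomorphy on the entire finite-dimensional space $\mf{B}$ with bounded image—the map is forced to be constant by Liouville, so the unique-fixed-point and convergence assumptions are in fact redundant for the derivative conclusion. Were $f$ instead only defined on $\mc{D}$ itself (as arises in the applications such as Proposition~\ref{prop:iterative}), Liouville would not apply and one would have to invoke Schwarz--Pick- or Earle--Hamilton-type contraction estimates in the Carathéodory/Kobayashi pseudodistance of $\mc{D}$. Under the present formulation, however, no such intrinsic-metric machinery is needed.
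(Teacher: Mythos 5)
Your proof is formally correct for the statement exactly as written, but only because that statement is a misstatement: with $f \in \mr{Hol}(\mf{B}, \mc{D})$, i.e.\ $f$ defined and holomorphic on the whole finite-dimensional space $\mf{B}$ and landing in a bounded set, Liouville does force $f$ to be constant, and the lemma collapses as you describe. That is not the lemma the paper actually uses. In every application (Theorem~\ref{thm:gradient} via Proposition~\ref{prop:iterative}, Theorem~\ref{thm:norm_D_sub} via Proposition~\ref{prop:subordination}) the iterated map is defined only on a proper domain such as $\HM(\mf{B})$ or $\HP(\mf{B})$, never on all of $\mf{B}$, and it is certainly not constant. So the hypothesis should be read as a slip for $f$ holomorphic on some domain $\mc{D}' \supset \mc{D}$ with $f(\mc{D}') \subset \mc{D}$, and for that version Liouville gives nothing.

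The paper's own proof targets this intended, non-degenerate version and proceeds by an entirely different route: normalize $\psi = 0$, pick an eigenvector $\xi$ of $D_0 f$ with eigenvalue $\lambda$, choose $r>0$ with $U(0,2r) \subset \mc{D}$, and apply the one-variable Cauchy formula
\begin{align}
(D_0 f)^n\xi \;=\; \frac{1}{2\pi i}\oint_{|z|=r}\frac{f^{\circ n}(z\xi)}{z^{2}}\,dz .
\end{align}
Pointwise convergence $f^{\circ n}\to 0$ together with uniform boundedness of the iterates (values stay in the bounded set $\mc{D}$) and the bounded convergence theorem forces the right-hand side to $0$, so $\lambda^n\xi\to 0$, i.e.\ $\abs{\lambda}<1$ for every eigenvalue of $D_0 f$. (A side caveat: the paper's closing sentence equates this spectral-radius bound with an operator-norm bound, which holds only for normal operators; but spectral radius $<1$ is already what is used for the \emph{in particular} invertibility of $I - D_\psi f$.)

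In short, your Liouville observation is a genuinely different argument that exposes a defect in the statement, but it does not establish the lemma as actually invoked downstream. To prove that version you would need exactly the bounded-domain machinery---the Cauchy-integral/dominated-convergence estimate above, or an Earle--Hamilton contraction-in-the-Carath\'eodory-metric argument of the kind you gesture at in your final paragraph.
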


\begin{proof}
	Without loss of generality, we may assume that $\psi = 0$, and in particular $f(0)=0$.
	Let $\xi \in \mf{B}$ be an eigenvector of $D_0 f$ and $\lambda$ be the corresponding eigenvalue.
	Fix $r> 0$ such that $U(0,2r) \subset \mc{D}$.
	Then by Cauchy's integral formula it holds that
	\begin{align}\label{eq_cauchy_integration}
	(D_0 f)^n \xi  =\lim_{t \to  0}\frac{f^{\circ n}(t \xi)}{t} = \frac{1}{2\pi}\int_{z \in \C, |z |=r } \frac{f^{\circ n}(z\xi)}{z^2}dz, \ n \in \N.
	\end{align}
	Since the sequence $f^{\circ n}$ is uniform bounded and converges to $0$ as $n \to \infty$ at every point, the   right hand side of \eqref{eq_cauchy_integration} converges to $0$ by the bounded convergence theorem.
	Since $D_0(f^n) = (D_0 f)^n$, the left hand side of \eqref{eq_cauchy_integration} is equal to $\lambda^n \xi$.
	Hence $|\lambda| < 1$.
	Since $\mf{B}$ is finite-dimensional, the spectral norm is equal to the maximum of the absolute value of eigenvalues,  which proves the assertion.
\end{proof}

\begin{thm}\label{thm:gradient}
	Let $\Theta \subset \R^m$ be a non-empty open subset, and
	$\mcR : \HM(\mf{B}) \times \Theta \to \HM(\mf{B})$.
	For $\theta \in \Theta$, let us write $\mcR_\theta:=\mcR(\cdot, \theta)$.
	Assume that following conditions:
	\begin{enumerate}
		\item Fix $\theta \in \Theta$. Then $\mcR_\theta \in \mr{Hol}(\HM(\mf{B}))$
		and bounded on bounded subsets; for any $r >0$,
		\[
		\sup \{ \norm{\mcR_\theta(B)}  \mid B \in \HM(\mf{B}) \cap \oball{r} \} < \infty ,
		\]
		
		\item  the map $\theta \mapsto \mcR( B, \theta)$ is of class C$^1$ for any $B \in \HM(\mf{B})$.
	\end{enumerate}
	We define a map  $\mcG : \HM(\mf{B}) \times \Theta \times \HP(\mf{B})  \to \HM(\mf{B})$ by
	\[
	\mcG(B, \theta, Z) := (Z - \mcR_\theta(B))^{-1},
	\]
	where we denote by $(B, \theta, Z)$ the canonical coordinate on $\HM(\mf{B}) \times \Theta \times \HP(\mf{B})$.
	We write $\mcG_{\theta, Z}(B): = \mcG(B,\theta,Z)$.
	Assume that $G_\theta(Z)$ is the solution in $\HP(\mf{B})$ of the equation $\mcG(W, \theta, Z) = W$.
	Then $||D_{G_\theta(Z)}\mcG_{\theta, Z}|| < 1$  and
	\begin{align}
	\del{G_\theta}{\theta}(Z) = (I - D_{G_\theta(Z)}\mcG_{\theta, Z})^{-1} \del{\mcG}{\theta}( G_\theta(Z), \theta, Z).
	\end{align}
	
\end{thm}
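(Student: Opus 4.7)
My approach is to combine Proposition~\ref{prop:iterative}, Lemma~\ref{lem:energy}, and the implicit function theorem applied to the fixed point equation $B = \mcG(B, \theta, Z)$.

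First, I would verify the hypotheses of Lemma~\ref{lem:energy} hold for the map $\mcG_{\theta, Z} \colon \HM(\mf{B}) \to \HM(\mf{B})$ at its fixed point $G_\theta(Z)$. Assumption~(1) on $\mcR$ is exactly the hypothesis~\eqref{align:assumption-on-R} of Proposition~\ref{prop:iterative}, so I can invoke it in full: for any $r > \|(\Im Z)^{-1}\|$ that is also large enough to contain $G_\theta(Z)$, the bounded domain $\mc{D} := \HM(\mf{B}) \cap \oball{r}$ is mapped strictly into itself by $\mcG_{\theta, Z}$; the map is holomorphic on $\mc{D}$; and by Proposition~\ref{prop:iterative}(4), $G_\theta(Z)$ is the unique fixed point and the iterates from any starting point converge to it in operator norm, in particular from any starting point in $\mc{D}$. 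Lemma~\ref{lem:energy} therefore applies and yields $\|D_{G_\theta(Z)} \mcG_{\theta, Z}\| < 1$.

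Since $\mf{B}$ is finite-dimensional, the Neumann series $\sum_{n \geq 0}(D_{G_\theta(Z)}\mcG_{\theta, Z})^n$ converges absolutely in operator norm and provides a bounded inverse of $I - D_{G_\theta(Z)}\mcG_{\theta, Z}$. This settles the first conclusion.

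For the derivative formula, I would set up
\[
F \colon \HM(\mf{B}) \times \Theta \to \mf{B}, \quad F(B, \theta) := B - \mcG(B, \theta, Z),
\]
which satisfies $F(G_\theta(Z), \theta) = 0$. By assumption~(2) and holomorphy of $\mcG$ in $B$, the map $F$ is of class $C^1$ jointly in $(B, \theta)$, with $D_B F(G_\theta(Z), \theta) = I - D_{G_\theta(Z)}\mcG_{\theta, Z}$, which is invertible by the previous step. The implicit function theorem then guarantees $\theta \mapsto G_\theta(Z)$ is of class $C^1$. Differentiating the fixed point identity $G_\theta(Z) = \mcG(G_\theta(Z), \theta, Z)$ via the chain rule gives
\[
\del{G_\theta}{\theta}(Z) = D_{G_\theta(Z)} \mcG_{\theta, Z} \cdot \del{G_\theta}{\theta}(Z) + \del{\mcG}{\theta}(G_\theta(Z), \theta, Z),
\]
and rearranging and applying $(I - D_{G_\theta(Z)}\mcG_{\theta, Z})^{-1}$ to both sides yields the stated formula.

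The main obstacle, as I see it, is the first step: correctly matching the setup of Lemma~\ref{lem:energy}, whose statement requires iterates to converge to the fixed point from every point of the ambient space (not just the invariant bounded subdomain $\mc{D}$). This is precisely why I invoke Proposition~\ref{prop:iterative}(4), which gives convergence from any $B_0 \in \HM(\mf{B})$; after translating $G_\theta(Z)$ to $0$ as in the proof of Lemma~\ref{lem:energy}, the Cauchy integral argument then caps the spectral radius of $D_{G_\theta(Z)} \mcG_{\theta, Z}$ by $1$ strictly. Everything else is routine implicit function calculus.
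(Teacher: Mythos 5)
Your proof is correct and takes essentially the same route as the paper: verify via Proposition~\ref{prop:iterative} that $\mcG_{\theta,Z}$ meets the hypotheses of Lemma~\ref{lem:energy} to get $\|D_{G_\theta(Z)}\mcG_{\theta,Z}\|<1$, then differentiate the fixed-point identity. The paper's proof is a one-liner that leaves the implicit-differentiation step entirely implicit; you correctly supply it, and you also correctly flag the domain-matching subtlety (Lemma~\ref{lem:energy} is stated for $f\in\mr{Hol}(\mf{B},\mc{D})$ while $\mcG_{\theta,Z}$ lives on $\HM(\mf{B})$) together with the fix — translating to the fixed point and using only the local Cauchy-integral argument inside the invariant ball.
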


\begin{proof}
	Fix $\theta$ and $Z$. Then by Proposition~\ref{prop:iterative}, $\mcG_{\theta, Z}$ satisfies the conditions in Lemma~\ref{lem:energy}, which proves the assertion.
\end{proof}

By the same lemma, we show that the following theorem about the gradient of the subordination.
\begin{thm}\label{thm:norm_D_sub}
	Under the setting of Proposition~\ref{prop:subordination} with the assumption that $\mf{B}$ is finite dimensional, it holds that
	$\norm{ D_{\psi(Z)} \Psi_{Z}} < 1$ for any $Z \in \HP(\mf{B})$.
\end{thm}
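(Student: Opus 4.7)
The plan is to reduce the statement to Lemma~\ref{lem:energy} applied on a bounded invariant domain supplied by Proposition~\ref{prop:subordination}. Fix $Z \in \HP(\mf{B})$ and $\eps > 0$ with $\Im Z > \eps I_{\mf{A}}$. By the last part of Proposition~\ref{prop:subordination}, there is $m > 0$ such that the bounded domain $\mc{D}_{Z,x,y} = U(0, 2m) \cap (\HP(\mf{B}) + i\eps/2)$ satisfies $\Psi_Z(\mc{D}_{Z,x,y}) \subsetneqq \mc{D}_{Z,x,y}$. Parts (1)--(2) of the same proposition then give $\psi(Z) \in \mc{D}_{Z,x,y}$, $\Psi_Z(\psi(Z)) = \psi(Z)$, and $\Psi_Z^{\circ n}(B_0) \to \psi(Z)$ for every $B_0 \in \mc{D}_{Z,x,y}$, so in particular $\psi(Z)$ is the unique fixed point of $\Psi_Z$ in $\mc{D}_{Z,x,y}$.

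Next I would translate by the fixed point. Set $\tilde{\Psi}(w) := \Psi_Z(w + \psi(Z)) - \psi(Z)$ and $\tilde{\mc{D}} := \mc{D}_{Z,x,y} - \psi(Z)$. Then $\tilde{\mc{D}}$ is a bounded domain in the finite-dimensional C$^*$-algebra $\mf{B}$ containing $0$; $\tilde{\Psi}$ is holomorphic on $\tilde{\mc{D}}$, since $\Psi_Z$ is a composition of the $h$-transforms $h_x, h_y$ and a translation by $Z$, each holomorphic on $\HP(\mf{B})$; $\tilde{\Psi}$ maps $\tilde{\mc{D}}$ strictly into itself; $\tilde{\Psi}(0) = 0$ is its unique fixed point; and $\tilde{\Psi}^{\circ n}(w) \to 0$ for every $w \in \tilde{\mc{D}}$.

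These are exactly the ingredients used in the proof of Lemma~\ref{lem:energy}. Concretely, for any eigenvector $\xi \in \mf{B}$ of $D_0 \tilde{\Psi} = D_{\psi(Z)} \Psi_Z$ with eigenvalue $\lambda$, I would pick $r > 0$ small enough that $U(0, 2r) \subset \tilde{\mc{D}}$ and invoke the Cauchy integral formula
\begin{align}
\lambda^n \xi = (D_0 \tilde{\Psi})^n \xi = \frac{1}{2\pi} \oint_{|z| = r} \frac{\tilde{\Psi}^{\circ n}(z\xi)}{z^2}\, dz.
\end{align}
Since the iterates $\tilde{\Psi}^{\circ n}(z\xi)$ remain inside the bounded set $\tilde{\mc{D}}$ (hence are uniformly norm-bounded) and converge pointwise to $0$, the bounded convergence theorem forces the right-hand side to tend to $0$, whence $|\lambda| < 1$. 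As $\mf{B}$ is finite-dimensional, the conclusion $\|D_{\psi(Z)} \Psi_Z\| < 1$ then follows exactly as in the last line of the proof of Lemma~\ref{lem:energy}.

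The principal technicality is that Lemma~\ref{lem:energy} is formally stated for $f \in \mr{Hol}(\mf{B}, \mc{D})$, whereas $\Psi_Z$ is only defined on $\HP(\mf{B})$; its proof, however, never uses $f$ outside a small ball around the fixed point, so after translating by $\psi(Z)$ the argument applies verbatim to $\tilde{\Psi}$ on $\tilde{\mc{D}}$. All the substantive input is already supplied by Proposition~\ref{prop:subordination}, which furnishes both the bounded invariant domain $\mc{D}_{Z,x,y}$ and the global convergence of the iterates to $\psi(Z)$.
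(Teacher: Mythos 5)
Your proposal is correct and takes essentially the same route as the paper: fix $Z$, invoke the bounded invariant domain $\mc{D}_{Z,x,y}$ and the global convergence of iterates from Proposition~\ref{prop:subordination}, then apply Lemma~\ref{lem:energy} to conclude $\norm{D_{\psi(Z)}\Psi_Z}<1$. The paper's proof is a one-liner delegating everything to Lemma~\ref{lem:energy}; you have, in effect, unpacked that delegation and flagged the one cosmetic mismatch---Lemma~\ref{lem:energy} is stated for $f\in\mr{Hol}(\mf{B},\mc{D})$ whereas $\Psi_Z$ is defined only on $\HP(\mf{B})$---and correctly observed that the proof of the lemma only evaluates $f$ on a small ball around the fixed point, so the restriction of $\Psi_Z$ to $\mc{D}_{Z,x,y}$ (after recentering at $\psi(Z)$) suffices. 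That remark is a small but genuine improvement in rigor over the paper's terse citation, and your identification of $\psi(Z)\in\mc{D}_{Z,x,y}$ (via $\psi(Z)=\Psi_Z(\psi(Z))\in\Psi_Z(\HP(\mf{B}))\subsetneqq\mc{D}_{Z,x,y}$) and its uniqueness there are what the paper tacitly assumes.
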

\begin{proof}
	Fix $Z$. Then by Proposition~\ref{prop:subordination}, $\Psi_Z$ satisfies the conditions in Lemma~\ref{lem:energy}, which proves the assertion.
\end{proof}

\begin{cor}\label{cor:derivation_omega}
	Under the setting of Corollary~\ref{cor:fwd_SPN_sub}, we have
	\begin{align}
	\del{\psi(Z)}{a_k} &= (1 - D_{\psi(Z)} \Psi_Z)^{-1} \cdot \del{h_a}{a_k}(\psi(Z)), \ k= 1, \dots, d, \\
	\del{\psi(Z)}{\sigma} &= (1 - D_{\psi(Z)} \Psi_Z)^{-1} \cdot D_{\psi(Z)} h_a \cdot  \del{h_{\sigma S}}{\sigma}(\psi(Z)),
	\end{align}
	for any $B \in \HP(\mf{D}_2)$, $a \in \R^d$, and $\sigma \in \R$, where
	$h_a$ is the $\mf{D}_2$-valued h-transform of $A = \iota^d_{p,d}(a)$.
\end{cor}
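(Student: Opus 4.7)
The plan is to establish both identities by implicit differentiation of the fixed-point identity
\begin{equation}
\psi(Z) = \Psi_Z(\psi(Z)) = h_a\bigl(h_\sigma(\psi(Z)) + Z\bigr) + Z
\end{equation}
from Corollary~\ref{cor:fwd_SPN_sub}, and then to invert the operator $I - D_{\psi(Z)}\Psi_Z$, whose invertibility is furnished by Theorem~\ref{thm:norm_D_sub}. A preliminary step is to ensure that $\psi$ depends in a $C^1$ fashion jointly on $(a,\sigma)$; this follows from the holomorphic implicit function theorem applied to the equation $\Psi_{Z,a,\sigma}(B) - B = 0$ at $B = \psi(Z,a,\sigma)$, since its Fr\'echet derivative in $B$ equals $-(I - D_{\psi(Z)}\Psi_Z)$ and is invertible by Theorem~\ref{thm:norm_D_sub}.

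For the first identity, I would differentiate both sides of the fixed-point equation in $a_k$. Because $Z$, $\sigma$ and $h_\sigma$ do not depend on $a_k$, the chain rule yields
\begin{equation}
\del{\psi}{a_k} = \left(\del{h_a}{a_k}\right)\bigl(h_\sigma(\psi) + Z\bigr) + D_{h_\sigma(\psi)+Z} h_a \cdot D_{\psi} h_\sigma \cdot \del{\psi}{a_k}.
\end{equation}
The chain-rule decomposition of $\Psi_Z = h_a \circ (h_\sigma + Z) + Z$ gives $D_{\psi(Z)}\Psi_Z = D_{h_\sigma(\psi(Z))+Z} h_a \cdot D_{\psi(Z)} h_\sigma$, so collecting $\del{\psi}{a_k}$ on the left and applying $(I - D_{\psi(Z)}\Psi_Z)^{-1}$ produces the first identity. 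For the $\sigma$-derivative the argument is analogous: since $h_a$ is independent of $\sigma$, one differentiates to obtain
\begin{equation}
\del{\psi}{\sigma} = D_{h_\sigma(\psi)+Z} h_a \cdot \left[\left(\del{h_{\sigma S}}{\sigma}\right)(\psi) + D_\psi h_\sigma \cdot \del{\psi}{\sigma}\right],
\end{equation}
and the same rearrangement, using $D_{h_\sigma(\psi(Z))+Z} h_a = D_{\psi(Z)} h_a$ by the abuse of notation adopted in the statement, yields the second identity.

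The main obstacle is the invertibility of $I - D_{\psi(Z)}\Psi_Z$, which is not obvious a priori on $\HP(\mf{D}_2)$; it has, however, already been supplied by Theorem~\ref{thm:norm_D_sub} through the energy-style Lemma~\ref{lem:energy}. Once that invertibility is in hand, the rest is a routine application of the chain rule together with the implicit function theorem, and the remaining bookkeeping amounts to matching the inner and outer arguments correctly when reading the formal expressions $(\del{h_a}{a_k})(\psi(Z))$ and $(\del{h_{\sigma S}}{\sigma})(\psi(Z))$ as written in the statement.
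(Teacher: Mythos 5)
Your argument is correct and is exactly the paper's argument spelled out: the paper's proof consists only of the remark that the claim is \enquote{a direct consequence of Theorem~\ref{thm:norm_D_sub} and Proposition~\ref{cor:fwd_SPN_sub},} and what that hides is precisely the implicit differentiation of $\psi=\Psi_Z(\psi)$ that you carry out, with $I-D_{\psi(Z)}\Psi_Z$ inverted via Theorem~\ref{thm:norm_D_sub}/Lemma~\ref{lem:energy}. One small note: the mismatch you flag for the $\sigma$-derivative (the inner argument of $D h_a$ should be $h_\sigma(\psi(Z))+Z$ rather than $\psi(Z)$) occurs equally in the first identity for $\partial\psi/\partial a_k$; your own computation yields $\left(\partial h_a/\partial a_k\right)\bigl(h_\sigma(\psi(Z))+Z\bigr)$, so the same notational convention must be read into both lines of the corollary, not just the second.
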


\begin{proof}
	Recall that $\mf{D}_2$ is finite-dimensional.
	The assertion is a direct consequence  of Theorem~\ref{thm:norm_D_sub} and Proposition~\ref{cor:fwd_SPN_sub}.
\end{proof}

\subsection{Proof of Main Theorem}\label{sscec:proofs-of-main}

\begin{proof}[Proof of Theorem~\ref{thm:main_CW}]
	Define $\mu_\CW^\Box( \theta) := \mu_{W^\Box_\CW(\theta)}$ for $\theta \in \Theta_\CW(p,d)$ and $p,d \in \N$ with $p \geq d$.
	Then the assertion (1) follows from Corollary~\ref{cor_cnl_converges}.
	The Cauchy transform $\cc{W_\CW^\Box(\iota(a))}(z)$ is the solution  of the equation $G = \mcG(G, z,a)$ in the variable $G$ by \cite[Section~6.1]{calros2015free}. 
	
	Next we claim that $R$ defined in (2) satisfies the assumption \eqref{align:assumption-on-R}.
	Fix $v \in \R^p$.
	For any $b \in \HM{}$,  write $b = x + iy$ where $x, y \in \R$ with $y < 0$.
	Then
	\begin{align}
	\Im \mcR(b,v) = \frac{1}{d}\sum_{k=1}^p v_k  \frac{ v_k y }{ (1 - v_k x)^2 + v_k^2y^2 } = \left[ \frac{1}{d}\sum_{k=1}^p\frac{ v_k^2 }{ (1 - v_k x)^2 + v_k^2y^2 } \right] y < 0.
	\end{align}
	It holds that
	\begin{align}
	\abs{ \mcG(b,z,v)} \leq  \frac{1}{\Im z}.
	\end{align}
	Pick $r, \delta > 0$, with
	\begin{align}
	\frac{1}{\Im z } < r- \delta < r < r + \delta  <  \min \{ \frac{1}{v_k} \mid k= 1, \dots, p \}.
	\end{align}
	Then
	\begin{align}
	\abs { \abs{ \mcG(b,z,v)} - g } > \delta,
	\end{align}
	for any $g \in \C$ with $ \abs{g} = r$.
	In addition, for any $b \in \HM{} \cap U(0,r)$,
	\begin{align}
	\abs{v_k^{-1} - b} \geq \delta.
	\end{align}
	Thus we have
	\begin{align}
	\sup\{ \abs{R(b)}  \mid  b \in \HM{}   \cap U(0,r) \} \leq \frac{p}{d\delta} < \infty.
	\end{align}
	Set
	\begin{align}
	m_r = \abs{z} +      \sup\{ \abs{R(b)}  \mid  b \in \HM{}   \cap U(0,r) \} < \infty.
	\end{align}
	Then
	\begin{align}
	- \Im \mc{G}(\cdot, z, v) \geq \frac{\abs{\Im z}}{m_r^2 } > 0.
	\end{align}
	Therefore, $\mc{G}(\cdot, z, v)$ maps $\HM{}   \cap U(0,r)$ strictly into itself: set $\eps := \min(\delta, \Im z /m_r^2)$ then
	\begin{align}
	\mc{G}\left( \HM{} \cap U(0,r), z, v\right) + U(0,\eps) \subset U(0,r) \cap \HM{} .
	\end{align}
	Hence the claim is proven and the assertion (2) follows from Proposition~\ref{prop:iterative}.
	
	The assertion (3) directly follows from (2).
	
	For any $z \in \HP$ and $A_0 \in M_p(\C)$, there is a domain $\eu{D} \subset M_p(\C)$ such that the map $A \mapsto (z - W_\CW^\Box(A))^{-1}$ is holomorphic on $\eu{D}$. 
	In addition, by Theorem~\ref{thm:gradient},  (4) follows.
\end{proof}

\begin{proof}[Proof of Theorem~\ref{thm:main_spn}]
	Define $\mu_\SPN^\Box (\theta) := \mu_{W_\SPN^\Box(\theta)}$.
	The assertion (1) follows from Corollary~\ref{cor_cnl_converges}, and (2) follows from Corollary~\ref{cor:fwd_SPN_sub} and the identification by the $*$-isomorphism  $\mf{D}_2 \simeq \C^2$.
	The assertion    (3) directly follows from (2).
	For any $z \in \HP$,  $ \theta_0 \in \Theta_\SPN(p,d)$ and $p, d \in \N$ with $ p> d$, there is a domain $\eu{D} \subset \Theta_\SPN(p,d)$ such that $\theta_0 \in \Theta_\SPN(p,d)$ and  the map $ \theta \mapsto (z - W_\SPN^\Box(\theta))^{-1}$ is holomorphic on $\eu{D}$. 
	In addition, by Theorem~\ref{thm:gradient} and by Theorem~\ref{thm:norm_D_sub},   (4) follows.
\end{proof}

\section{Experiments and Discussion}

\bpara{Implementation Detail}
First, we discuss numerical considerations.

The first one is about the iterative method I (see Section~\ref{sssection:iterative-I}).
As proposed in \cite{helton2007operator}, when we compute the Cauchy transform (or matrix-valued one) by the iteration described in  Proposition~\ref{prop:iterative}, we replace the map 
$B \mapsto \mcG_Z(B)$ by the averaged version $B \mapsto \widetilde{\mcG}_Z(B):= B/2 + \mcG_Z(B)/2$. We observed the speed up of the convergence in our examples FDECW model and FDESPN model by using the averaging. 
We continue  the iterates  while the difference is not small ;
$\norm{\widetilde{\mcG}_Z^{n+1}(G_0) - \widetilde{\mcG}_Z^n(G_0) }_2 > \eps$,
where the norm is Euclid norm and $\eps > 0$ is a given threshold. 
We set $\eps := 10^{-8}$.
In our algorithm we have to use the iterative method I for many values $Z_n = z_n I_d$ $(n=1, \dots, k)$ and $d \in \N$ is the dimension of (operator-valued) Cauchy transform. For the speed-up, we use $G(Z_n)$ as the initial value to compute $G(Z_{n+1})$, and  use the initial value $G_0 := -i I_d $ to compute $G(Z_1)$.

The second one is about the iterative method II (see Section~\ref{sssec:iterative-II}), that is, the subordination method.
We do not use the averaging for iterates of $\Psi$ in the subordination method. 
We continue the iterates while the difference is not small, i.e., its Euclid norm is larger than $\eps = 10^{-8}$.
We set the initial value used for the map $\Psi$ to be $iI_d$.
In particular, for SPN model, we set it $(i,i) \in \C^2$. 
Third,  for Adam (see Section~\ref{ssec:optimization}), we set $\alpha = 10^{-4}$, $\beta_1 = 0.9$, $\beta_2 = 0.999$, $\eps = 10^{-8}$ as suggested in \cite{kingma2014adam}.

\subsection{Optimization without Regularization}
\hfill

\noindent In this section, we show numerical results of optimization of CW and SPN under some values of the scale parameter $\gamma$.
Throughout experiments, each sample is a single-shot observation generated from a true model.

\newcommand{\sort}{\mr{sort}}

\bpara{Assumption} We assume the following conditions.

\begin{enumerate}
	\setlength{\leftskip}{4mm}
	
	\item [(CW1)] $p,d = 50$.
	\item [(CW2)]  Each true parameter $b_\mr{true} \in \R^d$ of CW model is generated uniformly from $[-0.1, 0.1]^p$.
	\item [(CW3)] We use the parameter space $\Xi_\CW(p,d, M)$  with $M=1$.
	\item [(CW4)]  The initial value of the parameter $b \in \R^p$  is generated uniformly from $[- 1/ \sqrt{p}, 1/ \sqrt{p}]^p$.
	
\end{enumerate}

\begin{enumerate}
	\setlength{\leftskip}{5mm}
	\item [(SPN1)] $p,d = 50$.
	\item [(SPN2)] Each sample is generated from  $W_\SPN(A_\mr{true}, \sigma_\mr{true})$ over $\R$, where $\sigma_\mr{true}=0.1$ and $A_\mr{true}$ is drawn from 
	$A_\mr{true} = U D V$such that
	\begin{enumerate}
		\item $U \in M_p(\R), V \in M_d(\R)$ are drawn independently from the uniform distribution on the orthogonal matrices,
		\item $D \in M_{p,d}(\R)$  is a rectangular diagonal matrix given by the array $a_\mr{true} \in \R^d$ which is uniformly generated from $[0, 1]^d$.
	\end{enumerate}
	\item [(SPN3)] We use the parameter space $\Xi_\SPN(p,d, 1.2)$.
	\item [(SPN4)] As the initial values, we set $\sigma = 0.2$, and set $a$ to the vector of eigenvalues of the sample matrix.
\end{enumerate}    
\bigskip

\bpara{Validation Loss} To evaluate the optimized parameter, we use the validation loss defined as follows.
\begin{align}
V_\CW(b, b_\mr{true}) &:= \norm{ \sort(b) - \sort(b_\mr{true})}_2,    \\        V_\SPN((a,\sigma), (a_\mr{true}, \sigma_\mr{true})) &:= \norm{ \sort(a) - \sort(a_\mr{true})}_2 + \abs{\sigma - \sigma_\mr{true}},
\end{align}
where $\sort(v) \in \R^p$ is the sorted vector of $v \in \R^p$ in ascending order, $b$ is the estimated parameter. Here we compare sorted vectors because eigenvalue distributions are stable under any permutations.
\begin{figure}[htbp]
	\ifthenelse{\boolean{DVIPDF}}{}{
		\centering
		\includegraphics[width=0.49\linewidth]{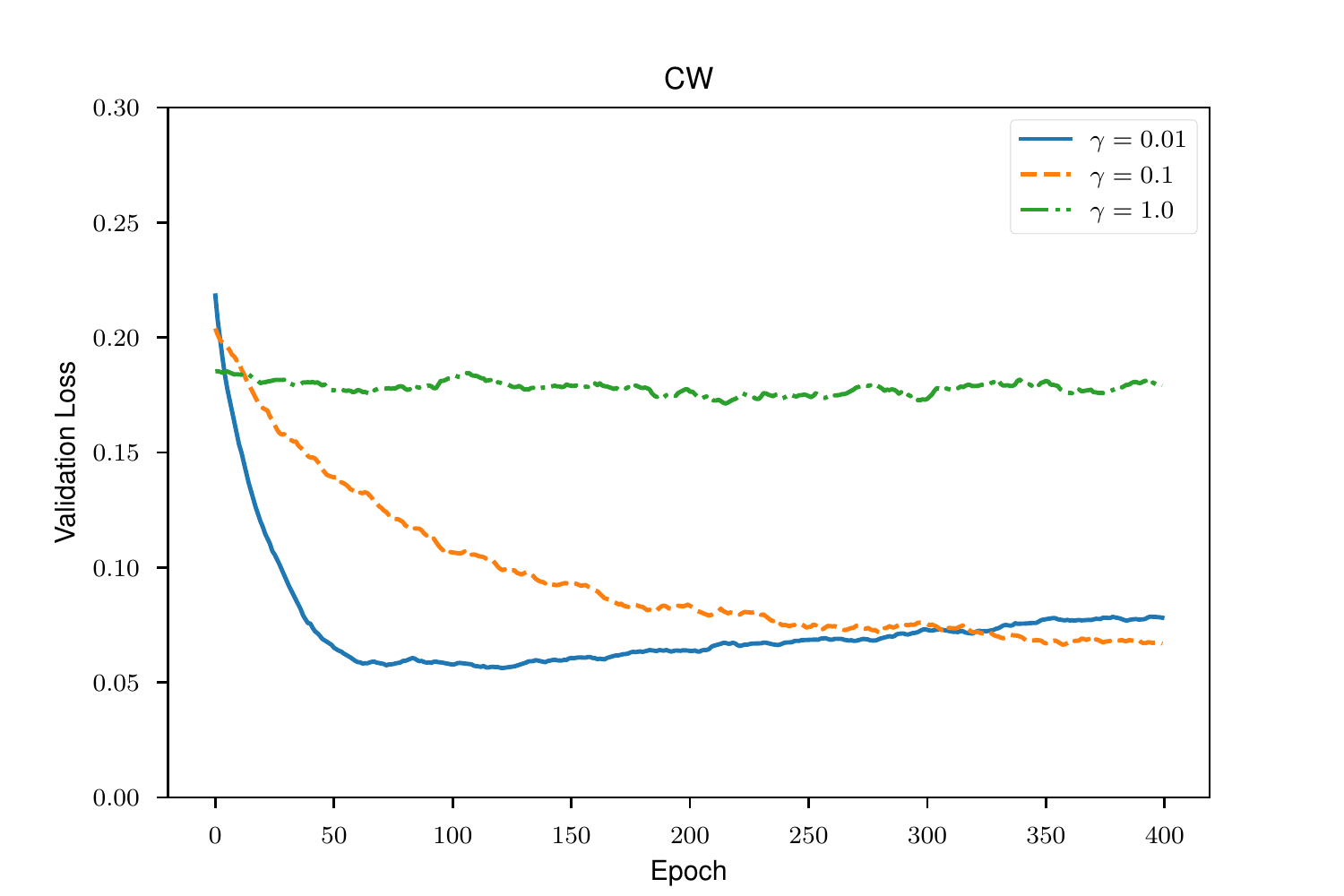}    
		\includegraphics[width=0.48\linewidth]{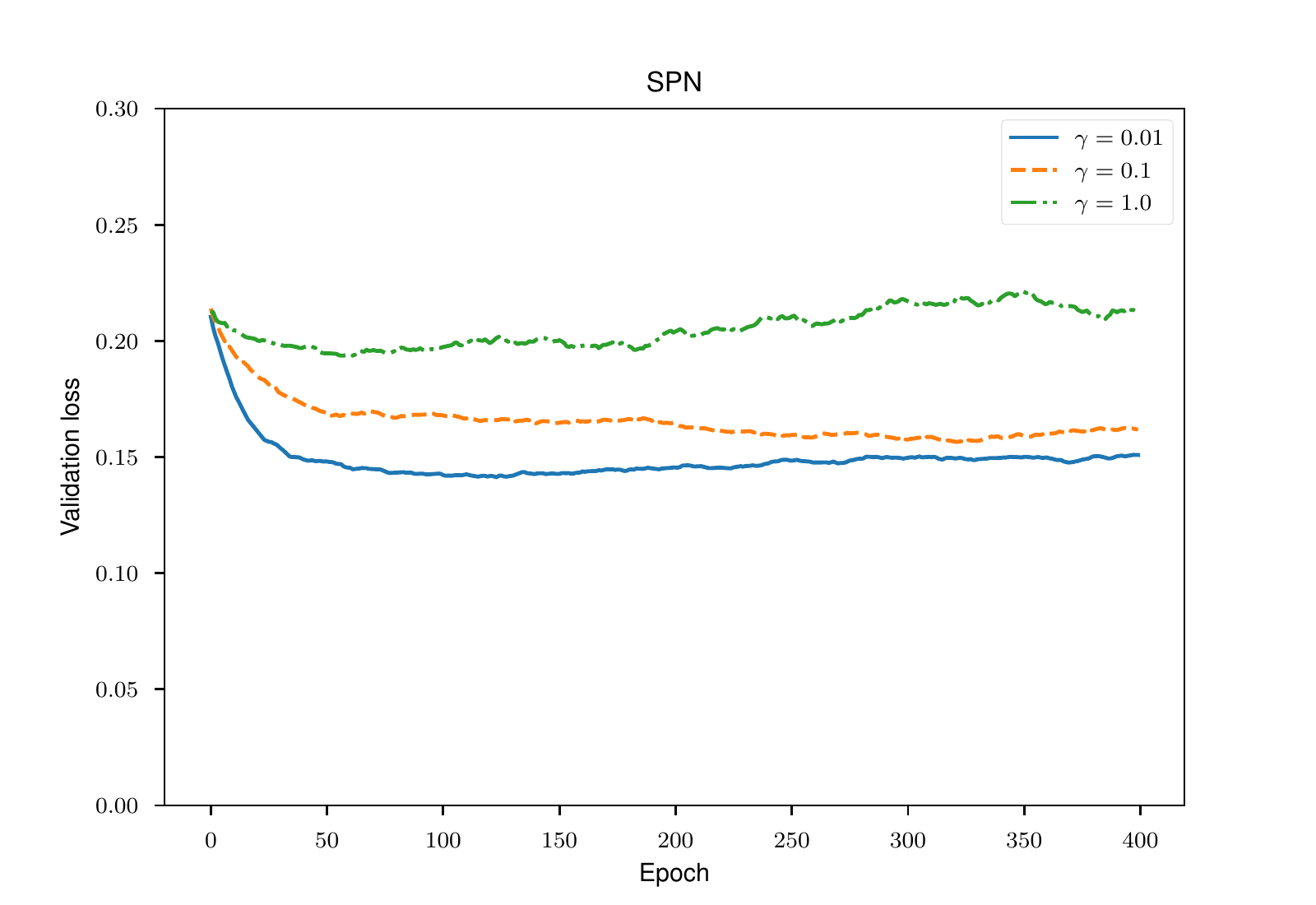}
	}
	\caption{Validations loss curves of CW model (left) and that of SPN model (right). We set the scale $\gamma=0.01,0.1$ and  $1$.  Each curve is the average of $10$-experiments.}    
	\label{fig:val-cw-spn}
\end{figure}

Figure \ref{fig:val-cw-spn} shows the optimization results of CW model and SPN model. 
The horizontal axis indicates the number of optimization epochs.
We set the max iteration as  $N = 400 d$ for both models.
The vertical axis indicates the validation loss  $V_\CW$ and  $V_\SPN$.

There was a  difference between the scales;  the smaller the scale became, the faster the validation loss decreased.  
For the large scale, the convergence speed became slow, or the validation loss did not converge.
The significant finding is that the values of the validation loss at the stationary points did not become different so much between $\gamma= 0.1 $ and $0.01$.
As a consequene,  the parameter A does not need to be too small.

\begin{table}[htbp]
	\caption{The total number of iterations of $\mc{G}$ per step averaging over $N=2.0 \times 10^5$ steps.  Each value is averaged over $10$ experiments with the sample standard deviation. }
	\label{table:iteration}
	\begin{tabular}{|c||c|c|c|}
		\hline
		$\gamma$ & $0.01$ & $0.1$ & $1$ \\
		\hline 
		$\CW (p=d=50)$ & $ 3.5 \times 10$ $(\pm 1.1)$ & $2.7 \times 10$ $(\pm 0.1 )$ & $2.6 \times 10$ $(\pm 0.0)$ \\
		$\CW (p=d=200)$ & $ 3.5 \times 10$ $(\pm 1.1)$ & $2.7 \times 10$ $(\pm 0.1 )$ & $2.6 \times 10$ $(\pm 0.0)$ \\
		\hline
		$\SPN(p=d=50)$ & $1.4 \times 10^3$ $(\pm 4.2 \times 10^2)$ & $1.3\times 10^2$ $(\pm 3.9 \times 10 )$ & $ 7.5 \times 10$ $(\pm 0.6 )$\\
		$\SPN(p=d=200)$ & $ 0.5 \times 10^3$ $(\pm 1.5 \times 10^2)$ & $1.0\times 10^2$ $(\pm 1.7 )$ & $ 7.5 \times 10$ $(\pm 0.4 )$\\
		\hline
	\end{tabular}
\end{table}

Now, Table~\ref{table:iteration} shows that 
the number of iterations for computing Cauchy transforms increased as the scale was set small.
Besides, the number corresponding to the SPN model increased faster than that corresponding to the CW model.
Therefore, it turned out that too small $\gamma$ is not suited for the SPN model.
However, the numbers for both CW and SPN models did not increase as the dimension increased. 
Further investigation is required to find the ideal way to choose the scale parameter $\gamma$.

Note that we compute the Cauchy noise loss for SPN and CW models by iterative mappings on one or two-dimensional complex vector space,
and we compute their gradients  by the implicit differentiation.    
Therefore,  each step of the iterative methods and computing gradients require low time complexity concerning $d$ and $p$.

\subsection{Dimensionality Recovery}
\hfill

\noindent In this section, we show the dimensionality recovery method based on the optimization of SPN model by using the Cauchy noise loss with a regularization term.
We assume the followings with (SPN3) and (SPN4).

\begin{enumerate}
	\item [(SPN1')] $(p,d)= (50, 50)$ or $(100, 50)$.
\setlength{\leftskip}{5mm}

	\item [(SPN2')]
\setlength{\leftskip}{5mm}

	\begin{enumerate}
\setlength{\leftskip}{5mm}

		\item $U \in M_p(\R), V \in M_d(\R)$ are drawn independently from the  uniform distribution on the orthogonal matrices,
		\item $D \in M_{p,d}(\R)$  is a rectangular diagonal matrix given by the array 
		\[(0 ,\dots, 0, x_1, \dots, x_{d_\mr{true}}) \in \R^d,\]
		where $x_1, \dots, x_{d_\mr{true}}$ are generated independently from the uniform distribution on $[\lambda_\mr{min}, 1]$,
		$d_\mr{true} = 10,20,30,40$, and $\lambda_\mr{min} = 0.05, 0.1, 0.15, 0.2, 0.3, 0.4$.
	\end{enumerate}

\end{enumerate}
From a single-shot sample matrix, we estimate $d_\mr{true}$.
The dimensionality recovery based on the Cauchy noise loss is as follows.
To shrink small parameters to zero, we add  the $L^1$ regularization term of $a$, defined as the following,  to the Cauchy noise loss.
\begin{align}
\norm{a}_1:=  \sum_{i=1}^{d}\abs{a_i}. 
\end{align}
In our experiments, we fixed $\xi= 10^{-3}$.
First, we optimize the parameters $(a, \sigma)$ based on Algorithm~\ref{alg:BOGD-FDE} by using the Cauchy noise loss with the regularization term defined as 
\begin{align}
L_{\gamma, \xi}(a,\sigma, \lambda) := L_\gamma( \mu_\SPN^\Box(a,\sigma), \lambda) + \xi \norm{a}_1,
\end{align} 
where $\xi > 0$, instead of by using the Cauchy nose loss itself.
Lastly, the rank is estimated as $\#\{ j \mid a_j < \xi\}$.

\begin{figure}[htbp]
	\ifthenelse{\boolean{DVIPDF}}{}{
		\centering
		\includegraphics[width=0.495\linewidth]{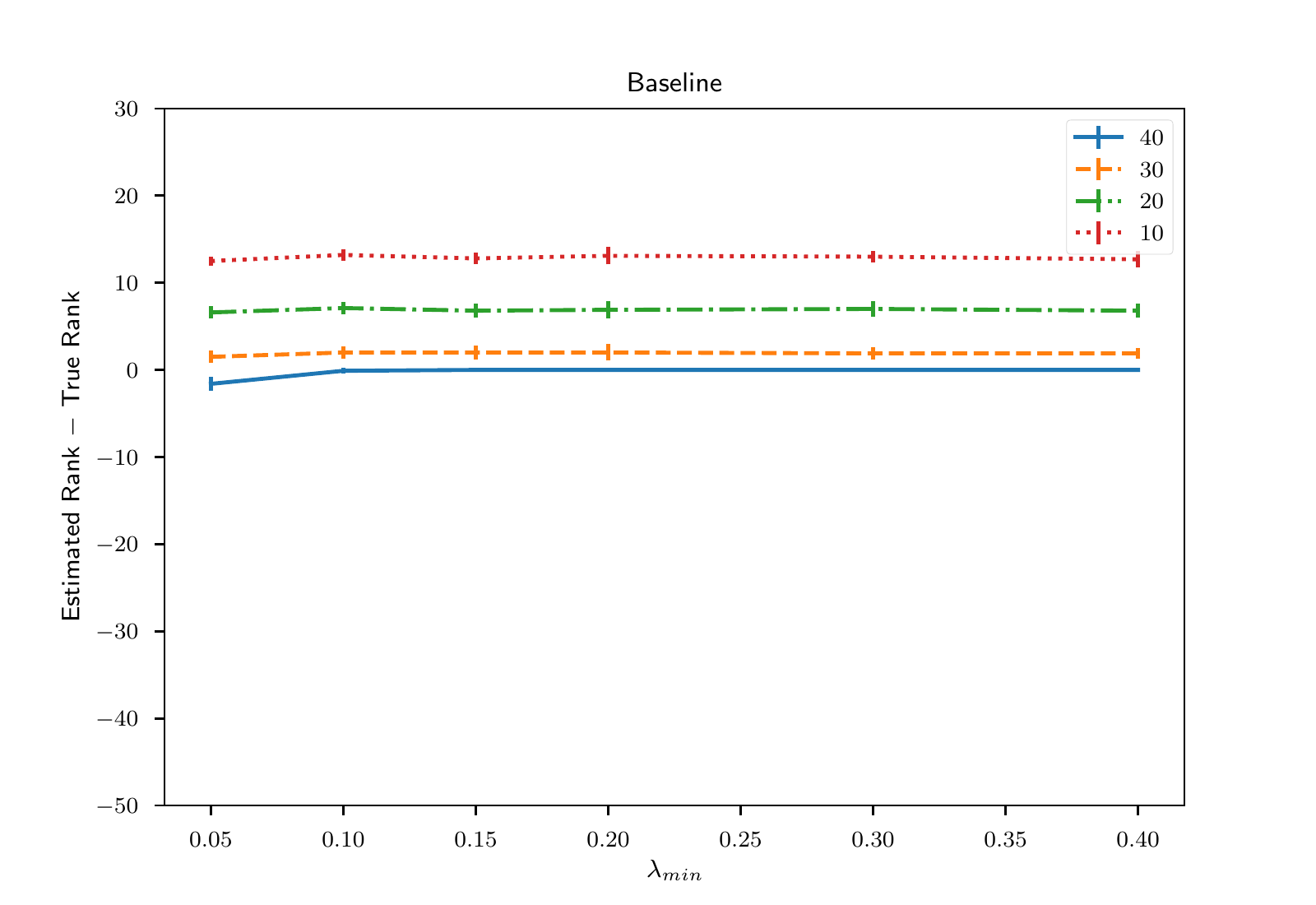}    
\includegraphics[width=0.495\linewidth]{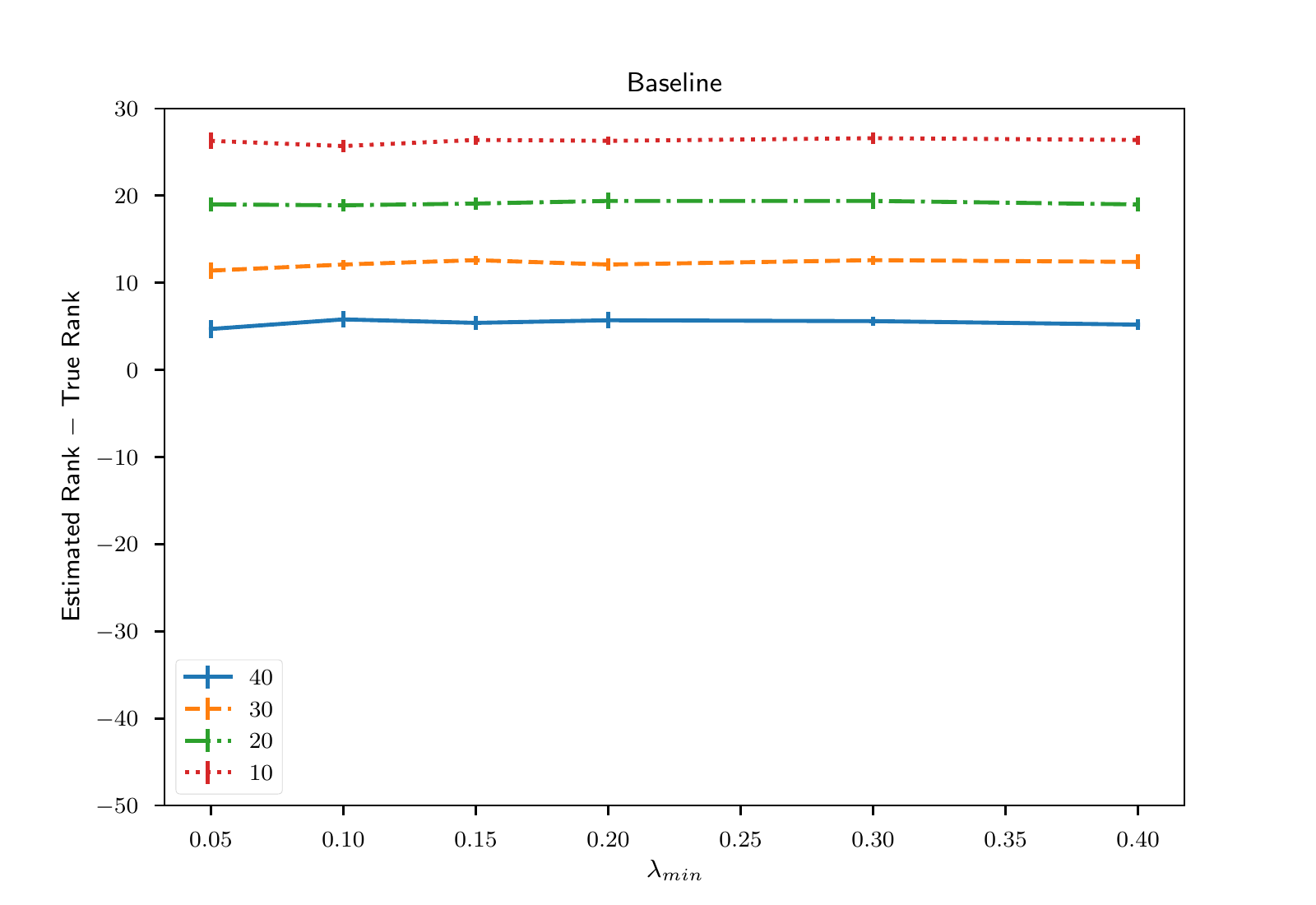}

\includegraphics[width=0.495\linewidth]{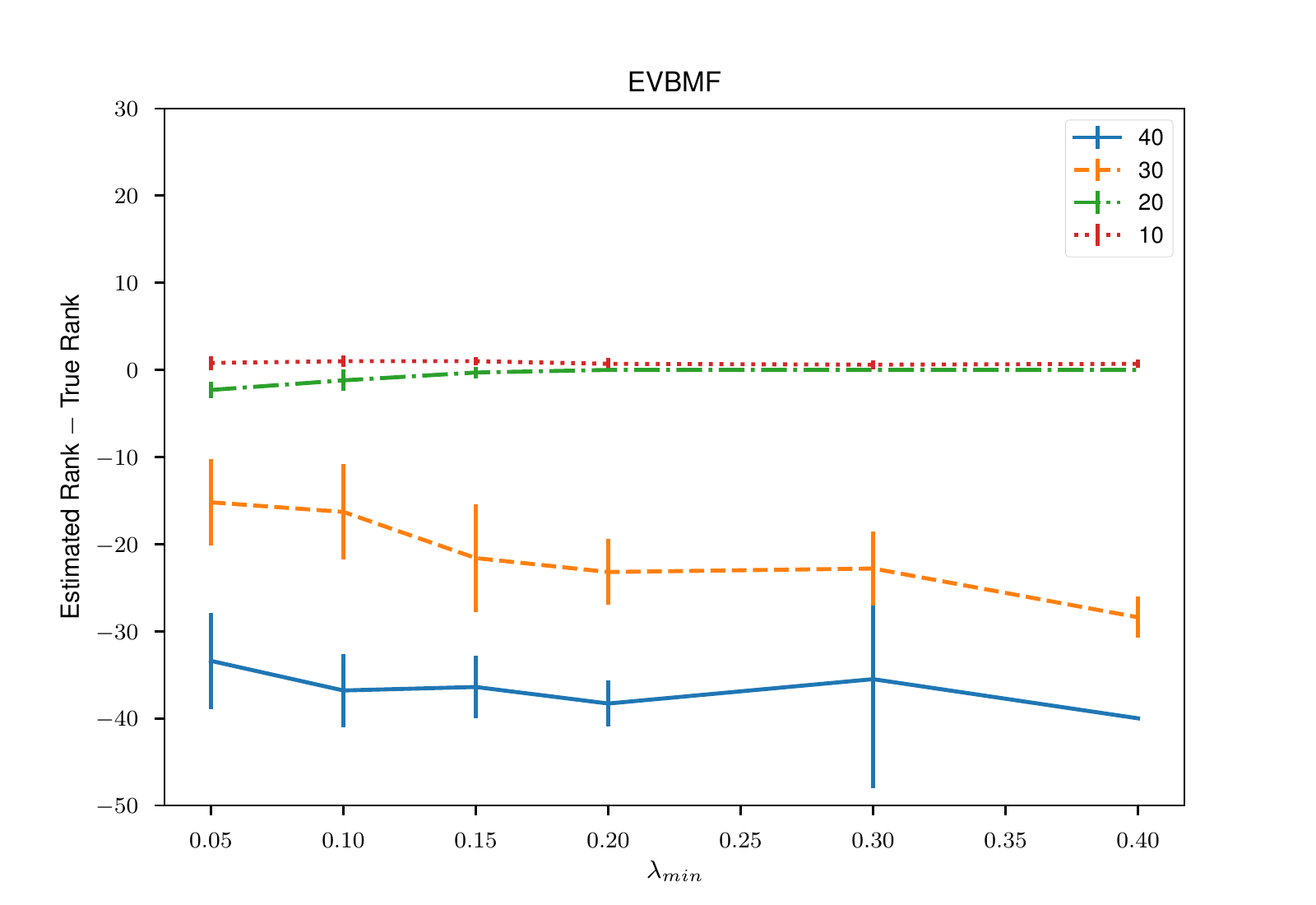}    
\includegraphics[width=0.495\linewidth]{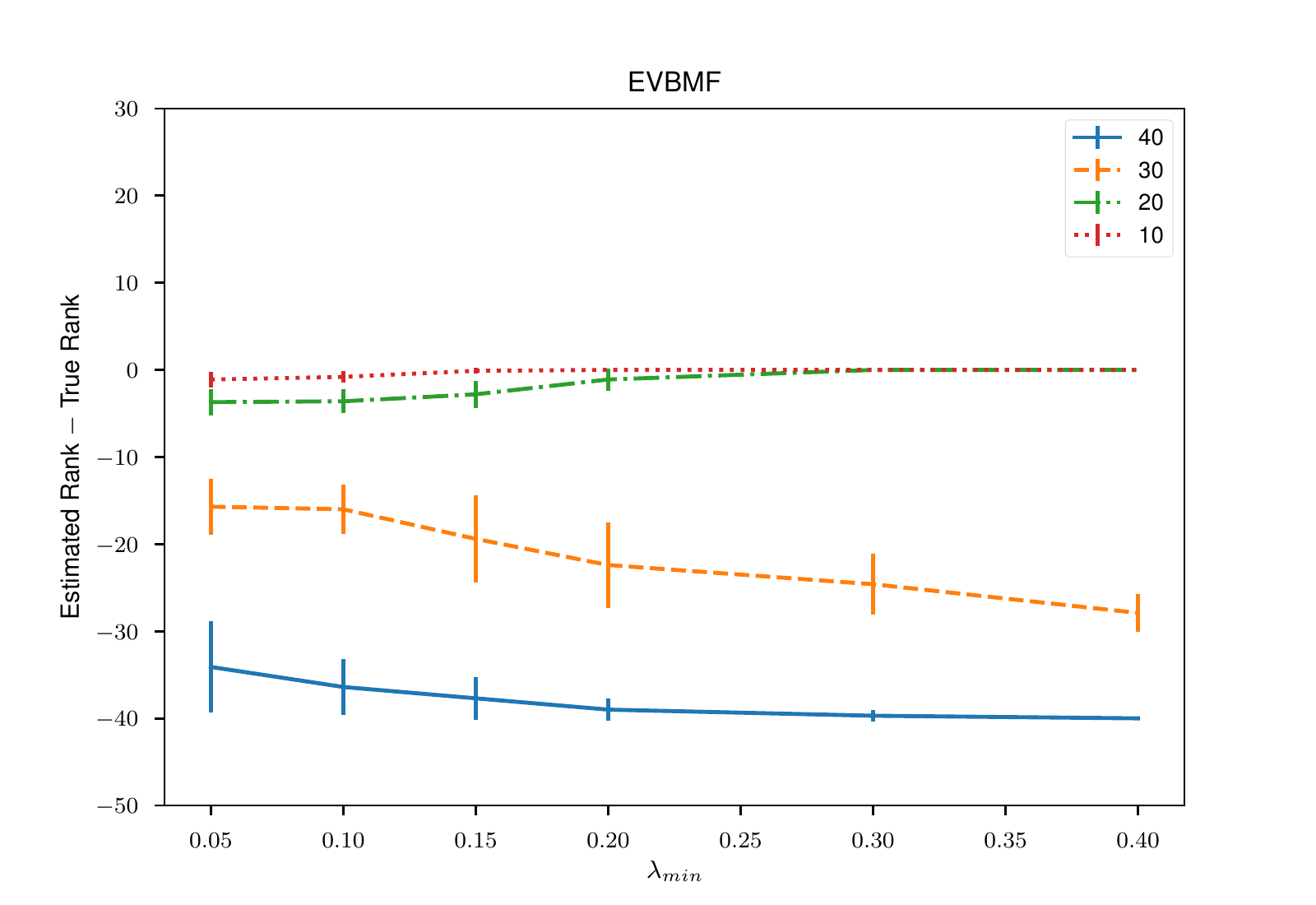}    
		\includegraphics[width=0.495\linewidth]{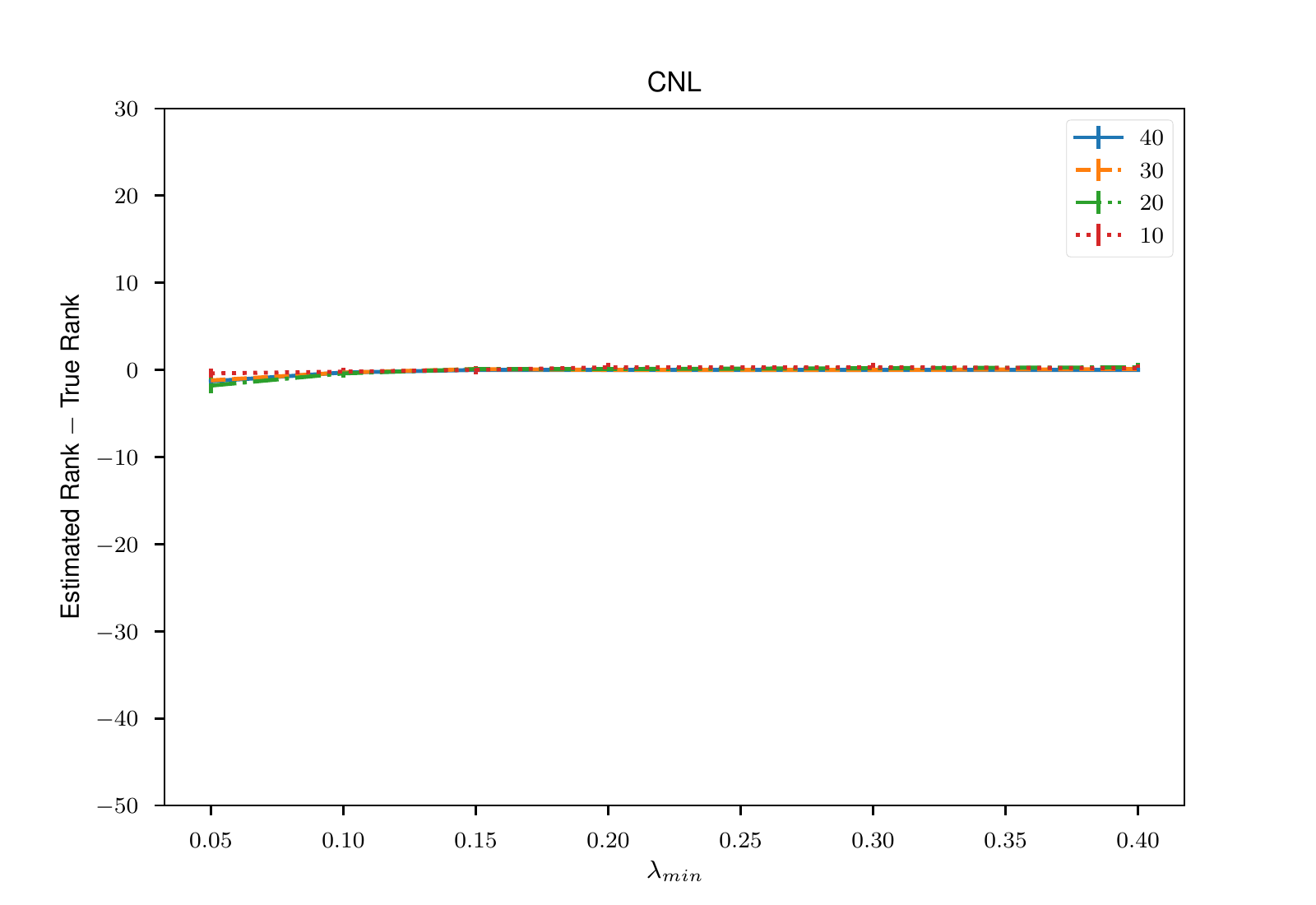}
		\includegraphics[width=0.495\linewidth]{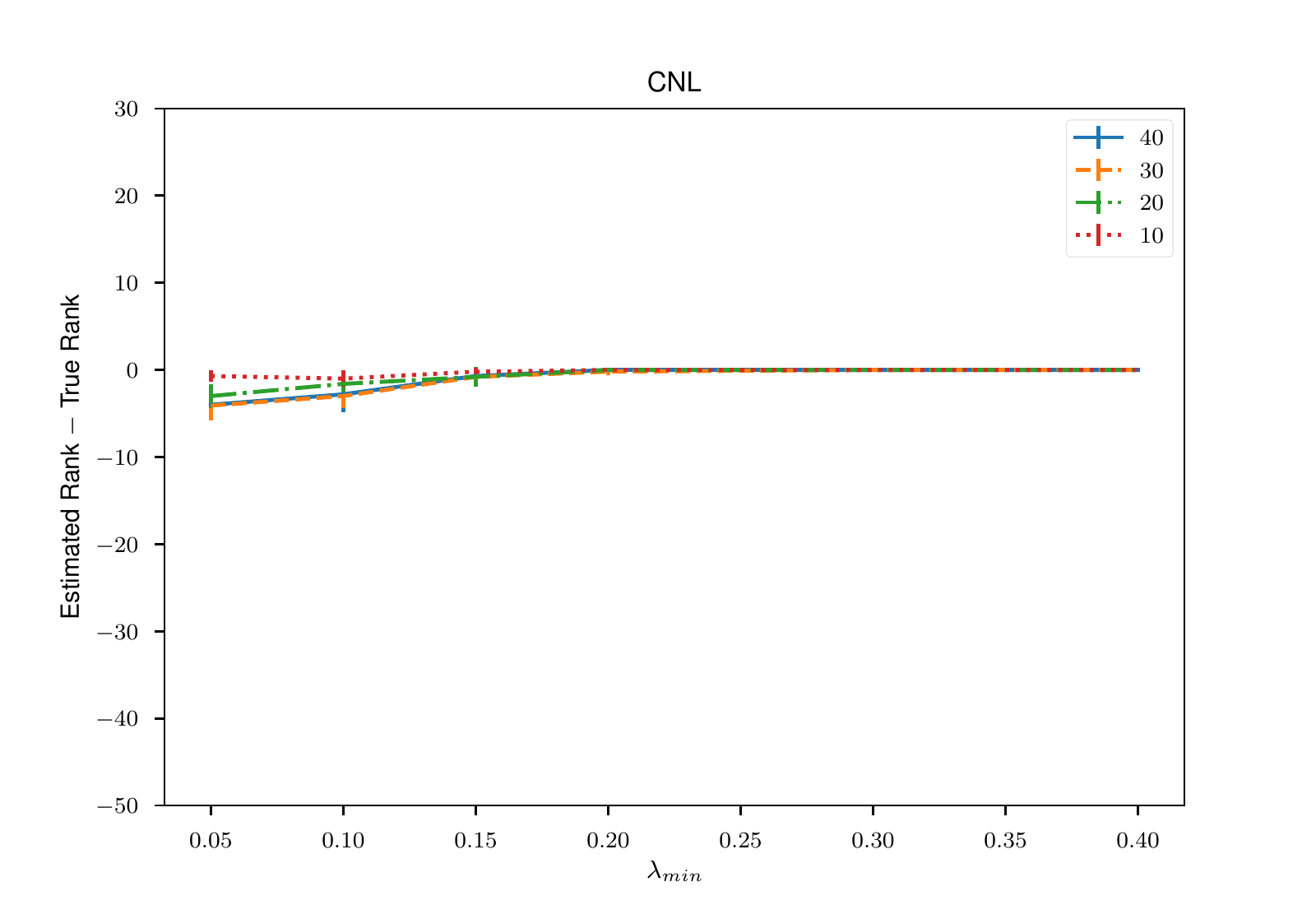}
	}
	\caption{Dimensionality recovery  by the baseline (upper figures), by EVBMF (center ones), and CNL minimization (lower ones). We set $(p,d)=(50,50)$ (resp.\,$(100,50)$) for left (resp.\,right) figures. The vertical axis represents the estimated value minus $d_\mr{true}$. The horizontal axis represents minimum non-zero singular values of $A_\mr{true}$. Each figure shows the average of $10$ experiments.  Each error bar represents the sample standard deviation.}
	\label{fig:rank_recovery}    
\end{figure}

We compare our method with a baseline method which consists of the following steps.
First, fix $\delta >0$. Second compute eigenvalues $\{ \lambda_1, \dots, \lambda_d\}$ of  an observed sample matrix. Lastly, estimate the rank of the signal part as  $ \#\{ j \mid \lambda_j  > \delta \}$. 
In our experiment, we chose a same value $\delta = 0.1$ for all cases, based on the estimation results in the case $p=d=50$ and $d_\mr{true} = 40$.

We also compare our method with the dimensionality recovery by 
the empirical variational Bayesian matrix factorization (EVBMF, for short) \cite{nakajima2013global} \cite{nakajima2015condition}
whose analytic solution is given by \cite[Theorem~2]{nakajima2015condition}. 
We use this solution because it requires no tuning of hyperparameters, and it recovers the true rank asymptotically as the large scale limit under some assumptions \cite[Theorem~13, Theorem~15]{nakajima2015condition}.

Figure~\ref{fig:rank_recovery} shows the dimensionality recovery experiments.
The horizontal axis indicates $\lambda_\mr{min}$.
The vertical axis indicates the estimated rank minus the true rank $d_\mr{true}$.
We observed that the baseline method did not work for all choices of $d_\mr{true}$ under the similar setting of $\delta$.
However, our CNL based method recovered the true rank for all choices of $d_\mr{true}$ and $\lambda_\mr{min} > 0.15$ with the same $\xi$.
Lastly, the EVBMF recovered it if $d_\mr{true}$ was low.
We conclude that our method estimates the true rank well under a suitable setting of $\xi$.

\sskip
\bpara{Validation Loss}
Figure~\ref{fig:val-rank-recovery} shows the validation loss curves under $\gamma=10,40$, and $\lambda_\mr{min} = 0.1, 0.2,0.3$ by the optimization via CNL.
It simultaneously recovered true rank and decreased validation loss for larger $\lambda_\mr{min}$.
For smaller $\lambda_\mr{min}$, it estimated smaller rank than $d_\mr{true}$ and did not continue to decrease the validation loss.

\begin{figure}[htbp]
	\ifthenelse{\boolean{DVIPDF}}{}{
		\centering
		\includegraphics[width=0.495\linewidth]{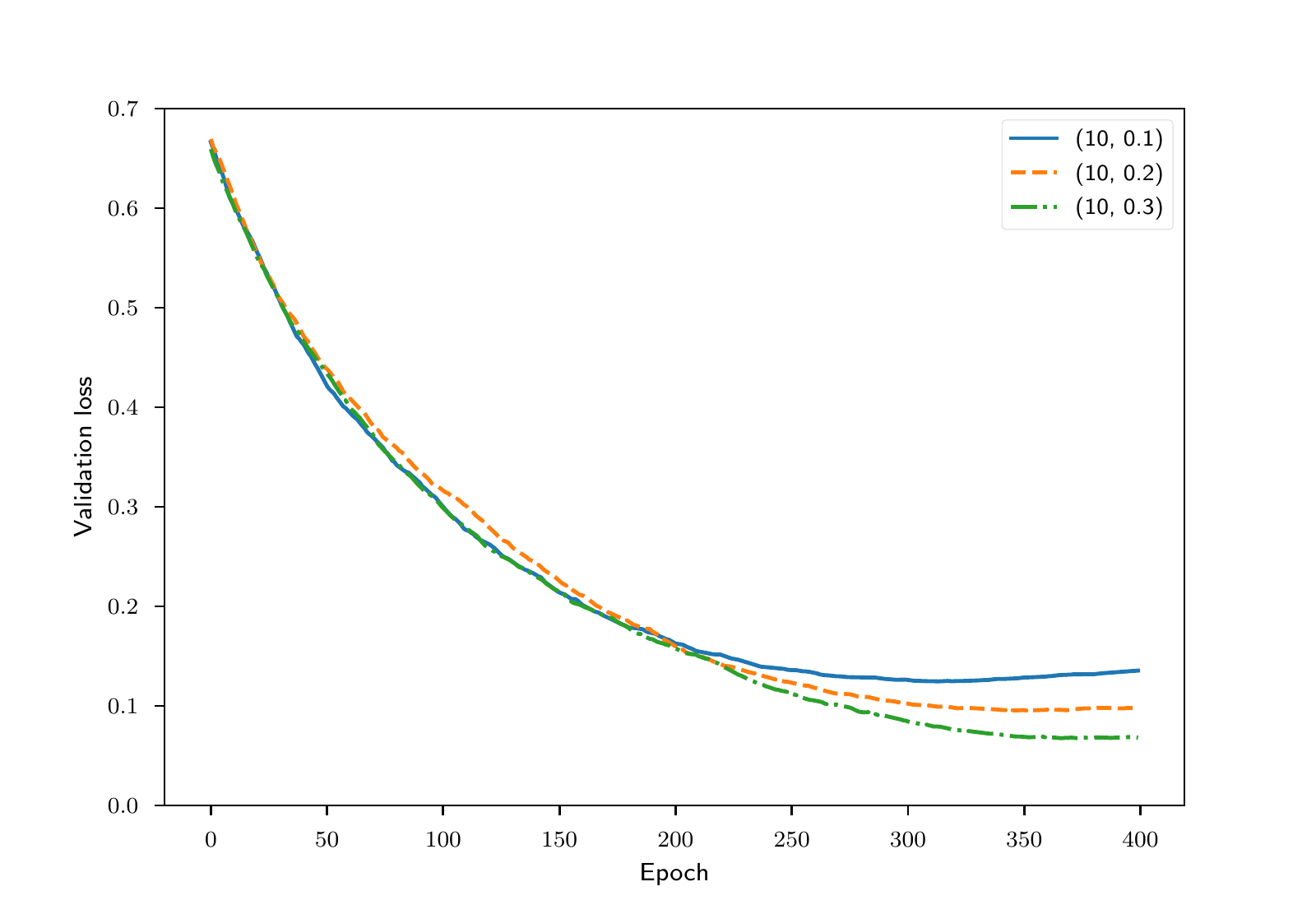}
		\includegraphics[width=0.495\linewidth]{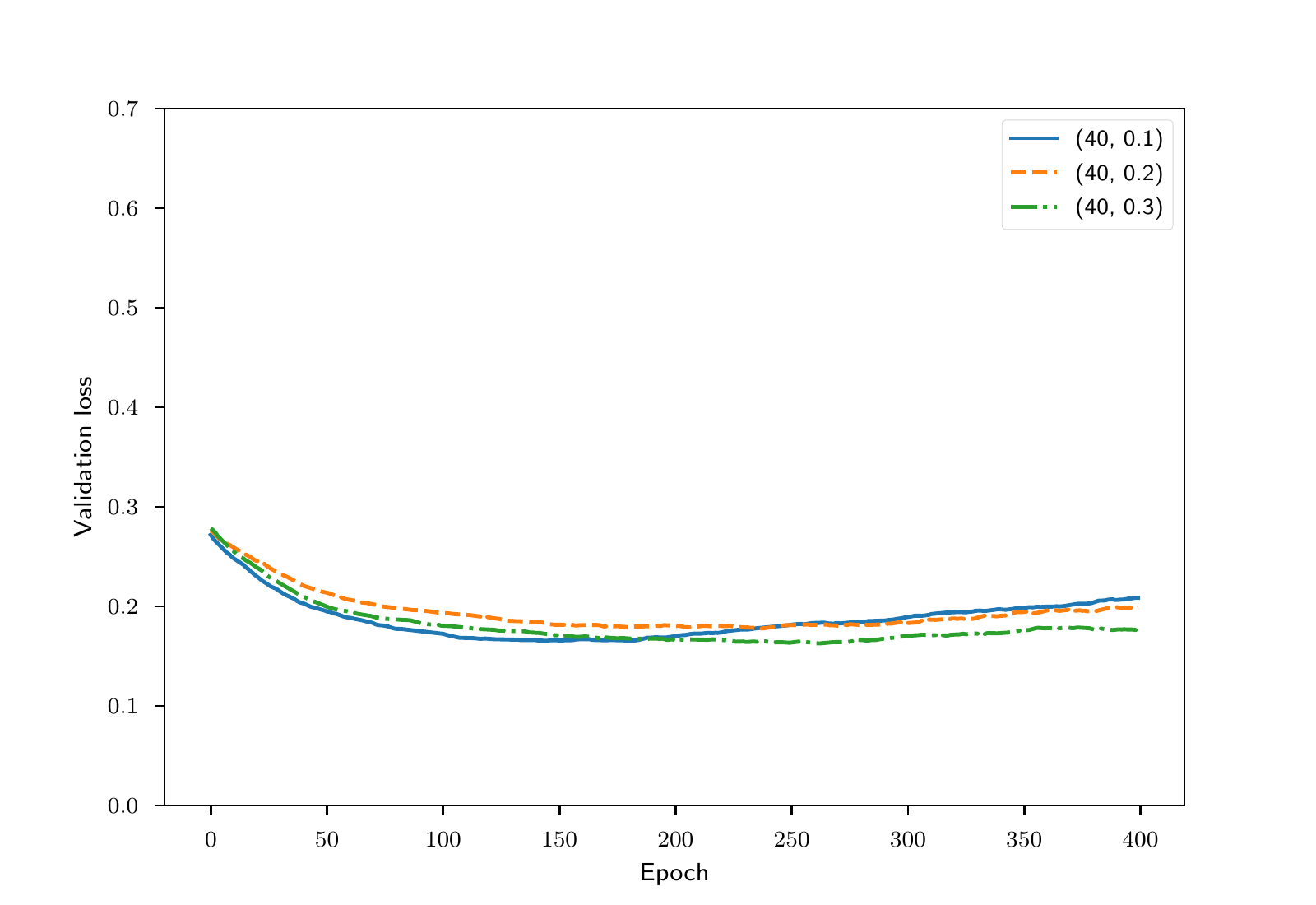}
	}
	\caption{Validations loss curves  with $d_\mr{true}=10$ (left) and that $d_\mr{true}=40$ (right). For each pair, the curves are average of $10$-experiments.}    
	\label{fig:val-rank-recovery}
\end{figure}

\bpara{Robustness to the Change of $\xi$}
The regularization coefficient $\xi$ affects the rank estimation in the same way as that in Lasso \cite{tibshirani1996regression}. 
Figure~\ref{fig:robustness} shows that the dimensionality recovery for the different choices of $\xi$.
We set $\xi_0 = 10^{-3}$ and $\xi = \xi_0/ 2,$ $ \xi_0,$  $2\xi_0 $.
To see  the effect of changing $\xi$, in this experiment we use a consitent threshold and the number of iterations $N=400d$, then count $\# \{ j \mid a_j > \xi_0 \}$ and use it to estimate $d_\mr{true}$.

\begin{figure}[htbp]
	\ifthenelse{\boolean{DVIPDF}}{}{
		\centering
		\includegraphics[width=0.495\linewidth]{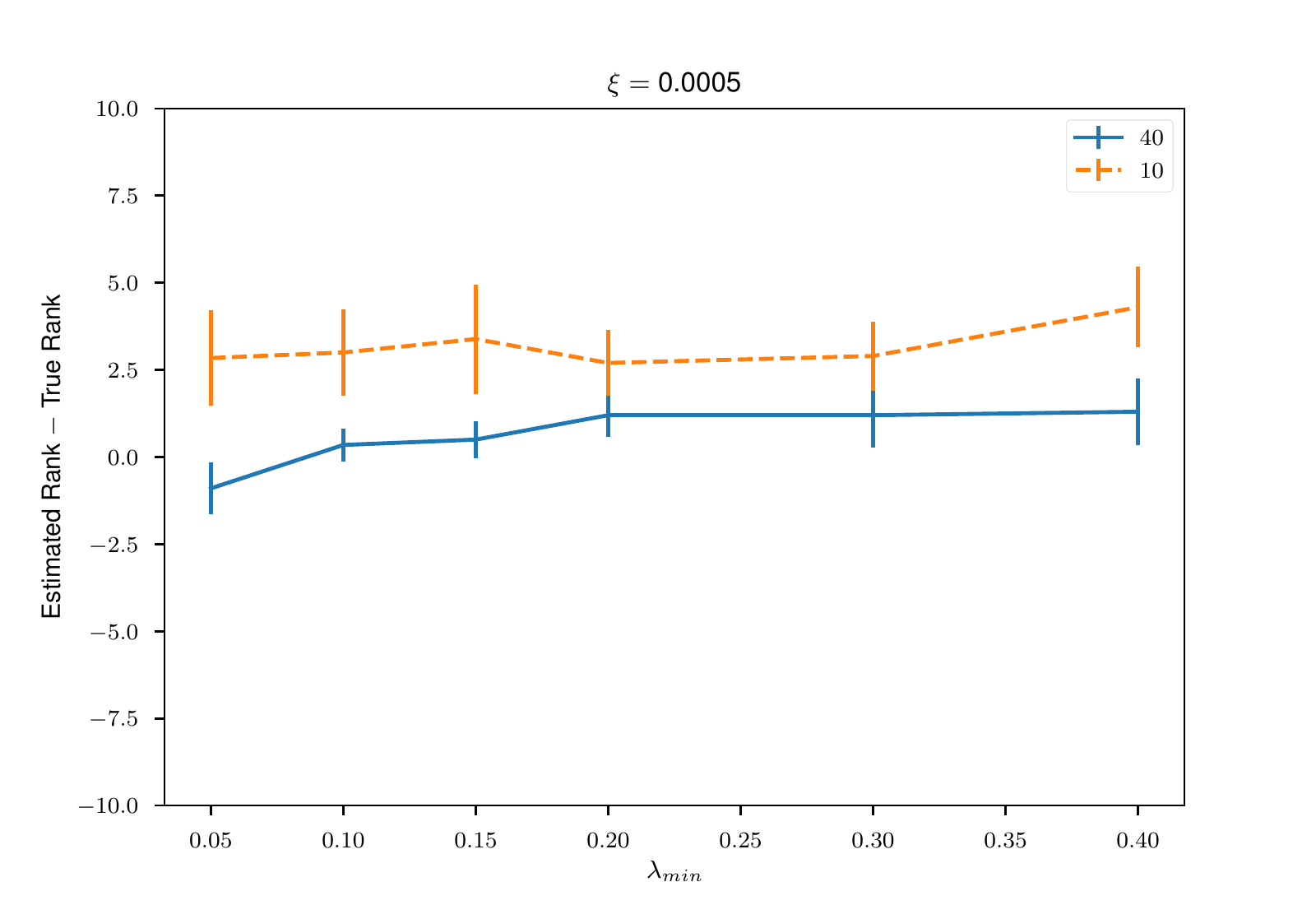}
		\includegraphics[width=0.495\linewidth]{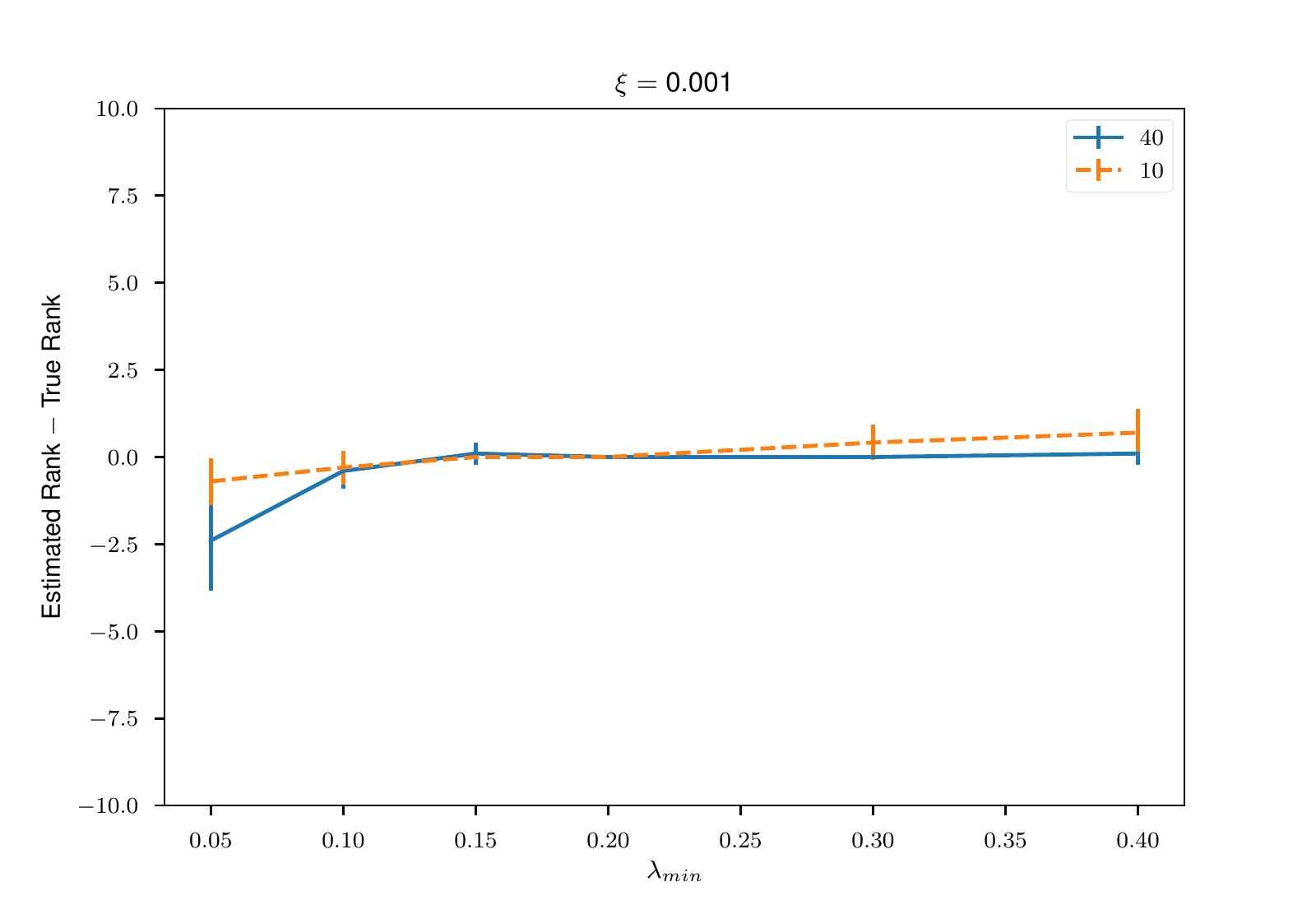}
		
		\includegraphics[width=0.495\linewidth]{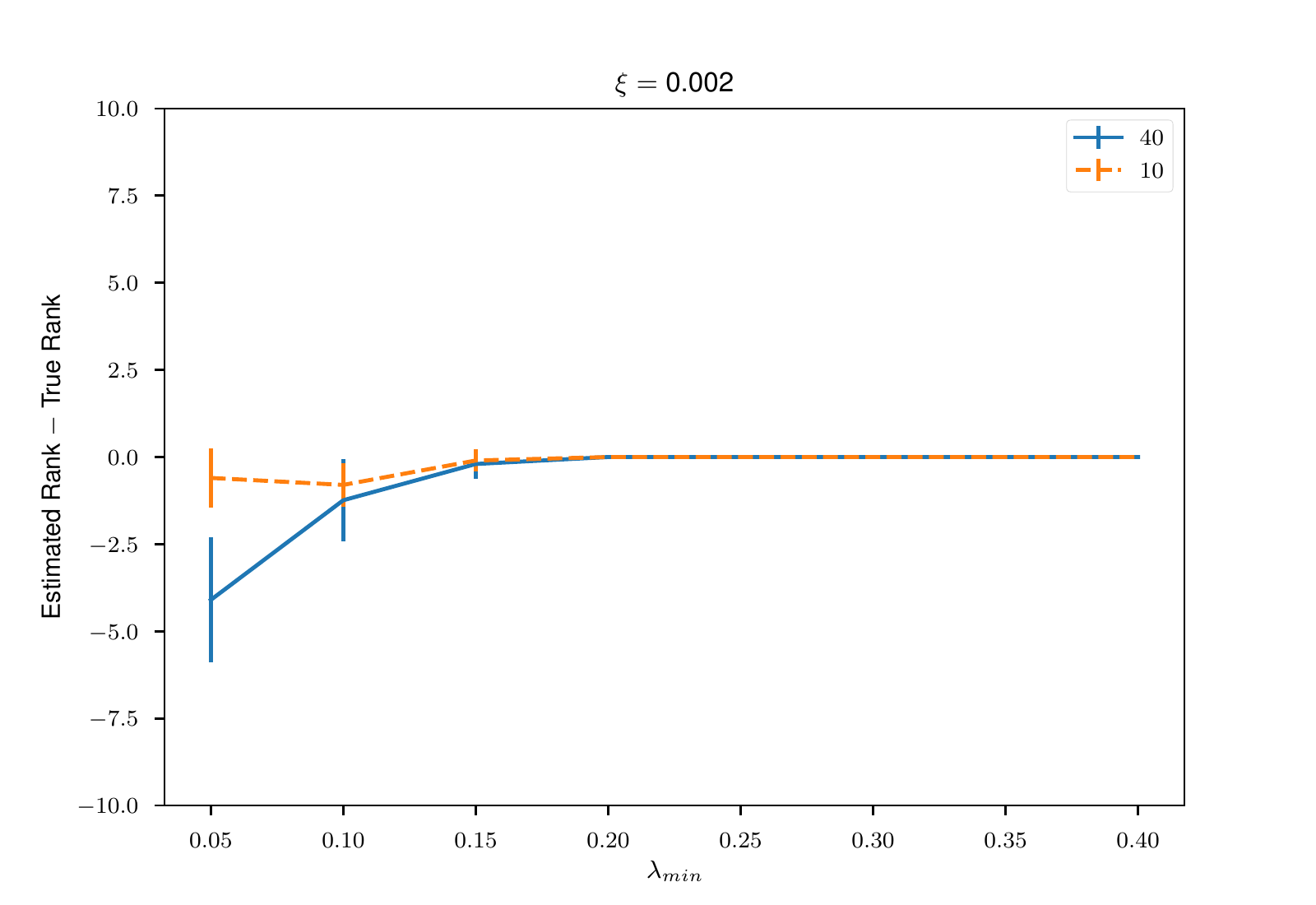}
	}
	\caption{Dimensionality Recovery by CNL under $\xi=5 \times 10^{-4}$ (upper left), $10^{-3}$ (upper right), and $2 \times10^{-3}$ (lower center). We set $d,p = 50$, $N=400d$, and $\gamma=10^{-1}$.
		Each error bar represents the sample standard deviation of $10$ experiments. Note that the range of vertical axis is changed from Figure~\ref{fig:rank_recovery}.}
	\label{fig:robustness}
\end{figure}

We observed that the small $\xi$ needed larger $N$ to recover the true rank, and a large $\xi$ gave unstable estimations for a small $\lambda_\mr{min}$.

Then further research is also needed to determine whether the $L^1$ regularization term is necessary or not to the dimensionality recovery.
More broadly, introducing cross-validation or a Bayesian framework of the Cauchy noise loss is also in the scope of future work.

\section{Conclusion}
\noindent This paper has introduced a new common framework of parameter estimation of random matrix models.
The framework is a combination of the Cauchy noise loss,   R-transform and the subordination, and online gradient descent.

Besides, we prove the determination gap converges uniformly to $0$ on each bounded parameter space. 
A vital point of the proof is that the integrand of the Cauchy cross-entropy has a bounded derivative.
Based on the theoretical observation of the Cauchy cross-entropy, we introduce an optimization algorithm of random matrix models. 
In experiments, it turned out that too small scale parameter $\gamma$ is deprecated.
Moreover, in the application to the dimensionality recovery,  our method surprisingly recovered the true rank even if the true rank was not low. 
However, it requires the setting of the weight of the $L^1$ regularization term.

This research has thrown up many questions in need of further investigation.
First, we need to find an ideal way to choose the scale parameter $\gamma$.  
A possible approach is to evaluate the variance of the determination gap.
Second, we need to prove the stability properties of Algorithm~\ref{alg:BOGD-FDE} because our loss function is non-convex.
Lastly, further research is also needed to determine whether the $L^1$ regularization term is necessary or not the dimensionality recovery.
More broadly, introducing cross-validation or an empirical Bayesian framework of Cauchy noise loss is also in the scope of future work.
\section*{Acknowledgement}
\noindent The author is greatly indebted to Roland Speicher for valuable discussion and several helpful comments about free probability theory. 
The author wishes to express his thanks to Genki Hosono for proving a lemma about complex analysis.
Kohei Chiba gives insightful comments and suggestions about probability theory.
The author also wishes to express his gratitude to Hiroaki Yoshida and Noriyoshi Sakuma for fruitful discussion.
The author is grateful for the travel support of Roland Speicher. 
Finally, the author gratefully appreciates the financial support of Benoit Collins that made it possible to complete this paper; our main idea was found during the workshop \enquote{Analysis in Quantum Information Theory}.

\bibliographystyle{abbrv}
\bibliography{reference}


\end{document}